\def\eqref#1{equation~\ref{#1}}
\def\1{\bm{1}}
\DeclareMathAlphabet{\mathsfit}{\encodingdefault}{\sfdefault}{m}{sl}
\SetMathAlphabet{\mathsfit}{bold}{\encodingdefault}{\sfdefault}{bx}{n}
\DeclareMathOperator*{\argmax}{arg\,max}
\newtheorem{theorem}{Theorem}     
\newtheorem{assumption}{Assumption} 
\newtheorem{lemma}{Lemma}
\newtheorem{proposition}{Proposition}
\newtheorem{definition}{Definition}
\newtheorem{remark}{Remark} 
\definecolor{myblue}{HTML}{3A86FF}
\title{Steerable Adversarial Scenario Generation through Test-Time Preference Alignment}
\author{Tong Nie$^{1,2}$\footnotemark[1]~, 
Yuewen Mei$^{2}$\thanks{Equal contributions.}~, 
Yihong Tang$^{3}$, 
Junlin He$^{1}$,
\textbf{Jie Sun}$^{2}$,  
\textbf{Haotian Shi}$^{2}$, \\
\textbf{Wei Ma}$^{1}$\thanks{Corresponding authors.}~, 
\textbf{Jian Sun}$^{2}$\footnotemark[2]\\ 
$^{1}$The Hong Kong Polytechnic University, Hong Kong SAR, China \\
$^{2}$Tongji University, Shanghai, China \\
$^{3}$McGill University, Montreal, QC, Canada}
\begin{document}

\maketitle

\begin{abstract}
Adversarial scenario generation is a cost-effective approach for safety assessment of autonomous driving systems. 
However, existing methods are often constrained to a single, fixed trade-off between competing objectives such as adversariality and realism. This yields behavior-specific models that cannot be steered at inference time, lacking the efficiency and flexibility to generate tailored scenarios for diverse training and testing requirements.
In view of this, we reframe the task of adversarial scenario generation as a multi-objective preference alignment problem and introduce a new framework named \textbf{S}teerable \textbf{A}dversarial scenario \textbf{GE}nerator (SAGE). SAGE enables fine-grained test-time control over the trade-off between adversariality and realism without any retraining. We first propose hierarchical group-based preference optimization, a data-efficient offline alignment method that learns to balance competing objectives by decoupling hard feasibility constraints from soft preferences. Instead of training a fixed model, SAGE fine-tunes two experts on opposing preferences and constructs a continuous spectrum of policies at inference time by linearly interpolating their weights. We provide theoretical justification for this framework through the lens of linear mode connectivity. Extensive experiments demonstrate that SAGE not only generates scenarios with a superior balance of adversariality and realism but also enables more effective closed-loop training of driving policies. \textbf{Project page}: \url{https://tongnie.github.io/SAGE/}.
\end{abstract}

\vspace{-10pt}
\section{Introduction}
\label{sec:introduction}

\vspace{-10pt}
\begin{figure}[!htbp]
  \begin{center}
    \includegraphics[width=1.0\textwidth]{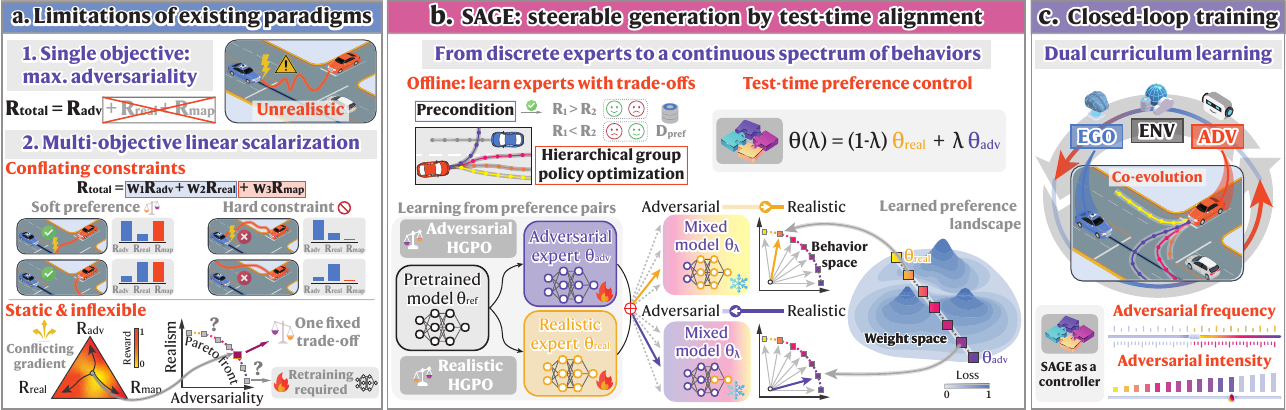}
  \end{center}
  \vspace{-10pt}
  \caption{Limitation of existing adversarial generation methods, our solution, and its application.}
  \label{Fig_intro}
  \vspace{-10pt}
\end{figure}

Safety assessment of autonomous driving (AD) systems is a prerequisite for their large-scale deployment. A cornerstone of verifying and enhancing their capabilities is simulated testing, particularly through scenario-based methods \citep{xu2022safebench,ding2023survey}. To improve testing efficiency and alleviate the issue of ``curse of rarity" \citep{liu2024curse}, adversarial scenario generation has emerged as a cost-effective technique. 
Built on this method, practitioners can move beyond simple testing and enable closed-loop adversarial training, where an AD policy is continuously improved by the very challenges it fails \citep{ma2018improved,wachi2019failure}. This is achieved by systematically creating challenging, long-tailed corner cases that efficiently probe the system's operational boundaries, identify its blind spots, and thereby accelerate the development of more robust driving policies.


Instead of creating scenarios from scratch, many modern generation paradigms operate by perturbing a learned naturalistic driving prior and converting it into an adversarial posterior. Viewed through this lens, whether the adversarial model is built using Reinforcement Learning (RL) \citep{feng2023dense, ransiek2024goose}, guided diffusion \citep{xie2024advdiffuser,chang2024safe,xu2025diffscene}, or direct optimization \citep{wang2021advsim,hanselmann2022king,zhang2023cat}, the task can be framed as an \textit{objective-oriented adversarial optimization problem}. The core challenge lies in controllably navigating the inherent trade-off between adversariality (capacity to induce a failure) and realism (physical and behavioral plausibility). This controllability is crucial for serving two distinct purposes: targeted stress testing may require highly extreme scenarios, while data augmentation for closed-loop training demands challenging yet realistic behaviors to ensure effective improvement. Fulfilling both needs requires a generation framework that is exceptionally flexible and steerable.

However, existing paradigms that directly solve the adversarial optimization problem fall short in providing this necessary level of controllability, especially at test time (Fig.~\ref{Fig_intro}(a)). 
On the one hand, naively optimizing for a single objective, such as maximizing adversariality, often leads to a collapse in realism. For example, methods based on aggressive search for collisions can generate physically implausible trajectories where a vehicle spins in place or executes a kinematically impossible turn to intercept the ego agent \citep{zhang2022adversarial,zhang2023cat}. While adversarial, such scenarios offer limited value as they do not reflect behaviors that could occur in reality. 
On the other hand, while framing it as a multi-objective problem is a natural solution, the prevailing approach linearly scalarizes multiple loss terms \citep{chang2024safe,xu2025diffscene}, which reduces the problem to managing a set of delicate weighting hyperparameters. 
This is highly heuristic and prone to instability, particularly when objectives induce conflicting gradients and involve a mixture of soft and hard constraints \citep{deb2016multi}. 
More critically, even a well-tuned model remains tied to a single fixed preference in the parameter space, mapping to a single point in the behavior space of possible trade-offs. 
This limits their test-time steerability, as adapting to a new preference usually requires expensive retraining.

In this paper, we propose the first paradigm that reframes the solution of the adversarial optimization problem through the lens of preference alignment (Fig.~\ref{Fig_intro}(b)). Drawing inspiration from multi-objective alignment \citep{wortsman2022model,rame2023rewarded}, we tackle intricate relationships between competing objectives on the Pareto front directly from preference data, balancing \textit{adversariality}, \textit{realism}, and \textit{feasibility}, analogous to the ``3H'' (helpful, honest, and harmless) principles in Large Language Models (LLMs) \citep{bai2022constitutional}. 
Crucially, our method enables \textit{test-time control} in both weight and behavior spaces. Instead of producing a single model for a fixed trade-off, we derive a mixture-of-preferences model that generates a continuous spectrum of policies. With this posteriori selection, a user can adjust preferences at inference to generate better-informed trade-offs on the Pareto front tailored to specific needs without retraining. 
This framework is positioned as generalizable, applicable to various methods that employ adversarial optimization, as well as to different driving policies under test. 
In essence, we shift the paradigm from manually designing weighted objectives to learning a controllable preference landscape.
Our contributions are fourfold:
\begin{itemize}[leftmargin=*, itemsep=0pt, topsep=0pt] 
\item We introduce SAGE (Steerable Adversarial scenario GEnerator), the first policy- and backbone-agnostic paradigm that reframes the task as a multi-objective preference alignment problem.
\item We propose hierarchical group-based preference optimization, a sample-efficient offline alignment fine-tuning technique that decouples hard feasibility constraints from soft preference trade-offs.
\item SAGE uniquely provides a principled way for test-time preference control, enabling users to navigate the entire Pareto front to generate tailored scenarios without retraining.
Furthermore, we theoretically and empirically ground this paradigm in the hypothesis of linear mode connectivity.
\item Comprehensive experiments demonstrate that SAGE establishes a superior trade-off, delivering competitive adversariality alongside the highest realism, and thereby enabling more effective closed-loop training of driving policies (Fig.~\ref{Fig_intro}(c)) than state-of-the-art methods.
\end{itemize}

\vspace{-10pt}
\section{Related Work}
\label{sec:related_work}


\vspace{-8pt}
\paragraph{Adversarial Scenario Generation.}
This problem has been explored through several distinct paradigms. RL methods are well-suited for closed-loop training but suffer from limited transferability and a tendency toward unrealistic reward hacking without meticulous engineering design \citep{kuutti2020training,feng2023dense,ransiek2024goose,qiu2025aed}. Conversely, diffusion-based models excel in generation with controllability \citep{zhong2023language, xie2024advdiffuser, xu2025diffscene}, but their high computational expense currently renders them impractical for the on-the-fly generation required in closed-loop settings. Optimization and sampling-based methods \citep{wang2021advsim, hanselmann2022king,zhang2022adversarial,zhang2023cat} offer a computationally efficient and transferable alternative suitable for closed-loop training. However, they often struggle to balance adversariality with realism, frequently producing physically implausible scenarios without principled control. 

\vspace{-10pt}
\paragraph{Preference Alignment.}
Our SAGE is inspired by recent advances in steering LLMs. This field aims to align model behavior with complex and often conflicting human values, exemplified by the ``3H'' principles \citep{bai2022constitutional}. Representative methods such as Reinforcement Learning from Human Feedback (RLHF) \citep{ouyang2022training} and Direct Preference Optimization (DPO) \citep{rafailov2023direct} offer powerful tools for aligning models into an aggregated preference signal. However, a key limitation is that they only produce a single model for a fixed trade-off. To overcome this, recent multi-objective alignment research has shifted towards enabling \textit{test-time control} \citep{rame2023rewarded,shi2024decoding, xie2025bone}. They typically train a set of experts, each specializing in a different objective, and then interpolate parameters or predictions at test time to dynamically navigate the Pareto front of trade-offs without retraining. While preference-based methods are emerging in scenario generation \citep{cao2024reinforcement, qiu2025aed, yu2025direct, chen2025rift}, they primarily focus on aligning with a single objective, such as realism or safety, by using standard DPO or RLHF during training. Our work is the first to ground the multi-objective test-time alignment paradigm in adversarial scenario generation. By learning the trade-offs directly from preference data, we provide a principled framework for steerable generation without retraining.

\vspace{-5pt}
\section{SAGE: Steerable Adversarial Scenario Generator}
\label{sec:method}

\vspace{-5pt}
\subsection{Problem Formulation}
\label{ssec:problem_formulation}

\vspace{-5pt}
The goal of adversarial scenario generation is to discover or synthesize situations that reveal the vulnerabilities of a given driving policy $\pi_\text{ego}$ \citep{ding2023survey}. Instead of creating scenarios from scratch, a common and effective paradigm is to perturb existing, typically naturalistic scenarios $\mathcal{S}_\text{log}$ logged from the real world. This approach, which we refer to as \textit{adversarial optimization} \citep{zhang2022adversarial,xu2025diffscene}, can be abstracted into a general multi-objective optimization problem.

\begin{definition}[Perturbation-based Adversarial Optimization]
Given a naturalistic scenario $\mathcal{S}_\text{log}$ containing an initial trajectory $\tau_\text{log}$ for an opponent agent, the goal is to find an optimal perturbation $\delta$ that, when applied to $\tau_\text{log}$, maximizes the objective $J_{\text{obj}}$ while adhering to a set of constraints $\mathcal{T}_\text{valid}$:
\begin{equation}
\label{eq:adv_opt}
    \delta^* = \argmax_{\delta} J_{\text{obj}}(\mathcal{S}_\text{log}, \delta; \pi_{\text{ego}}) \quad \text{s.t.} \quad \tau_{\text{log}} + \delta \in \mathcal{T}_\text{valid},
\end{equation}
where $J_{\text{obj}}$ typically involves multi-objective trade-offs, and $\mathcal{T}_\text{valid}$ enforces validity constraints.
\end{definition}

Eq.~\ref{eq:adv_opt} provides a unifying lens through which various existing methods can be viewed. Trajectory optimization \citep{hanselmann2022king,zhang2022adversarial} solves Eq.~\ref{eq:adv_opt} directly using gradient-based techniques. Sampling-based methods \citep{zhang2023cat} approximate the solution by sampling from re-weighting candidates based on $J_{\text{obj}}$. Diffusion-based generators \citep{xie2024advdiffuser,xu2025diffscene} use $J_{\text{obj}}$ as a guidance signal to steer the reverse denoising process toward adversariality.
The trade-off between competing objectives gives rise to a \textit{Pareto front}: a set of optimal solutions where no single objective can be improved without degrading another.
However, directly solving Eq.~\ref{eq:adv_opt} for a fixed $J_{\text{obj}}$ yields only a single strategy corresponding to one point on the Pareto front. 
Our goal is to shift from optimizing for a single $\delta$ to generating a continuous spectrum of adversarial policies $\{\pi_{\theta,i}\}_{i}$ that can trace the entire Pareto front at test time. This paradigm is generalizable to various methods grounded in the adversarial optimization problem, enabling steerable scenario generation.

\vspace{-10pt}
\subsection{Preference-aligned Adversarial Trajectory Optimization}
\label{ssec:preference_alignment}

\vspace{-5pt}
Existing adversarial generation methods are often optimized for aggressiveness without sufficient constraints, leading to physically unrealistic or infeasible trajectories. Therefore, we fine-tune a pretrained motion generation model to generate trajectories that are simultaneously \textit{adversarial}, \textit{realistic}, and \textit{map-compliant}. We frame this as a preference alignment problem, developing a new hierarchical preference optimization method that strategically handles the multi-objective nature.

\paragraph{Multi-Objective Formulation.}
Let $\pi_{\text{ref}}$ be a pretrained, probabilistic motion generation model that maps a scenario context $c$ to a distribution over future trajectories for a specific agent. The context $c$ includes the road map geometry $\mathcal{M}$ and the historical states of all agents. Our objective is to learn a new policy $\pi_{\theta}$, fine-tuned from $\pi_{\text{ref}}$, that generates trajectories $\tau \sim \pi_{\theta}(\cdot|c)$ biased towards our desired attributes.
The quality of a generated trajectory $\tau$ is multifaceted. A naive approach would be to define a scalar preference score function $R_\text{total}(\cdot)$ that holistically evaluates a trajectory:
\begin{equation}
\label{eq:naive_reward}
R_{\text{total}}(\tau; c) = w_{\text{adv}} R_{\text{adv}}(\tau, \tau_{\text{ego}}; c) - w_{\text{real}} P_{\text{real}}(\tau) - w_{\text{map}} P_{\text{map}}(\tau, \mathcal{M}),
\end{equation}
where $R_{\text{adv}}$ measures the adversarial potency against the future trajectory of the ego vehicle $\tau_{\text{ego}}$; It should maximize the probability of causing a failure or a near-miss event to expose vulnerabilities of $\pi_\text{ego}$. $P_{\text{real}}$ ensures $\tau$ to adhere to the patterns of naturalistic driving; It should be statistically plausible and avoid behaviors that are physically possible but extremely unlikely, which could lead to an overly conservative ego policy. $P_{\text{map}}$ penalizes violations of map constraints (e.g., crossing solid lines or driving off-road). The weights $w_{(\cdot)}$ balance these competing objectives. A trajectory $\tau^w$ is preferred over $\tau^l$, denoted as $\tau^w \succ \tau^l$, if and only if $R_\text{total}(\tau^w) > R_\text{total}(\tau^l)$.


However, this linear scalarization formulation presents a structural issue: it conflates \textit{soft preference objectives} (trade-off between adversariality and realism) with \textit{hard feasibility constraints} (map compliance). 
A trajectory that passes through a building is totally invalid, not just less preferable. The binary nature is not well captured by a continuous penalty. 
This complicates the optimization landscape, which can result in unreliable outcomes where the model only partially satisfies constraints.

\vspace{-10pt}
\paragraph{Hierarchical Group-based Preference Optimization.}
To address this issue, we propose Hierarchical Group-based Preference Optimization (HGPO), a new framework that decouples hard constraints from soft preferences and improves the data efficiency of the alignment process.
First, we treat map compliance not as a component of the reward, but as a precondition. We define a binary feasibility function $F(\tau, \mathcal{M}) \in \{0, 1\}$, which returns $1$ if the trajectory $\tau$ is fully compliant with $\mathcal{M}$ and $0$ otherwise. The soft preference is then judged by a separate preference reward function $R_{\text{pref}}$:
\begin{equation}
R_{\text{pref}}(\tau; c) = w_{\text{adv}} R_{\text{adv}}(\tau, \tau_{\text{ego}}; c) - w_{\text{real}} P_{\text{real}}(\tau).
\end{equation}

Second, a direct way would be to sample many feasible trajectories, identify the single best ($\tau^w$) and worst ($\tau^l$) according to $R_{\text{pref}}(\cdot)$, and apply the standard DPO loss. However, this approach is data-inefficient as it discards fine-grained preference information contained within the rest of the samples.
Inspired by the efficiency of group-wise learning \citep{shao2024deepseekmath}, but adapted for an offline setting, we move beyond the single winner-loser pair. For each context $c$, we sample a group of $N$ candidates $\mathcal{G}_c = \{\tau_i\}_{i=1}^N \sim \pi_\theta(\cdot|c)$. We then partition $\mathcal{G}_c$ into a feasible set $\mathcal{G}_c^{\text{feas}} = \{\tau \in \mathcal{G}_c \mid F=1\}$ and an infeasible set $\mathcal{G}_c^{\text{infeas}} = \{\tau \in \mathcal{G}_c \mid F=0\}$.
From these sets, we construct a dataset of preference pairs $\mathcal{D}_c^{\text{pref}} = \{(\tau^w, \tau^l)\}$ based on the following two principles, which \textit{hierarchically} inject the desired inductive bias:
(1) \textbf{Feasibility First}: Any feasible trajectory is strictly preferred over any infeasible one. We generate pairs $(\tau^w, \tau^l)$ where $\tau^w \in \mathcal{G}_c^{\text{feas}}$ and $\tau^l \in \mathcal{G}_c^{\text{infeas}}$.
(2) \textbf{Preference within Feasibility}: Among feasible trajectories, we prefer those with higher preference reward. We generate pairs $(\tau^w, \tau^l)$ where both are in $\mathcal{G}_c^{\text{feas}}$, but satisfy $R_{\text{pref}}(\tau^w; c) > R_{\text{pref}}(\tau^l; c) + \delta_m$. The margin $\delta_m$ prevents learning from noisy preferences where the reward difference is negligible.

\vspace{-10pt}
\paragraph{Fine-tuning Objective.}
Hierarchical principles enable the model to learn from a wider distribution of suboptimal and near-optimal examples.
The preference fine-tuning process optimizes the policy $\pi_{\theta}$ to explain the preference data better while being regularized by the reference policy $\pi_{\text{ref}}$. 
Our proposed HGPO extends this to maximize the likelihood of the entire set of group-sampled pairs:
\begin{equation}
\label{eq:gpo_adv_objective}
\mathcal{L}_\text{HGPO}(\theta) = \mathbb{E}_{\substack{c \sim \mathcal{D} \\ \mathcal{G}_c \sim \pi_\theta(\cdot|c) \\ (\tau^w, \tau^l) \sim \mathcal{D}_c^{\text{pref}}}} \left[ -\log \sigma\left(\beta \left( \log \frac{\pi_\theta(\tau^w|c)}{\pi_\text{ref}(\tau^w|c)} - \log \frac{\pi_\theta(\tau^l|c)}{\pi_\text{ref}(\tau^l|c)} \right) \right) \right],
\end{equation}
where $\beta$ controls the strength of the alignment.
This formulation offers several advantages. First, by decoupling hard constraints, it simplifies the optimization landscape and directly enforces absolute feasibility. Second, by using group-wise sampling, it significantly improves data efficiency and learns a more robust model from a richer set of comparisons. Last, the entire process remains offline, ensuring stable and efficient fine-tuning without the complexities of online reward modeling.

\vspace{-5pt}
\subsection{Test-time Steerable Generation by Mixture of Preferences}
\label{ssec:test_time_control}

\vspace{-5pt}
While HGPO provides an efficient way to manage hard constraints, it still requires fine-tuning a policy for a fixed preference trade-off. This static approach has a notable limitation: a model fine-tuned on a fixed set of linearly scalarized rewards produces solutions corresponding to only \textit{a single point on the Pareto front} of competing objectives. Adapting to different testing requirements needs costly retraining. Drawing inspiration from LLM alignment \citep{rame2023rewarded}, we introduce a test-time alignment method that enables continuous control over the properties of generated trajectories by creating a mixed preference model, allowing users to navigate the entire Pareto front at inference.

\vspace{-5pt}
\paragraph{Training Expert Preference Models.}
Instead of training a single policy, we first train a set of expert models, each specialized towards a different trade-off between core goals. In our primary case with two objectives (i.e., adversariality and realism), we train two expert policies, $\pi_{\theta_{\text{adv}}}$ and $\pi_{\theta_{\text{real}}}$. Crucially, these are not trained on single objectives in isolation. Instead, they are fine-tuned from the same pretrained model $\pi_{\text{ref}}$ using HGPO described in Section~\ref{ssec:preference_alignment}, but with opposing rewards:
\begin{equation}\label{eq:expert model}
    \begin{aligned}
    R_{\text{adv-pref}}(\tau) &= w^* R_{\text{adv}}(\tau) - (1-w^*) P_{\text{real}}(\tau) \quad \text{for training } \pi_{\theta_{\text{adv}}},  \\
    R_{\text{real-pref}}(\tau) &= (1-w^*) R_{\text{adv}}(\tau) - w^* P_{\text{real}}(\tau) \quad \text{for training } \pi_{\theta_{\text{real}}}, \\
\end{aligned}
\end{equation}
where $w^* \in (0.5, 1]$ is a fixed hyperparameter that pushes the experts towards the extremes of the preference space. 
This preconditions the models to understand the trade-off space, a key difference from methods that train on purely orthogonal rewards. These two expert models effectively anchor the endpoints of the achievable Pareto front for further preference interpolation in the entire space.

\vspace{-10pt}
\paragraph{Steerable Generation via Test-Time Weight Interpolation.}
At test time, given a user-specified preference weight $\mu \in [0, 1]$ defining the desired trade-off between adversariality and realism, we do not need to retrain. Instead, inspired by \citet{wortsman2022model,rame2023rewarded}, we directly construct a novel, preference-mixed model $\pi_{\theta(\lambda)}$ by linearly interpolating the weights of two experts:
\begin{equation}
\label{eq:weight_soup}
\theta(\lambda) = (1-\lambda)\theta_{\text{real}} + \lambda\theta_{\text{adv}},
\end{equation}
where $\lambda$ is a coefficient linked to the user preference $\mu$. The mixed model $\pi_{\theta(\lambda)}$ then generates a set of $K$ candidates $\{\tau_k\}_{k=1}^K$. The final output $\tau^*$ is selected by ranking these candidates according to the user's real-time reward $\tau^* = \argmax_{\tau_k \in \{\tau_k\}} R_{\mu}(\tau_k)$ with $R_{\mu}(\tau) = \mu R_{\text{adv}}(\tau) - (1-\mu) P_{\text{real}}(\tau)$.
By varying $\lambda \in [0,1]$, we can trace a continuous path in weight spaces, which in turn generates a continuous Pareto front of behaviors, from naturalistic to aggressive, without any retraining. This defines a set of (near) Pareto-optimal solutions $\{\pi_{\theta(\lambda)}\}_{\lambda}$, replacing costly multi-policy strategies.
Furthermore, we can extrapolate beyond the interpolation range using preference vectors to generate more extreme and out-of-distribution cases for rigorous stress testing, as detailed in Appendix~\ref{app:extrapolation}.


\vspace{-5pt}
\paragraph{SAGE in Closed-loop Adversarial Training.} 
The steerability of SAGE is particularly useful when integrated into a closed-loop adversarial training pipeline to improve an ego policy $\pi_{\text{ego}}$. This process follows a min-max structure where, at each iteration, SAGE generates a new adversary tailored to the latest ego policy's weaknesses using the adversarial policy $\pi_{\theta(\lambda)}$. The ego agent is then updated to handle these new challenges. To stabilize training and prevent the agent from overfitting to adversarial cases while failing in normal driving, we employ a dual-axis curriculum. This curriculum progressively increases the challenge by gradually annealing two parameters: (1) the \textit{intensity} of the scenarios, controlled by the interpolation weight $\lambda$, which shifts from realistic to more aggressive behaviors, and (2) the \textit{frequency} of adversarial encounters. This structured approach ensures the ego agent develops robust, generalizable skills. Further details are provided in Appendix~\ref{sec:appendix_exp_setups}.

\vspace{-5pt}
\subsection{Linear Mode Connectivity in Fine-tuned Motion Generation Models}
\label{ssec:lmc analysis}

\vspace{-5pt}
We now provide a theoretical justification for our test-time weight interpolation scheme (Eq.~\ref{eq:weight_soup}), grounding its effectiveness in the phenomenon of \textit{Linear Mode Connectivity} (LMC) \citep{frankle2020linear, ainsworth2022git}. 
LMC posits that when models are fine-tuned from the same pretrained initialization on related tasks, the resulting solutions often lie within a single, wide, low-loss basin and can be connected by a linear path in the parameter space along which performance remains high.
SAGE leverages this property, hypothesizing that this path effectively traces the Pareto front of the underlying multi-objective problem. This geometric property of the reward landscape has two key consequences that we formalize below, with full derivations provided in Appendix~\ref{sec:theoretical-analysis}.

\paragraph{Bounded Suboptimality of Weight Interpolation.}
First, LMC implies that the path between $\theta_{\text{adv}}$ and $\theta_{\text{real}}$ likely traverses a region of high reward. This suggests that an interpolation along it, $\theta(\lambda) = (1-\lambda)\theta_{\text{real}} + \lambda\theta_{\text{adv}}$, can serve as a high-quality approximation for the true optima $\hat{\theta}_\mu$ corresponding to any user preference $\mu \in [0, 1]$. We formalize this by bounding the suboptimality gap.

\begin{theorem}[Suboptimality of Weight Interpolation]
\label{thm:main_result}
Let the base reward functions $R_{\text{adv}}(\theta)$ and $R_{\text{real}}(\theta)$ be $L$-smooth and $m$-strongly concave in the local region of the fine-tuned optima. Let $\theta_1$ and $\theta_2$ be the optimal parameters for the two expert models (e.g., Eq.~\ref{eq:expert model}) trained with mixing weight $\beta \in (0.5, 1]$, and let $\hat{\theta}_\mu$ be the true optimum for a user preference $R_\mu(\theta) = \mu R_{\text{adv}}(\theta) + (1-\mu) R_{\text{real}}(\theta),\mu \in [0, 1]$. Then the suboptimality gap of our interpolated model is bounded by:
\begin{equation}
    R_\mu(\hat{\theta}_\mu) - \max_{\lambda \in [0,1]} R_\mu((1-\lambda)\theta_2 + \lambda\theta_1) \le C(\mu, \beta, L, m) \cdot \|\theta_2 - \theta_1\|^2,
\end{equation}
where $C(\mu, \beta, L, m)$ is a constant dependent on the user preference $\mu$, the expert weight $\beta$, and the geometry of the reward landscape (smoothness $L$ and concavity $m$). See Appendix~\ref{app:non_quadratic_analysis} for details.
\end{theorem}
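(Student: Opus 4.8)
The plan is to exploit strong concavity and smoothness twice: once to locate the expert optima $\theta_1,\theta_2$ relative to the preference optimum $\hat\theta_\mu$, and once to control how much reward is lost along the interpolation segment. First I would set up notation: since $R_{\text{adv}}$ and $R_{\text{real}}$ are $L$-smooth and $m$-strongly concave on the relevant neighborhood, any positive convex combination $R_w(\theta)=wR_{\text{adv}}(\theta)+(1-w)R_{\text{real}}(\theta)$ inherits the same $L$-smoothness and $m$-strong concavity, and has a unique maximizer $\theta^\star_w$ in that neighborhood. The two experts are exactly $\theta_1=\theta^\star_{\beta}$ and $\theta_2=\theta^\star_{1-\beta}$ (reading off Eq.~\ref{eq:expert model}), and the target is $\hat\theta_\mu=\theta^\star_\mu$. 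The key structural fact I would establish is that the map $w\mapsto\theta^\star_w$ is Lipschitz: differentiating the optimality condition $\nabla R_w(\theta^\star_w)=0$ gives $\nabla^2 R_w(\theta^\star_w)\,\tfrac{d\theta^\star_w}{dw} + \big(\nabla R_{\text{adv}}-\nabla R_{\text{real}}\big)(\theta^\star_w)=0$, so $\big\|\tfrac{d\theta^\star_w}{dw}\big\|\le \tfrac1m\big\|\nabla R_{\text{adv}}(\theta^\star_w)-\nabla R_{\text{real}}(\theta^\star_w)\big\|$; bounding the gradient-difference term by something proportional to $\|\theta_1-\theta_2\|$ (it vanishes when the two base rewards share a maximizer, and in general is $O(\|\theta_1-\theta_2\|)$ up to landscape constants) yields $\|\theta^\star_{w}-\theta^\star_{w'}\|\le c(m,L)\,|w-w'|\,\|\theta_1-\theta_2\|$.

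Second, I would choose the interpolation coefficient $\lambda=\lambda(\mu)$ so that $(1-\lambda)\theta_2+\lambda\theta_1$ best tracks $\theta^\star_\mu$; the natural choice is the affine reparametrization $\lambda(\mu)=\tfrac{\mu-(1-\beta)}{2\beta-1}$, which maps $\mu\in[1-\beta,\beta]$ onto $[0,1]$ and sends the endpoints to the experts exactly. For $\mu$ outside $[1-\beta,\beta]$ one clamps $\lambda$ to $\{0,1\}$; the theorem only claims a bound, so the clamped case is subsumed. Writing $\theta_\lambda:=(1-\lambda)\theta_2+\lambda\theta_1$, I would bound $\|\theta_\lambda-\theta^\star_\mu\|$ by combining (i) $\|\theta^\star_{w(\lambda)}-\theta^\star_\mu\|$ where $w(\lambda)$ is the "intended" weight, controlled by the Lipschitz estimate above, and (ii) the deviation of the straight segment from the curved path $w\mapsto\theta^\star_w$, which is second-order in the segment length because both endpoints lie on the curve — a standard chord-vs-arc estimate giving $O(\sup\|\tfrac{d^2\theta^\star_w}{dw^2}\|\cdot\|\theta_1-\theta_2\|^2)$, with the second derivative again bounded via differentiating the optimality condition a second time and using $m$-strong concavity in the denominator. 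Both contributions are $O(\|\theta_1-\theta_2\|^2)$ after noting that the "intended weight mismatch" itself is controllable; the cleanest route is to show directly that the segment $\theta_\lambda$ stays within an $O(\|\theta_1-\theta_2\|^2)$ tube around the optimal path, so that for the appropriate $\lambda$ we get $\|\theta_\lambda-\hat\theta_\mu\|^2 \le c'(\mu,\beta,L,m)\,\|\theta_1-\theta_2\|^2$.

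Third, I would convert this parameter-space bound into a reward gap. By $L$-smoothness and $\nabla R_\mu(\hat\theta_\mu)=0$, for any $\theta$ in the neighborhood $R_\mu(\hat\theta_\mu)-R_\mu(\theta)\le \tfrac{L}{2}\|\theta-\hat\theta_\mu\|^2$. Applying this at $\theta=\theta_{\lambda(\mu)}$ and using the previous step gives
\[
R_\mu(\hat\theta_\mu)-\max_{\lambda\in[0,1]}R_\mu\big((1-\lambda)\theta_2+\lambda\theta_1\big)\ \le\ R_\mu(\hat\theta_\mu)-R_\mu(\theta_{\lambda(\mu)})\ \le\ \tfrac{L}{2}\,c'(\mu,\beta,L,m)\,\|\theta_1-\theta_2\|^2,
\]
and setting $C(\mu,\beta,L,m):=\tfrac{L}{2}c'(\mu,\beta,L,m)$ finishes the argument. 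I would then remark that $C$ is increasing in $L$ and decreasing in $m$ (reflecting that flat, well-conditioned basins — the LMC regime — make interpolation cheap) and blows up as $\beta\to\tfrac12$ (the experts collapse and the reparametrization $\lambda(\mu)$ becomes ill-conditioned), which matches intuition and the paper's framing.

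\textbf{Main obstacle.} The delicate step is the chord-vs-arc / Lipschitz-in-$w$ estimate for the optimal-solution path $w\mapsto\theta^\star_w$: making it rigorous requires the local region to be convex, to contain the whole segment and the whole path for the range of $w$ in play, and requires uniform $m$-strong concavity and bounds on the base gradients (and their differences) over that region — essentially a quantitative implicit-function-theorem argument. Everything downstream is routine smoothness bookkeeping; the real work is stating the local-region hypotheses precisely enough that $\|\tfrac{d\theta^\star_w}{dw}\|$ and $\|\tfrac{d^2\theta^\star_w}{dw^2}\|$ are genuinely $O(\|\theta_1-\theta_2\|)$ and $O(1)$ respectively, so that the final bound is truly quadratic in the expert distance rather than merely linear.
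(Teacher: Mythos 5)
Your strategy is sound but follows a genuinely different route from the paper's (Appendix~\ref{app:non_quadratic_analysis}). The paper never tracks the solution path $w\mapsto\theta^\star_w$ in parameter space at all: it works entirely with gradient norms. Concretely, it (i) uses $m_\mu$-strong concavity of $R_\mu$ to bound the gap by $\frac{1}{2m_\mu}\|\nabla R_\mu(\theta_{\bar\lambda})\|^2$ at the best interpolated point, (ii) inverts the $2\times 2$ mixing system to write $\nabla R_\mu = c_1(\mu,\beta)\nabla R_1 + c_2(\mu,\beta)\nabla R_2$ with coefficients carrying the $\frac{1}{2\beta-1}$ factor, and (iii) bounds each expert gradient along the segment by $L$-smoothness plus the fact that $\nabla R_i$ vanishes at its own optimum, giving $\|\nabla R_1(\theta_\lambda)\|\le \lambda L_1\|\theta_2-\theta_1\|$ and $\|\nabla R_2(\theta_\lambda)\|\le (1-\lambda)L_2\|\theta_2-\theta_1\|$, before taking a worst case over $\lambda$. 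This is shorter and avoids precisely the technical burden you flag as your main obstacle: no implicit-function-theorem argument, no Lipschitz or chord-vs-arc estimates for the optimal-solution curve, and no bootstrap to keep the path inside the well-conditioned region. What your route buys in exchange is a constructive choice of $\lambda(\mu)=\frac{\mu-(1-\beta)}{2\beta-1}$ and a parameter-space picture that connects directly to the exact-optimality result for $\mu\in[1-\beta,\beta]$ in the quadratic case (Appendix~\ref{app:quadratic_analysis}), whereas the paper's max-over-$\lambda$ step discards this and yields a looser, less interpretable constant. Your bridging estimate $\|\nabla R_{\text{adv}}(\theta^\star_w)-\nabla R_{\text{real}}(\theta^\star_w)\| = O(\|\theta_1-\theta_2\|)$ does hold — e.g.\ subtracting the two first-order conditions at $\theta_2$ gives $\|\nabla R_{\text{adv}}(\theta_2)-\nabla R_{\text{real}}(\theta_2)\|\le \frac{L_1}{2\beta-1}\|\theta_2-\theta_1\|$ — so your argument can be closed, but it requires the careful local-region bookkeeping you describe; the paper's gradient-decomposition argument sidesteps all of it.
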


Theorem~\ref{thm:main_result} provides a quantitative guarantee on the performance of our method. 
The LMC hypothesis suggests that because both experts are fine-tuned from the same powerful pretrained model $\pi_{\text{ref}}$ on closely related objectives, they likely reside in the same basin of the reward landscape, making their distance small. 
This controlled suboptimality ensures that the weight interpolation can closely approximate the true Pareto front. This result generalizes the intuition from a simplified quadratic reward setting (Appendix~\ref{app:quadratic_analysis}), where we show that there exists a range of user preferences $\mu \in [1-\beta, \beta]$ defined by the experts' training weights for which it is exactly optimal.

\paragraph{Superiority of Weight-Space over Output-Space Mixing.}
Second, LMC provides a principled reason to prefer mixing model \textit{weights} over mixing their \textit{outputs} (i.e., ensembling trajectories). While ensembling is a common alternative, the concavity of the reward landscape between solutions makes weight-space interpolation particularly effective. We analyze the difference in expected loss (e.g., mean square error (MSE) against an ideal trajectory) between the two strategies.

\begin{proposition}[Advantage of Weight Mixing over Output Ensembling]
\label{prop:weight_vs_output}
Consider the expected loss $L^{\text{weight}}(\lambda)$ of a weight-mixed model $\pi_{\theta(\lambda)}$, and an output-mixed model, $L^{\text{ens}}(\lambda)$. The difference in their performance is approximated by:
\begin{equation}
L^{\text{weight}}(\lambda) - L^{\text{ens}}(\lambda) \approx \underbrace{-\frac{\lambda(1-\lambda)}{2} \frac{d^2 L^{\text{weight}}}{d\lambda^2}}_{{\text{Term 1: Benefit from Reward Concavity}}} + \underbrace{\frac{\lambda(1-\lambda)}{2} \mathbb{E}\left[ \|\Delta \tau(c)\|_2^2 \right]}_{\text{Term 2: Benefit from Output Diversity}},
\end{equation}
where $\Delta \tau(c) = \pi_{\theta_2}(\cdot|c) - \pi_{\theta_1}(\cdot|c)$ is the difference in the mean predicted trajectories for a given context $c$. The proof is detailed in Appendix~\ref{app:analytical_comparison}.
\end{proposition}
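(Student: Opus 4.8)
\textbf{Setup and strategy.} The plan is to expand both losses as functions of the interpolation coefficient $\lambda$ around the two endpoints $\lambda=0$ and $\lambda=1$, and to exploit the fact that \emph{both} strategies agree with the experts at the endpoints: $L^{\text{weight}}(0)=L^{\text{ens}}(0)$ and $L^{\text{weight}}(1)=L^{\text{ens}}(1)$, since at $\lambda\in\{0,1\}$ the weight-mixed model \emph{is} one of the experts and the output mixture collapses to that same expert's prediction. The difference $g(\lambda):=L^{\text{weight}}(\lambda)-L^{\text{ens}}(\lambda)$ therefore vanishes at both endpoints, so a second-order (quadratic) model of $g$ on $[0,1]$ is natural and yields the $\lambda(1-\lambda)$ prefactor that appears in both terms of the claim.

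\textbf{Step 1: the output-ensembling term.} First I would compute $L^{\text{ens}}(\lambda)$ exactly. Writing the MSE of a predicted trajectory $\tau$ against the ideal trajectory $\tau^\star(c)$ as $\|\tau-\tau^\star(c)\|_2^2$ and taking expectation over contexts $c$, the output-mixed prediction is $(1-\lambda)\pi_{\theta_1}(\cdot|c)+\lambda\pi_{\theta_2}(\cdot|c)$. By the standard bias--variance-style identity for a convex combination, $L^{\text{ens}}(\lambda)=(1-\lambda)L_1+\lambda L_2-\lambda(1-\lambda)\,\mathbb{E}[\|\Delta\tau(c)\|_2^2]$, where $L_i$ is the loss of expert $i$ and $\Delta\tau(c)=\pi_{\theta_2}(\cdot|c)-\pi_{\theta_1}(\cdot|c)$. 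Hence $L^{\text{ens}}$ is exactly the chord $(1-\lambda)L_1+\lambda L_2$ shifted \emph{downward} by the concave bump $\lambda(1-\lambda)\mathbb{E}[\|\Delta\tau\|_2^2]$: this is the quantity that becomes Term~2 (with the sign flipped, since it enters $g$ with a minus sign on $L^{\text{ens}}$).

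\textbf{Step 2: the weight-interpolation term via a Taylor expansion along the path.} Next I would treat $L^{\text{weight}}(\lambda)$ as a smooth scalar function on $[0,1]$, which is justified by the LMC hypothesis of Section~\ref{ssec:lmc analysis}: the path $\theta(\lambda)=(1-\lambda)\theta_1+\lambda\theta_2$ stays in a single low-loss basin, so $L^{\text{weight}}$ is well-approximated by a quadratic. Writing $L^{\text{weight}}(\lambda)=(1-\lambda)L_1+\lambda L_2 - \frac{\lambda(1-\lambda)}{2}\,\frac{d^2 L^{\text{weight}}}{d\lambda^2}$ (the exact second-order Taylor form of a function with prescribed endpoint values, with the curvature treated as the representative constant value of $L''$ on the segment, by the mean-value form of Taylor's theorem), the chord terms $(1-\lambda)L_1+\lambda L_2$ cancel against those from Step~1 when we subtract. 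What remains is exactly
$$g(\lambda)=L^{\text{weight}}(\lambda)-L^{\text{ens}}(\lambda)\approx -\frac{\lambda(1-\lambda)}{2}\,\frac{d^2 L^{\text{weight}}}{d\lambda^2}+\frac{\lambda(1-\lambda)}{2}\,\mathbb{E}\bigl[\|\Delta\tau(c)\|_2^2\bigr],$$
which is the stated expression, with Term~1 interpreted as the gain from path flatness (small or negative $L''$ along the connector) and Term~2 as the gain the ensemble gives up relative to the chord but which the weight soup does \emph{not} automatically inherit.

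\textbf{Main obstacle.} The delicate point is the comparison in Step~2: rigorously relating the \emph{function-space} curvature $d^2L^{\text{weight}}/d\lambda^2$ to the \emph{parameter-space} geometry, i.e. arguing that $\frac{d^2}{d\lambda^2}L^{\text{weight}}(\lambda)=(\theta_2-\theta_1)^\top \nabla^2_\theta \mathcal{L}\big(\theta(\lambda)\big)(\theta_2-\theta_1)$ is genuinely small under LMC, and that higher-order Taylor remainders along the segment are negligible. This requires either the local $L$-smoothness assumption already invoked in Theorem~\ref{thm:main_result} (bounding $|L''|\le L\|\theta_2-\theta_1\|^2$) or an explicit flat-basin hypothesis; I would state it as an approximation ($\approx$), exactly as the proposition does, and defer the quantitative remainder control to Appendix~\ref{app:analytical_comparison}. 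A secondary subtlety is that the output-space MSE identity in Step~1 is exact only when the loss is quadratic in the prediction; for a general loss one keeps the leading quadratic term of its Taylor expansion about $\tau^\star(c)$, which is consistent with the overall second-order nature of the statement.
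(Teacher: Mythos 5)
Your derivation is correct and reaches the stated identity, but it takes a genuinely different route from the paper's proof in Appendix~\ref{app:analytical_comparison}. You write both losses as ``chord minus a bump'': for the ensemble you use the exact ambiguity decomposition $L^{\text{ens}}(\lambda)=(1-\lambda)L_1+\lambda L_2-\tfrac{\lambda(1-\lambda)}{2}\,\mathbb{E}\bigl[\|\Delta\tau\|_2^2\bigr]$ (note that with the paper's normalization $l=\tfrac12\|f-y\|_2^2$ the cross term carries the factor $\tfrac12$, which your Step~1 as written drops but your final formula restores), and for the weight path you use the mean-value quadratic form $L^{\text{weight}}(\lambda)=(1-\lambda)L_1+\lambda L_2-\tfrac{\lambda(1-\lambda)}{2}\,L''$; subtracting cancels the chords and leaves the claim. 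The paper instead works at the per-sample level: it factors $l^{\text{weight}}-l^{\text{ens}}$ via a difference of squares, expresses $f^{\text{ens}}-f^{\text{weight}}$ through an integral of the \emph{output} Hessian $\delta^\top\nabla_\theta^2 f\,\delta$ along the segment (assumed constant), and then uses the chain-rule identity $\tfrac{d^2}{d\lambda^2}l=\bigl((\nabla_\theta f)^\top\delta\bigr)^2+(f-y)\,\delta^\top\nabla_\theta^2 f\,\delta$ to trade the output Hessian for the loss curvature, finally approximating $(\nabla_\theta f)^\top\delta\approx\Delta f$. Your argument is more elementary and makes the endpoint-matching structure ($g(0)=g(1)=0$, hence the $\lambda(1-\lambda)$ prefactor) transparent, and it has the advantage that the ensemble side is exact rather than approximated. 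The paper's route buys something you lose: it pinpoints exactly which object is assumed constant along the path (the output Hessian, not the loss curvature itself) and reveals that Term~2 is the Gauss--Newton component of the loss curvature, so the two terms are not independent quantities but two pieces of the same second derivative. Your identification of the main remaining gap --- controlling the Taylor remainder and justifying near-constant curvature via LMC or the smoothness assumptions of Theorem~\ref{thm:main_result} --- is exactly the approximation the paper also leaves at the level of~$\approx$.
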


Proposition~\ref{prop:weight_vs_output} reveals a crucial trade-off. Term 2 is always non-negative and represents the benefit of ensembling diverse outputs. However, Term 1 depends on the geometry of the loss landscape along the linear path connecting $\theta_1$ and $\theta_2$. 
{For weight mixing to be superior, Term 1 must be negative and dominate Term 2. This requires the loss function to be convex along the interpolation path, i.e., $\frac{d^2L^{\text{weight}}}{d\lambda^2} > 0$.}
{This convexity in the loss landscape corresponds to a concave geometry in the reward landscape. Intuitively, a linear path in the weight space traces a curved and concave path in the reward space. This means an interpolated model $\theta(\lambda)$ can achieve a reward greater than the linear average of the experts' rewards. While LMC posits that both experts reside in a common low-loss basin, this does not imply zero curvature. Instead, the path connecting them acts as a high-reward ridge (Fig.~\ref{Fig_lmc}(c)). The concavity of this ridge provides the necessary positive curvature in the loss, making Term 1 negative and dominant. This phenomenon is empirically validated by the concave reward curves shown in Fig.~\ref{Fig_lmc}(d), which lie strictly above the linear interpolation line.}
By operating in the weight space of a well-structured model, our method exploits this geometric property to generate high-quality behaviors learned by a coherent model.

Together, these results suggest that the geometric properties of the fine-tuning landscape make linear weight interpolation not merely a heuristic but a principled method for controllable generation.

\vspace{-8pt}
\section{Experiments}

\vspace{-8pt}
We conduct comprehensive experiments to validate SAGE. Our evaluation is designed to answer three research questions: (1) In an open-loop setting, how does it compare against SOTA baselines in generating challenging yet realistic scenarios? (2) In a closed-loop setting, does training with these scenarios translate into superior driving policies? (3) How can the steerable generation capability be comprehended in a principled way? (4) Do the key designs, such as HGPO, have positive impacts on the performance?
All experiments are conducted in the MetaDrive \citep{li2022metadrive} simulator using real-world scenarios from the Waymo Open Motion Dataset \citep{ettinger2021large}. Detailed setups and supplementary results are provided in Appendix~\ref{appendix:implementation} and \ref{sec:supplementary_results}.

\vspace{-8pt}
\subsection{Benchmarking Safety-critical Scenario Generation Methods}

\vspace{-8pt}
\begin{table}[!htbp]
\centering
\caption{Evaluation of adversarial generation methods against the \textbf{Replay} policy. Higher is better for Attack Success Rate and Adversarial Reward ($\uparrow$), while lower is better for penalty metrics ($\downarrow$).  {WD denotes Wasserstein distance. SAGE is trained using separate adversarial weights $w_\text{adv}$.}}
\label{tab:results_replay}
\vspace{-10pt}
\setlength{\tabcolsep}{1.5pt}
\renewcommand{\arraystretch}{1.1}
\resizebox{0.95\textwidth}{!}{%
\begin{tabular}{@{}l c c| cc cc| ccc@{}}
\toprule
& \textbf{Attack Succ.} & \textbf{Adv.} & \multicolumn{2}{c}{\textbf{Real. Pen.} $\downarrow$} & \multicolumn{2}{c|}{\textbf{Map Comp.} $\downarrow$} & \multicolumn{3}{c}{\textbf{Dist. Diff. (WD)}} \\
\cmidrule(lr){4-5} \cmidrule(lr){6-7} \cmidrule(lr){8-10}
\textbf{Methods} & \textbf{Rate} $\uparrow$ & \textbf{Reward} $\uparrow$ & Behav. & Kine. & Crash Obj. & Cross Line & Accel. & Vel. & Yaw \\
\midrule
Rule & 100.00\% & 5.048 & 2.798 & 5.614 & 1.734 & 7.724 & 2.080 & 8.546 & 0.204 \\
CAT \citep{zhang2023cat} & 94.85\% & 3.961 & 8.941 & 3.143 & 2.466 & 9.078 & 1.556 & 7.233 & 0.225 \\
KING \citep{hanselmann2022king} & 40.85\% & 2.243 & 5.883 & 3.434 & 3.126 & 6.056 & 0.972 & 255.5 & 0.098 \\
AdvTrajOpt \citep{zhang2022adversarial} & 70.46\% & 2.652 & 4.500 & 2.775 & 2.547 & 10.16 & 1.754 & 6.177 & 0.268 \\
SEAL \citep{stoler2025seal} & 63.93\% & 1.269 & 3.017 & 2.423 & 2.732 & 11.612 & 1.544 & 6.959 & 0.202 \\
{GOOSE} \citep{ransiek2024goose} & {36.07\%} & {2.378} & {4.718} & {21.32} & {4.426} & {14.48} & {6.368} & {8.286} & {0.154} \\
\midrule
\rowcolor{gray!10}
SAGE ($w_{\text{adv}}=0.0$) & 16.26\% & 1.065 & 0.332 & 1.998 & 0.677 & 0.948 & 1.459 & 9.313 & 0.054 \\
\rowcolor{gray!25}
SAGE ($w_{\text{adv}}=0.5$) & 50.41\% & 2.523 & 0.483 & 2.064 & 0.755 & 0.949 & 1.521 & 8.471 & 0.079 \\
\rowcolor{gray!40}
SAGE ($w_{\text{adv}}=1.0$) & 76.15\% & {4.121} & {1.429} & {2.479} & {0.731} & {1.084} & 2.098 & 8.088 & 0.184 \\
\bottomrule
\end{tabular}%
}
\end{table}

\vspace{-15pt}
\begin{table}[!htbp]
\centering
\caption{Evaluation of adversarial generation methods against the \textbf{RL} policy.}
\vspace{-10pt}
\label{tab:results_rl}
\setlength{\tabcolsep}{1.5pt}
\renewcommand{\arraystretch}{1.1}
\resizebox{0.95\textwidth}{!}{%
\begin{tabular}{@{}l c c| cc cc| ccc@{}}
\toprule
& \textbf{Attack Succ.} & \textbf{Adv.} & \multicolumn{2}{c}{\textbf{Real. Pen.} $\downarrow$} & \multicolumn{2}{c|}{\textbf{Map Comp.} $\downarrow$} & \multicolumn{3}{c}{\textbf{Dist. Diff. (WD)}} \\
\cmidrule(lr){4-5} \cmidrule(lr){6-7} \cmidrule(lr){8-10}
\textbf{Methods} & \textbf{Rate} $\uparrow$ & \textbf{Reward} $\uparrow$ & Behav. & Kine. & Crash Obj. & Cross Line & Accel. & Vel. & Yaw \\
\midrule
Rule &  65.57\% &  2.761 & 2.180 & 113.7 & 1.803 & 6.148 & 10.85 & 13.47 & 0.336 \\
CAT \citep{zhang2023cat} &  30.33\% & 1.319 & 8.191 & 3.039 & 2.623 &  6.967 & 1.539 & 8.877 & 0.187 \\
KING \citep{hanselmann2022king} &  19.14\% & 1.148 & 2.041 & 2.596 & 3.114 & 5.857 & 0.983 & 259.1 & 0.097 \\
AdvTrajOpt \citep{zhang2022adversarial} & 19.40\% & 0.992 & 4.542 & 2.779 & 2.459 & 9.973 & 1.749 & 6.187 & 0.269 \\
SEAL \citep{stoler2025seal} & 31.40\% & 0.752 & 5.871 & 2.684 & 3.030 & 11.98 & 1.563 & 8.267 & 0.267 \\
{GOOSE} \citep{ransiek2024goose} & {12.46\%} & {0.667} & {4.369} & {15.47} & {3.507} & {11.45} & {4.662} & {8.070} & {0.152} \\
\midrule
\rowcolor{gray!10}
SAGE ($w_{\text{adv}}=0.0$) & 11.20\% & 0.722 & 0.332 & 2.000 & 0.738 & 0.956 & 1.456 & 9.344 & 0.055\\
\rowcolor{gray!25}
SAGE ($w_{\text{adv}}=0.5$) & 13.66\% & 0.819 & 0.496 & 2.066 & 0.820 & 0.820 & 1.515 & 8.475 & 0.080\\
\rowcolor{gray!40}
SAGE ($w_{\text{adv}}=1.0$) & 28.42\% &1.400 & 1.468 & 2.496 & 0.792 & 1.366 & 2.098 & 8.114 & 0.188 \\
\bottomrule
\end{tabular}%
}
\end{table}

\vspace{-5pt} 
We first benchmark SAGE in an open-loop setting against SOTA adversarial baselines. Tabs. \ref{tab:results_replay} and \ref{tab:results_rl} suggest that SAGE shows a superior balance between adversariality and realism. When configured for maximum adversariality, SAGE achieves a high attack success rate and adversarial reward, competitive with the strongest baselines. More importantly, it maintains significantly lower realism and map compliance penalties across the board. For instance, against the Replay policy, SAGE reduces map violations (e.g., Cross Line penalty of 1.084) by over 85\% compared to Rule-based (7.724) and CAT (9.078), while also achieving the highest behavioral realism. This highlights the effectiveness of treating map compliance as a hard constraint rather than a soft penalty. A noteworthy observation arises from the distributional distance metrics. While SAGE excels at maintaining low realism penalties for individual trajectories, it can produce a higher distributional distance than other methods, though within a reasonable range. This suggests that it achieves adversariality not by generating physically implausible actions, but by discovering coherent and statistically rare long-tail scenarios that challenge the ego agent precisely because they are underrepresented in the training distribution.

\begin{wrapfigure}{l}{0.5\textwidth} 
  \vspace{-16pt} 
  \centering
  \includegraphics[width=0.95\linewidth]{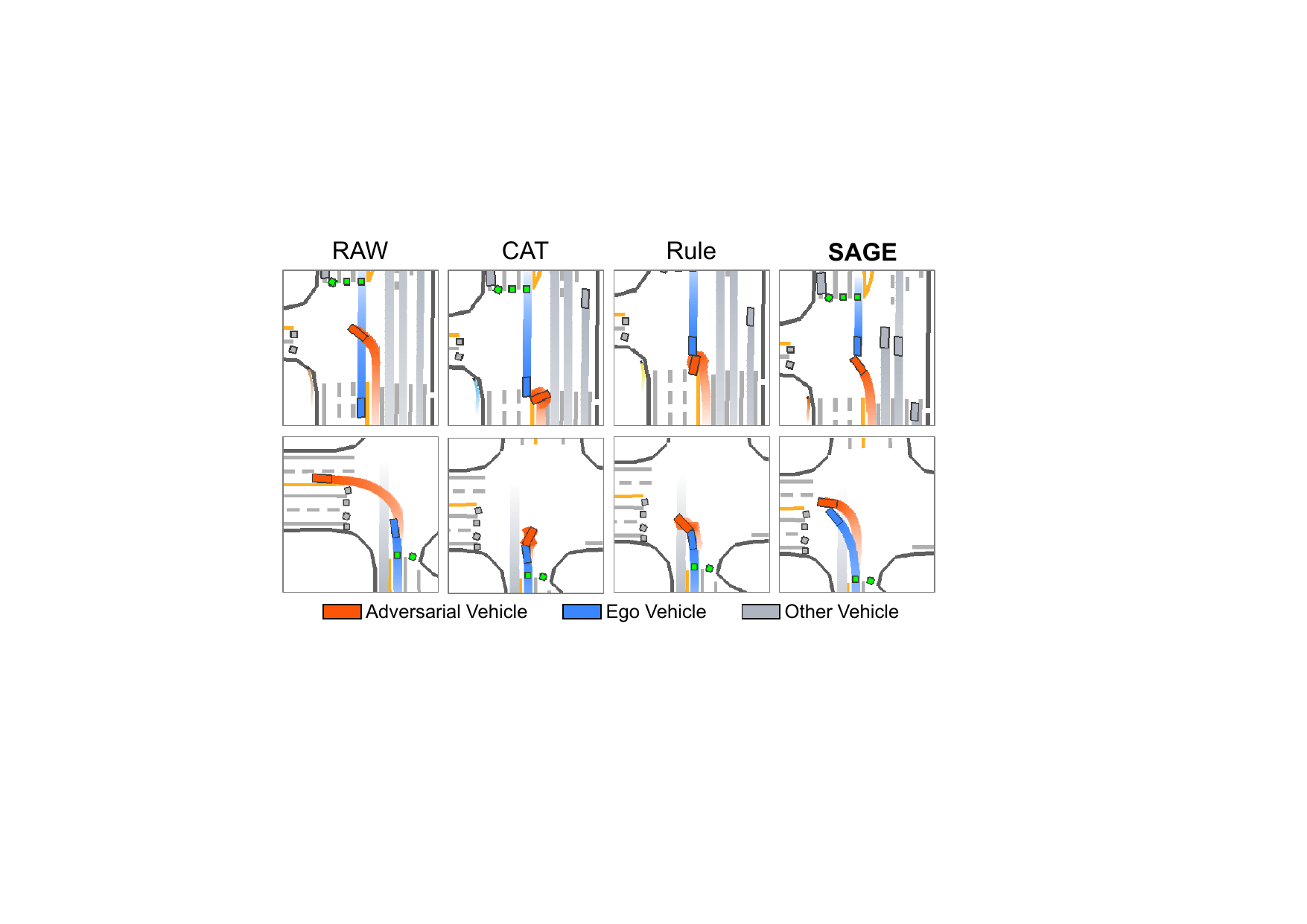}
  \vspace{-9.5pt}
  \caption{Behavioral realism comparison. {Adversarial generation against the Replay policy.}}
  \vspace{-20pt}
  \label{Fig_case_baseline}
\end{wrapfigure}

Qualitative results in Fig.~\ref{Fig_case_baseline} (more examples in section \ref{ssec:more examples}) confirm these findings. While baselines often produce physically awkward or rule-violating trajectories, SAGE generates challenging yet plausible maneuvers. This ability is crucial for meaningful safety validation and robust agent training. Similar trends are observed when testing against a stronger RL policy (Tab.~\ref{tab:results_rl}), confirming the robustness of SAGE. Quantitative results against {other reactive} policies are provided in Appendix~\ref{ss:results_idm_rule} {and visualized in Fig. \ref{Fig_case_different_policies}. Fig. \ref{Fig_types} shows that SAGE generates diverse types of plausible adversarial behaviors for effective stress tests and training.}

\begin{figure}[!htbp]
  \begin{center}
    \includegraphics[width=0.99\textwidth]{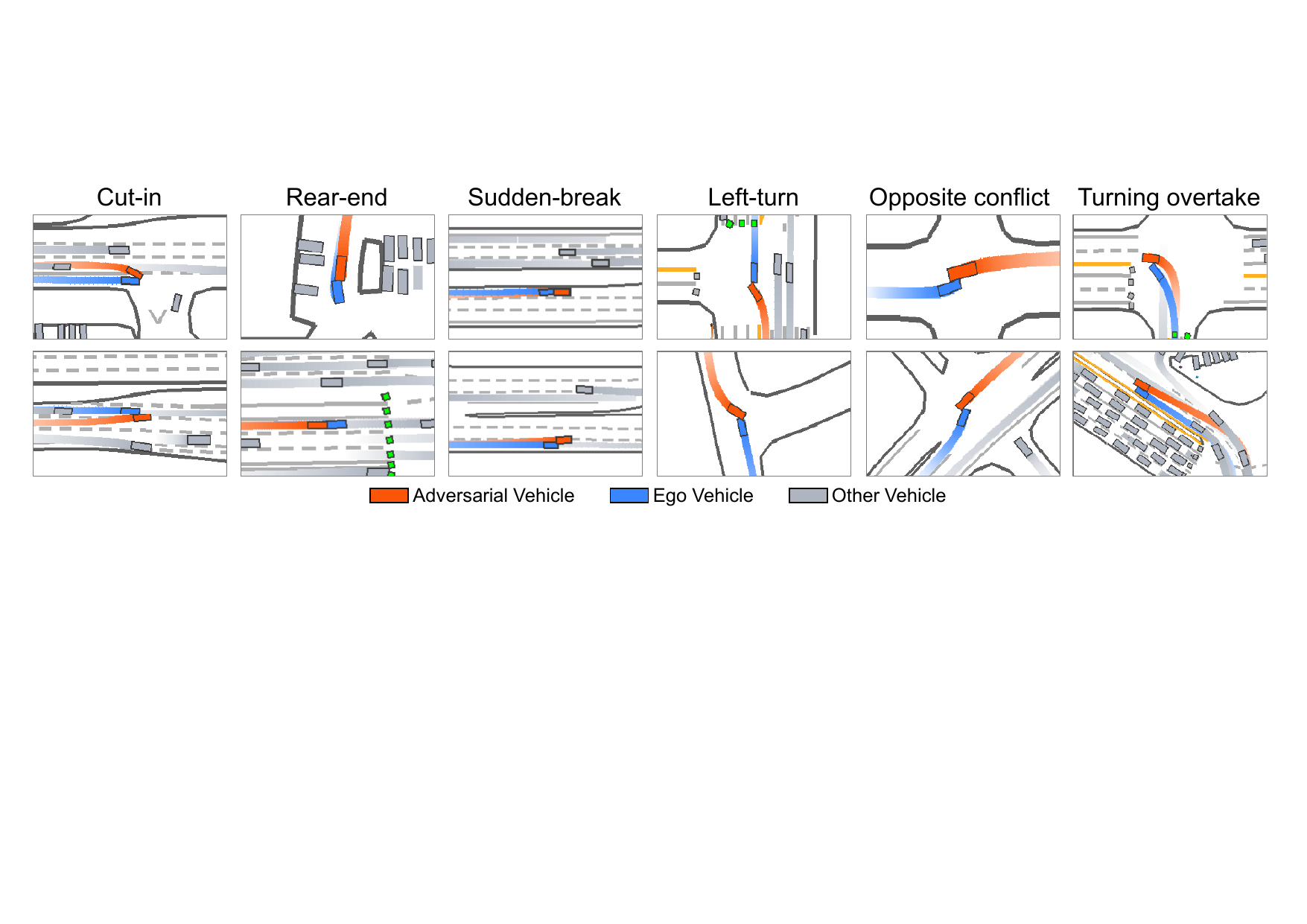}
  \end{center}
  \vspace{-10pt}
  \caption{{SAGE generates diverse types of meaningful adversarial behaviors (Replay policy).}}
  \label{Fig_types}
  \vspace{-10pt}
\end{figure}

\vspace{-5pt}
\subsection{Test-time Steerable Generation via Preference Alignment}

A key contribution of SAGE is the ability to steer scenarios at test time without retraining. Fig.~\ref{Fig_pareto_front}(a) illustrates this capability by comparing the Pareto fronts of different mixing strategies. Our proposed weight mixing traces a superior Pareto front, achieving a better realism score for any given level of adversariality compared to logit-space or trajectory-space mixing (see section \ref{app:extrapolation}), which empirically validates Proposition \ref{prop:weight_vs_output}. The continuous and monotonic curves in Fig.~\ref{Fig_pareto_front}(b) further indicate the fine-grained controllability: as $w_{\text{adv}}$ increases, the collision rate smoothly rises, with a corresponding increase in the distributional distance from naturalistic data, indicating a controlled trade-off. The performance of SAGE at different weight configurations, shown in Tabs.~\ref{tab:results_replay} and~\ref{tab:results_rl}, further quantifies this smooth transition.
This controlability is visualized in Fig.~\ref{Fig_case_weight}: the generated trajectory transitions smoothly from a compliant lane-following maneuver to a highly aggressive cut-in or sudden brake.

\begin{figure}[!htbp]
  \begin{center}
    \includegraphics[width=0.99\textwidth]{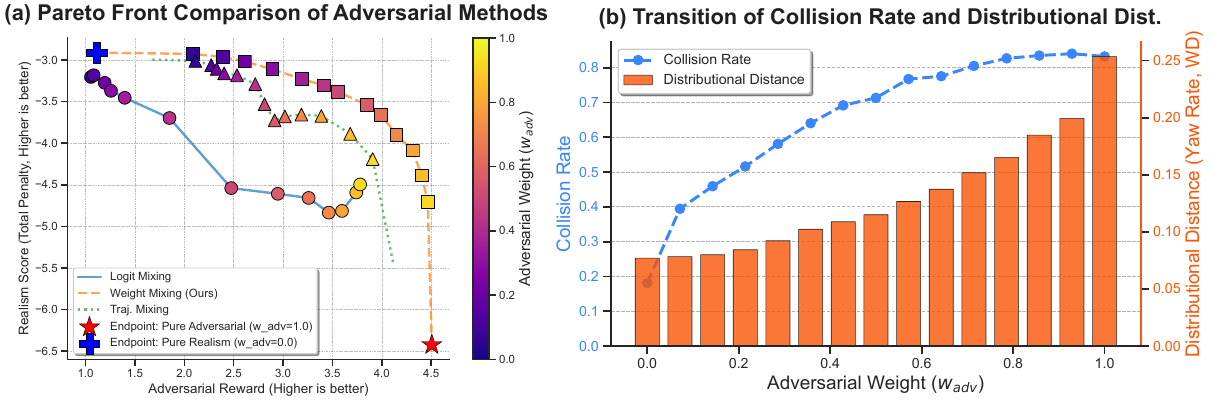}
  \end{center}
  \vspace{-10pt}
  \caption{Pareto front and continuous performance transition at test time. {(a) We compare the trade-off curves for different model merging strategies in terms of their adversariality and realism. (b) SAGE achieves smooth and continuous outcome control by varying the adversarial weight.}}
  \label{Fig_pareto_front}
  \vspace{-10pt}
\end{figure}

\begin{figure}[!htbp]
  \begin{center}
    \includegraphics[width=0.95\textwidth]{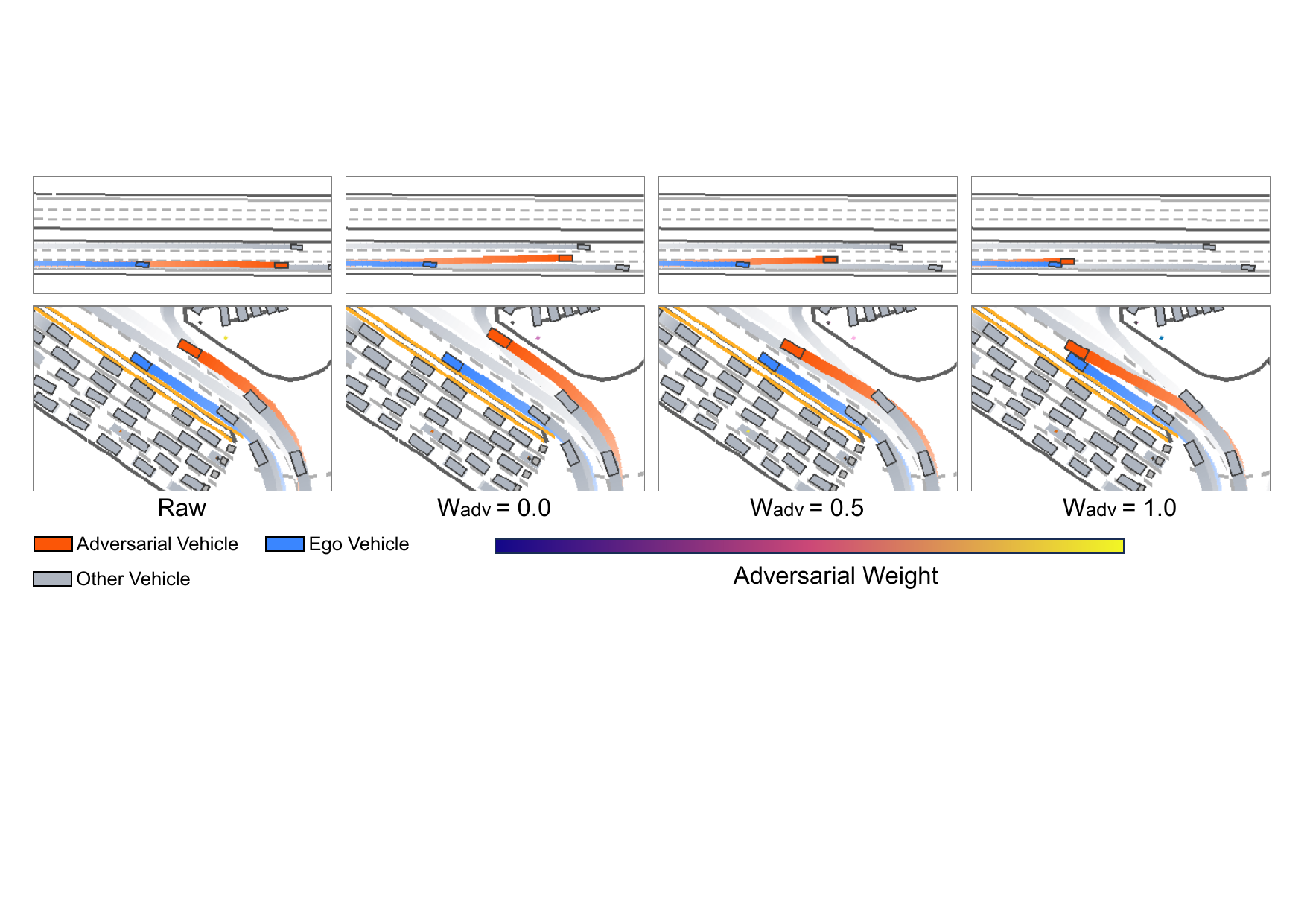}
  \end{center}
  \vspace{-10pt}
  \caption{More aggressive behaviors are generated from SAGE by increasing $w_\text{adv}$ from 0 to 1 (see Appendix~\ref{app:more-controllable-case} for more examples). {Adversarial generation against the Replay policy.}}
  \label{Fig_case_weight}
  \vspace{-10pt}
\end{figure}

\subsection{Deciphering the Preference-aligned Spaces of Fine-tuned Motion Models}\label{ssec:weight space}
\vspace{-5pt}

We empirically validate the theoretical understanding of weight interpolation and LMC. Since we fine-tune a well-pretrained model, it is plausible that the learned experts remain in a local region of the weight space with favorable geometry. Fig.~\ref{Fig_lmc} provides strong evidence. Fig.~\ref{Fig_lmc}(a) visualizes the preference vectors in a PCA-projected space, showing they are distinct and non-collinear, capturing unique aspects of the desired behaviors. The interpolated points lie between the line segment connecting two vectors, while the extrapolation points are scattered in space, which aligns with our understanding.
In addition, the angle between the directions of two vectors lies within $(0^{\circ},90^{\circ})$, which is reasonable to cover a wide Pareto front.
As we interpolate between $\theta_\text{adv}$ and $\theta_\text{real}$, Fig.~\ref{Fig_lmc}(b) shows a smooth, monotonic transform in both the L2 norm of the preference vector and its cosine similarity to the target adversarial vector, indicating a well-behaved and controllable path.

\begin{figure}[!htbp]
  \begin{center}
    \includegraphics[width=0.99\textwidth]{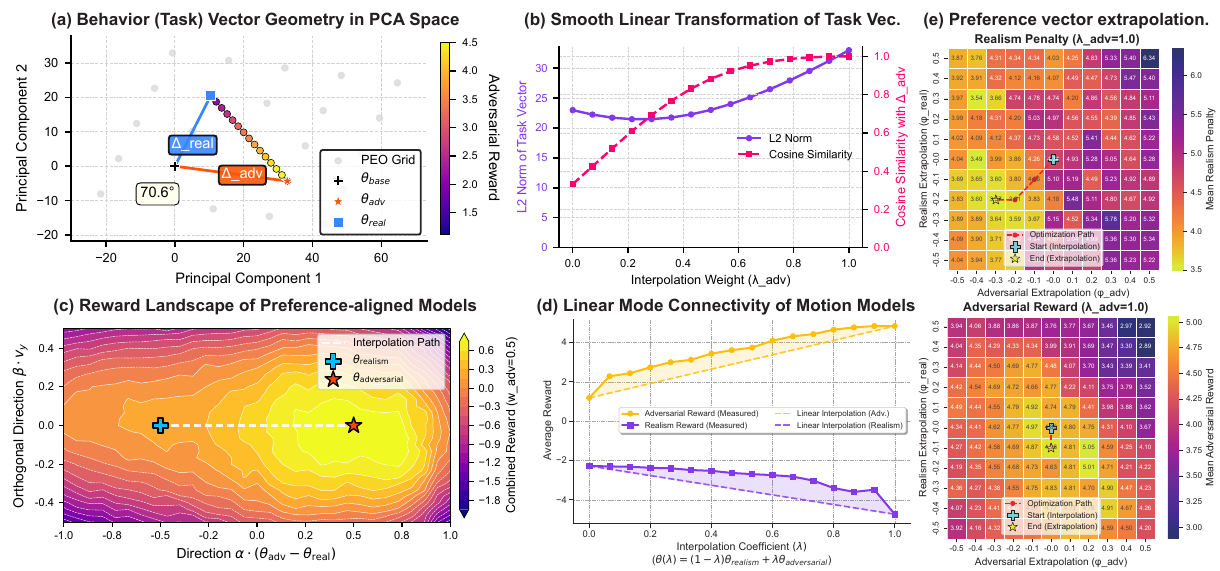}
  \end{center}
  \vspace{-10pt}
  \caption{Empirical evidence for LMC and preference vector manipulation in the weight space. {(a) PCA plot shows the preference vectors ($\Delta_{\text{adv}}$, $\Delta_{\text{real}}$, see Eq. \ref{Eq:task_vector}) forming a well-defined space for interpolation. (b) The interpolated vector's properties transform smoothly, indicating a controllable path. (c) The reward landscape reveals a high-reward ridge connecting two experts, which is a key property of LMC. (d) The rewards exhibit concavity along the path, justifying the superiority of weight mixing. (e) Preference vector extrapolation improves performance beyond the convex hull.}}
  \label{Fig_lmc}
  \vspace{-5pt}
\end{figure}

Crucially, Fig.~\ref{Fig_lmc}(c) shows the reward landscape along a 2D plane defined by the two experts. The linear path connecting them traverses a high-reward plateau, consistent with the LMC prediction of a flat loss basin. This flatness and curvature are key to the success of weight interpolation, making the suboptimality gap well-bounded. Fig.~\ref{Fig_lmc}(d) provides direct evidence: the measured rewards of interpolated models closely track or even exceed the rewards predicted by a linear interpolation of the rewards of endpoints. This confirms that interpolating in weight space is more effective than simply expecting a linear combination of outcomes, supporting Proposition~\ref{prop:weight_vs_output}. 
Furthermore, Fig.~\ref{Fig_lmc}(e) demonstrates that preference vector extrapolation (see Appendix \ref{app:extrapolation}) successfully extends generation beyond the convex hull of the experts, yielding a path that achieves even higher rewards and lower penalties than the solution on the original Pareto front, thereby enabling the creation of out-of-distribution scenarios for rigorous testing. More results are provided in Appendix \ref{ssec:extrapolation comparison}.

\vspace{-8pt}
\subsection{Closed-loop Adversarial Training for Improved Driving Policy}

\vspace{-5pt} 
\begin{table}[!htbp]
\centering
\caption{Evaluation of trained RL policies in the generated (adversarial, $w_\text{adv}=1.0$) environments.}
\label{tab:eval_adv}
\vspace{-10pt}
\setlength{\tabcolsep}{2pt}
\renewcommand{\arraystretch}{1.1}
\resizebox{0.9\textwidth}{!}{%
\begin{tabular}{l cccccc}
\toprule
\textbf{Methods} & Reward $\uparrow$ & Cost $\downarrow$ & Compl. $\uparrow$ & Coll. $\downarrow$ & Ave. Speed $\uparrow$ & Ave. Jerk $\downarrow$ \\
\midrule
\rowcolor{gray!25}
SAGE & $\bm{45.14 \pm 3.27}$ & $\bm{0.61 \pm 0.04}$ & $\bm{0.69 \pm 0.03}$ & $\bm{0.31 \pm 0.02}$ & \bm{$8.98 \pm 0.02$} & \bm{$28.85 \pm 0.68$} \\
CAT & $37.70 \pm 1.53$ & $0.70 \pm 0.04$ & $0.58 \pm 0.02$ & $0.37 \pm 0.04$ & $6.85 \pm 0.03$ & $31.83 \pm 1.10$ \\
Replay (No Adv) & $41.32 \pm 3.21$ & $0.68 \pm 0.04$ & $0.62 \pm 0.04$ & $0.44 \pm 0.06$ & $8.77 \pm 0.01$ & $30.42 \pm 1.12$ \\
Rule-based Adv & $32.99 \pm 4.89$ & $0.72 \pm 0.03$ & $0.50 \pm 0.04$ & $0.33 \pm 0.02$ & $5.99 \pm 0.04$  & $30.51  \pm 0.99$ \\
\bottomrule
\end{tabular}}
\end{table}

\vspace{-5pt}
To assess the downstream utility of SAGE, we integrate it into a closed-loop RL training pipeline. As shown in Fig.~\ref{Fig_rl} (in Appendix~\ref{app:rl_results}) and Tab.~\ref{tab:eval_adv}, an ego agent trained with SAGE consistently outperforms baselines across all metrics: it achieves higher rewards, lower costs, greater route completion, and a lower crash rate. Beyond the safety metrics, it also achieves better efficiency and comfort performances.
Moreover, SAGE alleviates catastrophic forgetting as evidenced by Tab.~\ref{tab:eval_normal} (in Appendix~\ref{app:rl_results}). The policy trained with SAGE also achieves the best overall performance in log-replay scenarios. This justifies the effectiveness of our dual-axis curriculum (see Appendix \ref{sec:appendix_exp_setups}).

{
To further ensure a fair evaluation and assess the risk of overfitting, we conducted a cross-evaluation of the trained RL policies. In this setup, we benchmarked every trained agent on held-out test sets generated by the CAT and Rule-based methods.
The results in Tab.~\ref{tab:eval_different_envs} provide several key insights. First, we observe that an agent tends to achieve the best performance on the adversarial metric when evaluated on scenarios created by its own training generator. For example, the CAT-trained agent has the lowest collision rate in CAT-generated scenarios. This validates the importance of this cross-evaluation.
Second, and more importantly, the SAGE-trained agent demonstrates superior generalizability across all environments. Its safety performance remains highly competitive, ranking a close second. Notably, the SAGE-trained agent achieves the highest route completion rate across all adversarial test sets, demonstrating that it learns to handle challenges more effectively without compromising its primary driving objective. This robustness across diverse adversarial distributions suggests that the curriculum-based training with SAGE, which exposes the agent to a wide spectrum of adversarial intensities, prevents overfitting to a specific attack pattern and enables a more generalizable and practical driving policy. In contrast, agents trained on more static adversarial distributions (CAT, Rule-based) show a more significant performance drop when tested out-of-distribution.
}

\vspace{-5pt}
\begin{table}[!htbp] 
  \centering 
  \caption{{Evaluation of Trained RL Policies in Different Adversarial Environments.}}
  \label{tab:eval_different_envs}
\vspace{-10pt}
\setlength{\tabcolsep}{3pt}
\renewcommand{\arraystretch}{1.1}
\resizebox{0.85\textwidth}{!}{%
  \begin{tabular}{lcccc}
    \toprule
    & \multicolumn{2}{c}{CAT-generated scenarios} & \multicolumn{2}{c}{Rule-generated scenarios} \\
    \cmidrule(lr){2-3} \cmidrule(lr){4-5} 
    Training Methods & Compl. $\uparrow$ & Coll. $\downarrow$ & Compl. $\uparrow$ & Coll. $\downarrow$ \\
    \midrule
    \rowcolor{gray!25}
    SAGE            & \bm{$0.678 \pm 0.034$} & \underline{$0.307 \pm 0.037$} & \bm{$0.655 \pm 0.037$} & \underline{$0.334 \pm 0.038$} \\
    CAT             & $0.660 \pm 0.039$ & \bm{$0.298 \pm 0.050$} & $0.633 \pm 0.043$ & $0.389 \pm 0.044$ \\
    Replay (No Adv) & \underline{$0.665 \pm 0.042$} & $0.454 \pm 0.051$ & \underline{$0.642 \pm 0.027$} & $0.461 \pm 0.046$ \\
    Rule-based Adv  & $0.569 \pm 0.050$ & $0.348 \pm 0.028$ & $0.556 \pm 0.059$ & \bm{$0.293 \pm 0.043$} \\
    \bottomrule
  \end{tabular}}
\end{table}

\vspace{-5pt} 
\subsection{Ablation Study}
\begin{wrapfigure}{l}{0.5\textwidth} 
  \vspace{-15pt} 
  \centering
  \includegraphics[width=0.5\textwidth]{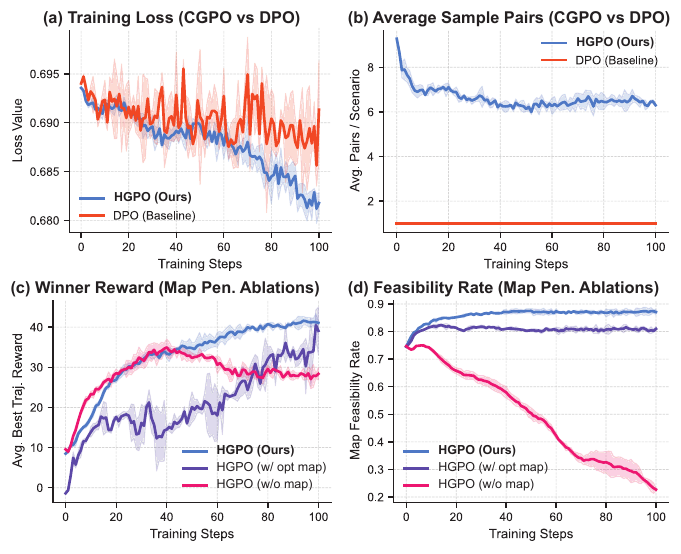}
  \vspace{-10pt}
  \caption{Ablation studies.}
  \vspace{-10pt}
  \label{Fig_ablation_curves}
\end{wrapfigure}
\vspace{-5pt} 
We conduct ablation studies to validate key design choices. 
Fig.~\ref{Fig_ablation_curves}(b) highlights the superior sample efficiency of HGPO over DPO. By constructing multiple preference pairs from a group of samples, HGPO utilizes much more information per scenario, leading to a more stable training loss and faster convergence (Fig.~\ref{Fig_ablation_curves}(a)).
We also validate our approach of treating map compliance as a precondition. Figs.~\ref{Fig_ablation_curves}(c,d) show that this maintains a high map feasibility rate of nearly 90\% and a higher reward throughout training. In contrast, removing the map constraint leads to a collapse in feasibility, as the model learns to exploit off-road shortcuts. Using a weighted penalty improves feasibility but remains suboptimal compared to our HGPO. This confirms that decoupling hard constraints from soft preferences is critical for generating valid trajectories. 
See more results in Appendix~\ref{ss:ablation_appendix}.

\vspace{-8pt}
\section{Conclusion}
\vspace{-8pt}
This work introduced SAGE, a new paradigm that reframes adversarial scenario generation as a test-time preference alignment problem. By fine-tuning experts on opposing preferences and interpolating their weights at inference, SAGE enables fine-grained, steerable control over the trade-off between adversariality and realism. SAGE not only yields a superior Pareto front in open-loop evaluations but also results in improved driving policies in closed-loop training. Our work establishes a principled foundation for steerable generation and opens several future avenues. First, the current framework could be extended to incorporate a richer set of objectives, such as scenario novelty or complexity. Second, more advanced model merging techniques could potentially uncover better paths in the parameter space. Finally, an automated curriculum where the model is dynamically adapted based on the agent's learning progress could lead to more intelligent adversarial training.


\clearpage
\section*{Acknowledgment}
This research was in part supported by the National Natural Science Foundation of China (Project No. 524B2164, 52232015, and 52125208); the Research Grants Council of the Hong Kong Special Administrative Region, China (Project No. PolyU/15206322 and PolyU/15227424); and Otto Poon Charitable Foundation Smart Cities Research Institute, The Hong Kong Polytechnic University (CD06).

\bibliography{references}
\bibliographystyle{iclr2026_conference}

\clearpage

\section*{Declaration of the use of LLMs in the writing process}
\vspace{-5pt}
During the preparation of this work, the authors used LLMs to improve language clarity and readability. The authors reviewed the content as needed and take full responsibility for the content.

\vspace{-10pt}
\appendix
\startcontents[appendix]
\section*{Appendix}

\vspace{-5pt}
This technical appendix provides comprehensive supplementary materials to the main paper. It is structured as follows: 
Section~\ref{appendix:background} offers a detailed review of essential background concepts. 
Section~\ref{appendix:implementation} outlines the full implementation details of our experiments. 
Section~\ref{sec:theoretical-analysis} presents a rigorous theoretical analysis. 
Finally, Section~\ref{sec:supplementary_results} provides extensive supplementary experimental results.

\vspace{-5pt}
\printcontents[appendix]{}{1}{\section*{Appendix Contents}\vskip-20pt\hrulefill\vskip-15pt}

\clearpage
\section{Background and Preliminary}\label{appendix:background}



\subsection{Preliminary About Adversarial Scenario Generation}\label{app:advgen}

This section first introduces concepts and notation for representing traffic scenarios and agent behaviors. We then formalize the closed-loop adversarial training framework discussed in section \ref{ssec:test_time_control}.

\subsubsection{Traffic Scenario Representation}

We model a traffic scenario $\mathcal{S}$ as a collection of interacting agents operating within a static environment. The environment is defined by a high-definition map $\mathcal{M}$, which contains geometric and semantic information. The set of agents in the scenario is denoted by $\mathcal{A} = \{a_0, a_1, \dots, a_{N-1}\}$, where $a_0$ is the AD agent being tested (ego), and $\{a_i\}_{i=1}^{N-1}$ are the other traffic participants, referred to as other agents.
The state of an agent $a_i$ at timestep $t$ is represented by $s_t^i = (x_t^i, y_t^i, \theta_t^i, v_t^i)$, which includes its 2D position, heading, and speed. The trajectory of an agent $a_i$ over a time horizon $T$ is a sequence of its states, $\tau^i = \{s_0^i, s_1^i, \dots, s_T^i\}$. The collective behavior of all agents in the scenario is described by the set of all trajectories $\mathcal{T} = \{\tau^i\}_{i=0}^{N-1}$. Thus, a complete scenario can be defined as a tuple $\mathcal{S} = (\mathcal{M}, \{\tau^\text{ego}, \tau^1, \dots, \tau^{N-1}\})$. 
In this sense, the scenario context $c$ refers to the historical scenario or the initial scenario to be predicted.
Here, scenarios are sourced from real-world driving logs.
In this work, we consider the ego agent $a_0$ is controlled by an end-to-end driving policy $\pi_\text{ego}$, which directly maps a sequence of observations $o_t$ to a future plan or a low-level action.

\subsubsection{Closed-loop Adversarial Training for Safe Driving}\label{app:adversarial training}

To enhance the robustness of an ego agent's driving policy, we leverage the paradigm of adversarial training \citep{ma2018improved,wachi2019failure}. This approach goes beyond training on a fixed dataset of benign scenarios by dynamically generating challenging scenarios that are specifically tailored to the policy's current weaknesses. 
In the context of reinforcement learning (RL), the standard objective for the ego agent is to learn a policy $\pi_\text{ego}$ that maximizes the expected cumulative reward $\mathcal{J}$ within a given environment, characterized by a transition function $f$:
\begin{equation}
    \max_{\pi_\text{ego}} \mathcal{J}(\pi_\text{ego}, f) = \max_{\pi_\text{ego}} \mathbb{E}_{\tau \sim \pi_\text{ego}, f} \left[ \sum_{t=0}^{T} R(s_t, a_t) \right],
\end{equation}
where $R$ is the environment reward function and the trajectory $\tau$ is sampled by executing policy $\pi_\text{ego}$ in the environment $f$.
Adversarial training extends this into a min-max optimization problem. The adversary actively modifies the environment's transition function to $f_\text{adv}$ to minimize the ego agent's reward, while the ego agent simultaneously learns to maximize its reward in this worst-case environment:
\begin{equation}
    \max_{\pi_\text{ego}} \min_{f_\text{adv} \in \mathcal{F}} \mathcal{J}(\pi_\text{ego}, f_\text{adv}),
\end{equation}
where $\mathcal{F}$ is the space of feasible adversarial environments. The inner loop, $\min_{f_\text{adv}}$, corresponds to the adversarial scenario generation process, and the outer loop $\max_{\pi_\text{ego}}$ is the policy optimization step, where $\pi_\text{ego}$ is updated using an RL algorithm to overcome the challenges posed by $f_\text{adv}$.
This formulation can lead to a closed-loop and iterative training pipeline \citep{zhang2023cat}. The policy is continuously improved by training on adversarial scenarios generated on-the-fly against its most recent version. This co-evolution of the agent and environment forces the policy to learn generalizable and robust driving skills, rather than overfitting to a static set of scenarios \citep{anzalone2022end}. The efficiency and quality of the generator are therefore critical to the success of the pipeline.

\subsection{Detailed Related Work on Adversarial Scenario Generation}
\label{sec:related_work_appendix}

The generation of safety-critical scenarios is crucial for testing and enhancing the robustness of autonomous driving (AD) systems \citep{xu2022safebench,ding2023survey, nie2025exploring}. In recent years, several distinct paradigms have emerged to automate this process, each with its own strengths and limitations. In this section, we provide a comprehensive review of these paradigms, focusing on \textit{Reinforcement Learning-based, Diffusion-based, and Optimization/Sampling-based methods}. 
We position our work within this landscape, addressing key unresolved challenges of existing paradigms.

\paragraph{RL-based Generation.}
RL has been widely adopted to generate adversarial scenarios by training an adversary agent to maximize a reward function correlated with the failure of the ego policy \citep{wachi2019failure,kuutti2020training,feng2021intelligent, feng2023dense,chen2024frea,ransiek2024goose,qiu2025aed}. In this paradigm, the ego agent is treated as part of the environment, and the adversary learns complex, non-intuitive behaviors to exploit its weaknesses. A significant advantage of RL-based methods is their inherent suitability for closed-loop training, enabling the ego agent to be hardened against an adapting adversary \citep{kuutti2020training, ransiek2024goose}.
However, this approach suffers from two major drawbacks. First, \textit{limited transferability}: the adversarial policy is tightly coupled with the specific ego agent it was trained against. A scenario generated for one AD stack may not be challenging for another, necessitating costly retraining for each new system under test \citep{ransiek2024adversarial, ding2020learning}. This makes it ill-suited for large-scale, general-purpose testing. Second, \textit{unrealistic behaviors}: RL agents are prone to reward hacking, where they discover unrealistic or physically implausible strategies to maximize the reward, such as erratic movements or violating basic traffic rules, unless the reward function is meticulously engineered \citep{kuutti2020training, qiu2025aed}.

\paragraph{Diffusion-based Generation.}
Leveraging their success in high-fidelity image and video synthesis, diffusion models have recently been applied to traffic scenario generation \citep{zhong2023language, xie2024advdiffuser, xu2025diffscene}. These methods learn the distribution of real-world traffic data and can generate realistic, diverse, and controllable scenarios. By incorporating controllability during the reverse diffusion process, these models can be steered to produce safety-critical events, such as collisions or near-misses, based on textual descriptions \citep{zhong2023language} or predefined objective functions \citep{xie2024advdiffuser,chang2024safe,xu2025diffscene}.
This controllability is primarily achieved with guidance and conditioning. Diffusion guidance is strong in control but slow, and the other is statistically true but weak in adjustability.
This paradigm excels in generating realistic and highly controllable scenarios. Moreover, these methods are generally policy-agnostic, as the generative model is trained on real-world data rather than against a specific ego agent, leading to better transferability. The primary limitation of diffusion-based methods is their \textit{significant computational overhead}. The iterative denoising process is computationally expensive and slow, which makes it currently unsuitable for closed-loop adversarial training, where scenarios need to be generated on-the-fly to continuously challenge and improve the ego agent's policy.

\paragraph{Optimization and Sampling-based Generation.}
This category of methods focuses on perturbing existing real-world scenarios or sampling from a learned prior to generate adversarial variants (see section \ref{app:advgen}). The search for adversarial parameters is often performed in different spaces, such as the parameter space of vehicle kinematics \citep{hanselmann2022king, cao2022advdo, mei2024bayesian}, the latent space of a generative model such as a VAE \citep{rempe2022generating}, or directly in the trajectory space \citep{wang2021advsim, zhang2023cat, mei2025llm, mei2025seeking}. Especially, CAT \citep{zhang2023cat} falls into this category by modeling adversarial traffic as a probabilistic factorization and sampling trajectories with the highest collision probability.
Their main advantage lies in their computational efficiency, which makes them viable for closed-loop adversarial training frameworks. They are also typically policy-agnostic and thus highly transferable. Nevertheless, these methods often struggle with \textit{controllability and realism}. The optimization or sampling process, if not properly constrained, can push trajectories into unrealistic regimes. As noted in our motivation, methods like CAT can produce overly aggressive yet physically implausible trajectories, such as vehicles spinning in place, which limits their utility for training robust and realistic AD policies.

\paragraph{Summary.}
Existing paradigms for adversarial scenario generation present a fundamental dilemma between \textit{generalizability}, \textit{efficiency} and \textit{controllability}. Reinforcement Learning (RL) methods, for instance, can discover complex adversarial strategies but are often coupled to a specific AD policy and prone to reward hacking, leading to unrealistic behaviors \citep{feng2023dense, qiu2025aed}. Conversely, diffusion-based generative models offer superior generalizability and fine-grained control \citep{zhong2023language, xie2024advdiffuser,xu2025diffscene}, yet their significant computational overhead renders them impractical for on-the-fly generation required in closed-loop settings. A promising avenue lies in optimization or sampling-based methods, which operate on perturbing a learned naturalistic driving prior \citep{wang2021advsim,hanselmann2022king,zhang2023cat}. They are efficient and generalizable for closed-loop training, but achieving meaningful control over the generation process remains a challenge. A new paradigm that can inherit the merits of both is desired.

\section{Detailed Implementation}\label{appendix:implementation}

\subsection{Experimental Setups}
\label{sec:appendix_exp_setups}

This section provides a detailed description of our experimental setups, including the dataset, simulation environment, backbone model, baseline methods, and our fine-tuning and RL training procedures, to ensure the reproducibility of our results.

\paragraph{Datasets.}
Our experiments are conducted using the Waymo Open Motion Dataset (WOMD)~\citep{ettinger2021large}, which is consistent with prior works~\citep{zhang2023cat,stoler2025seal}. WOMD provides a large collection of real-world, complex, and interactive traffic scenarios. From the standard set of 500 scenarios used in \citet{zhang2023cat}, we select a filtered subset of 370 scenarios for our experiments. This subset is curated by excluding scenarios where the adversarial agent's initial trajectory is either too short for meaningful interaction or already violates map boundaries, ensuring a high-quality and challenging cases for adversarial generation. This dataset is utilized for both generating adversarial scenarios and for fine-tuning our generative models.

\paragraph{Environment.}
All our experiments are performed in the \texttt{MetaDrive}~\citep{li2022metadrive} simulator, a lightweight and efficient simulation platform for AD research. Following the setup in~\citet{zhang2023cat}, we use \texttt{MetaDrive} to reconstruct scenarios from the WOMD dataset. This allows for closed-loop evaluation, where the ego agent can react dynamically to the behavior of the adversarially controlled traffic participants. The simulation runs at a frequency of 10Hz. To ensure a fair and direct comparison with prior work, we adopt the specific environmental and agent training parameters from~\citet{zhang2023cat}. The key hyperparameters for the scenario generation process used for evaluation are detailed in Tab.~\ref{tab:cat_hyperparams}.

\begin{table}[h!]
\centering
\begin{minipage}{0.48\textwidth}
    \centering
    \caption{Hyperparameters for the open-loop scenario generation evaluation.}
    \label{tab:cat_hyperparams}
    \begin{tabular}{ll}
    \toprule
    Hyper-parameter & Value \\
    \midrule
    Scenario Horizon $T$ & 9s \\
    History Horizon $t$ & 1s \\
    \# of OV candidates $M$ & 32 \\
    \# of EV candidates $N$ & 1 \\
    Penalty Factor $\alpha$ & 0.99 \\
    Policy Training Steps & $1 \times 10^6$ \\
    \bottomrule
    \end{tabular}
\end{minipage}\hfill
\begin{minipage}{0.48\textwidth}
    \centering
    \caption{Hyperparameters for training the TD3 RL agent.}
    \label{tab:td3_hyperparams}
    \begin{tabular}{ll}
    \toprule
    Hyper-parameter & Value \\
    \midrule
    Discounted Factor $\gamma$ & 0.99 \\
    Train Batch Size & 256 \\
    Critic Learning Rate & $3 \times 10^{-4}$ \\
    Actor Learning Rate & $3 \times 10^{-4}$ \\
    Policy Delay & 2 \\
    Target Network $\tau$ & 0.005 \\
    \bottomrule
    \end{tabular}
\end{minipage}
\end{table}

\paragraph{Pretrained Backbone Model.}
The foundation of our scenario generation framework is a probabilistic motion generation model. To ensure a fair comparison with the baselines, we employ \texttt{DenseTNT}~\citep{gu2021densetnt}, the same backbone model used in CAT~\citep{zhang2023cat}. \texttt{DenseTNT} is one of the state-of-the-art goal-conditioned motion prediction model that generates a diverse set of future trajectory proposals along with their corresponding confidence scores. The pretraining task is to train the model to predict future trajectories using historical observations and scene context, supervised by human experts. This follows the standard imitation learning \citep{gu2021densetnt}.
This pretrained model serves as the learned traffic prior that SAGE fine-tunes and that all baselines (except for the rule-based one) leverage to generate adversarial behaviors.
To enable end-to-end preference optimization, we modify the decoding procedure of \texttt{DenseTNT} by outputting the top-k candidates directly, bypassing the non-differentiable non-maximum suppression (NMS) sampling process.
Furthermore, our backbone-agnostic framework is generalizable and applicable to any probabilistic motion forecasting (generation) backbone that can generate multimodal trajectories with their probabilities (predicted logits). We take \texttt{DenseTNT} as an example in the experiments. {We adopt the publicly released pretrained model checkpoint from the CAT's repository for experiments to ensure a fair comparison.}

\paragraph{Baselines.}
We compare SAGE against a comprehensive set of strong baselines from recent literature on safety-critical (adversarial) scenario generation.

\begin{itemize}
    \item \textbf{CAT}~\citep{zhang2023cat}: A state-of-the-art method that generates adversarial scenarios by resampling trajectories from a learned traffic prior (\texttt{DenseTNT}) to maximize a collision-based objective in a closed-loop setting. {Following the original implementation, CAT employs a heuristic objective. It generates a set of candidate trajectories $\mathcal{T}$ from the \texttt{DenseTNT} prior and selects the adversarial trajectory $\tau^*_{\text{adv}}$ that maximizes the collision probability with the ego vehicle's planned path $\tau_{\text{ego}}$:
    $ \tau^*_{\text{adv}} = \arg\max_{\tau_i \in \mathcal{T}} P(\text{coll}(\tau_i, \tau_{\text{ego}})) $.
    }
    \item \textbf{SEAL}~\citep{stoler2025seal}: A recent approach that employs a skill-enabled adversary. It combines a learned objective function with a reactive, skill-based policy to generate more realistic and human-like adversarial behaviors than CAT. 
    {We utilize the official open-sourced implementation of SEAL. Its objective is based on a learned scoring network, $\pi_{\text{score}}$, which is trained to predict two criticality metrics: collision closeness ($f_{\text{coll}}$) and ego behavior deviation ($f_{\text{diff}}$). The final adversarial objective is to select trajectories that maximize the sum of these predicted scores:
    $ \max \mathbb{E}_{\tau_{\text{adv}}} [f_{\text{coll}}(\tau_{\text{adv}}, \tau_{\text{ego}}) + f_{\text{diff}}(\tau_{\text{adv}}, \tau_{\text{ego}})] $.
    This learned objective, combined with a skill-based policy, aims to produce more nuanced and realistic adversarial behaviors.}
    \item \textbf{KING}~\citep{hanselmann2022king}: A gradient-based approach that perturbs adversarial trajectories by backpropagating through a differentiable kinematic bicycle model to induce collisions. For a fair comparison, we re-implemented its core mechanism within our experimental environment, as detailed in our supplementary code. The optimization is performed for 100 steps with a learning rate of 0.005 using an Adam optimizer.
    \item \textbf{AdvTrajOpt}~\citep{zhang2022adversarial}: This method models adversarial scenario generation as a trajectory optimization problem and solves it using Projected Gradient Descent (PGD). We adapted its PGD-based trajectory perturbation logic into our environment, ensuring it operates on the same initial conditions and interacts with the same ego agent. Our implementation uses 50 PGD steps with a learning rate of 0.2.
    {For both \textbf{KING} and \textbf{AdvTrajOpt}, we use the same multi-objective reward function as SAGE, defined in Eq. (2) to ensure a fair comparison. To align with their goal of generating highly challenging scenarios, we set the weights to a fixed point on the Pareto front that heavily prioritizes adversariality over realism, with $w_{\text{adv}}=0.9$ and $w_{\text{real}}=0.1$. This provides a direct comparison of the generation quality under a shared, aggressive objective.}
    \item {\textbf{GOOSE}~\citep{ransiek2024goose}: A goal-conditioned RL framework for safety-critical scenario generation. GOOSE models the entire trajectory using Non-Uniform Rational B-Splines (NURBS), and the RL agent learns to manipulate the NURBS control points to achieve predefined adversarial goals. For our experiments, we use the pretrained model checkpoint released by SEAL's repository for the WOMD dataset to ensure a fair comparison.}
    \item \textbf{Rule-based}: A heuristic baseline consistent with the one described in Appendix F of ~\citet{zhang2023cat}. This method generates an adversarial path by selecting waypoints from the ego vehicle's future path, mixing them with the adversary's original waypoints, and fitting a smooth Bezier curve through them to create an aggressive cut-in or blocking maneuver.
\end{itemize}

To justify the superiority of the test-time alignment scheme proposed in section \ref{ssec:test_time_control}, we further compare two different alignment methods as described below:
\begin{itemize}
    \item \textbf{Trajectory mixing}: The output is generated by directly mixing trajectories from the two expert models: $\tau(\lambda) = (1-\lambda)\tau_{\text{real}} + \lambda\tau_{\text{adv}}$. This is an instantiation of the output ensembling method discussed in Proposition \ref{prop:weight_vs_output}.
    \item \textbf{Multi-objective decoding}~\citep{shi2024decoding}: Inspired by \citet{shi2024decoding}, we mix the predicted logits of two experts. Following the architecture of DenseTNT \citep{gu2021densetnt}, we use the logits of all candidate goal points as the approximation of trajectory probability. This gives a mixed goal score $g(c;\lambda)=(1-\lambda)g(c;\theta_\text{real})+\lambda g(c;\lambda_{\text{adv}})$. Then, the final trajectory is generated by decoding $g(c;\lambda)$ with the highest score value using the pretrained trajectory decoder. This scheme mimics the token prediction stage of LLMs.
\end{itemize}
Compared to our weight-space mixing, both of them are action-space mixing strategies, thus serving as direct counterparts to justify Proposition \ref{prop:weight_vs_output}.

\paragraph{HGPO Fine-tuning.}
Our preference fine-tuning approach has several hyperparameters. For the number of trajectory candidates, we let the policy model generate a group of $M=32$ trajectories used for reward computing. A reward difference exceeding a margin of {$\delta_m=0.2$} is regarded as a valid signal. We sample up to 8 pairs per scenario for the group-based learning process. The temperature parameter is set to $\beta=0.05$ to regularize the alignment. We use the AdamW optimizer with a learning rate of $1 \times 10^{-5}$.
We train two specialized models by adjusting the weights in the preference reward calculation:
\begin{itemize}
    \item \textbf{Adversarial Model}: We set the adversarial weight $w_\text{adv}=0.9$ and the realism weight $w_\text{real}=0.1$. This configuration strongly encourages the model to prioritize generating trajectories that result in safety-critical interactions.
    \item \textbf{Realism Model}: We set $w_\text{adv}=0.1$ and $w_\text{real}=0.9$. This configuration prioritizes kinematic and behavioral plausibility, encouraging the model to generate more human-like and realistic, albeit potentially less aggressive trajectories.
\end{itemize}
The models are trained for 200 epochs on the 370 selected training scenarios. {Hyperparameters for HGPO are provided in Tab. \ref{tab:hyperparams_sage}.}

\paragraph{Details of the System Under Test.}
To thoroughly evaluate the generated adversarial scenarios, we test them against a diverse set of ego agent policies, representing different levels of driving capability and design philosophies of AD systems.
\begin{itemize}
    \item \textbf{Replay Policy}: This is the most basic agent, which deterministically replays the ground-truth trajectory of the ego vehicle as recorded in the WOMD dataset. It serves as a non-reactive baseline to measure the raw difficulty of a scenario.
    \item \textbf{Intelligent Driver Model (IDM)}: A standard, rule-based car-following model widely used in traffic simulation. The IDM agent maintains a safe following distance and adjusts its speed based on the vehicle directly ahead. It is also equipped with rule-based lane change and overtake maneuvers to enable lateral control. 
    \item \textbf{Trained RL Agent}: A standard TD3 agent~\citep{fujimoto2018addressing} trained via RL on the original, non-adversarial WOMD scenarios. The observation input of the RL agent contains ego states, navigation information and surrounding information provided by 2D LiDAR sensor. This agent represents a competent end-to-end learning-based policy but is naive to adversarial behaviors, allowing us to measure the impact of adversarial training.
    \item \textbf{Rule-based Expert}: A sophisticated and safety-oriented rule-based policy. The rule-based policy operates on perfect state observations, which directly provide the ground-truth positions of surrounding vehicles. This expert agent employs a hierarchical control structure. At the low level, it uses separate PID controllers for lateral and longitudinal control. Critically, it features a proactive safety layer that uses a kinematic bicycle model to forecast its own trajectory and the future states of surrounding vehicles. It continuously checks for potential future collisions by performing oriented bounding box intersection tests. If a high-risk situation is predicted, it activates the collision avoidance strategy, which is to reduce the vehicle speed. This expert serves as a challenging SUT, representing a robust, well-engineered AD system.
\end{itemize}

\paragraph{Open-loop Evaluation.}
The open-loop evaluation protocol is designed to benchmark the effectiveness and characteristics of different scenario generation methods against a target ego agent behavior, ensuring a fair comparison. The process follows a two-stage procedure \citep{zhang2023cat} for each scenario and each SUT:
\begin{enumerate}
    \item \textbf{Ego Trajectory Collection}: First, the SUT (e.g., the Rule-based Expert or the Trained RL Agent) is executed in the original, unmodified WOMD scenario. Its complete trajectory over the 9-second horizon is recorded. This step establishes a consistent and deterministic behavioral target for all adversarial generation methods.
    \item \textbf{Adversarial Attack and Re-simulation}: The environment is then reset to the initial state of the same scenario. The adversarial generation method under evaluation uses the ego trajectory recorded in the first stage as a target to generate an optimal adversarial trajectory for a designated background vehicle. Finally, the scenario is re-simulated with the SUT and the newly generated adversarial trajectory.
\end{enumerate}
During the re-simulation, we measure key metrics, including the success rate of the attack (i.e., collision rate), the adversarial reward (measuring the severity of the interaction), and various realism metrics that penalize kinematically implausible behaviors and map violations. In addition, we calculate the Wasserstein distance (WD) using the kinematic profiles (speed, yaw rate, and acceleration) of the generated and logged scenarios, serving as a distributional metric.
This open-loop setup isolates the performance of the generator by holding the ego agent's reactive behavior constant across all compared methods.

Note that the measure of adversarial reward is different for the replay policy and others (Tables \ref{tab:results_replay} and \ref{tab:results_rl}). For interactive policies, this reward is calculated using the resulting ego trajectory after interacting with the adversarial agent, i.e., after the re-simulation step.

\paragraph{Closed-loop Adversarial Training with Dual Curriculum Learning.}
The ultimate goal is to integrate SAGE into a closed-loop training pipeline to progressively improve the capabilities of the policy $\pi_{\text{ego}}$, as discussed in section \ref{ssec:test_time_control} and \ref{app:adversarial training}. In this setup, the scenario generator and the RL agent (a TD3 agent with hyperparameters detailed in Tab.~\ref{tab:td3_hyperparams}) are trained concurrently. A dual curriculum is designed to alleviate catastrophic forgetting, where $\pi_{\text{ego}}$ is overly optimized for corner cases and fails in normal conditions.

\begin{itemize}
    \item \textbf{SAGE in RL Loops.}
    The adversarial training follows the min-max structure in section \ref{app:adversarial training}. At each iteration, we solve the inner-loop optimization by generating a new adversary tailored to the current policy $\pi_{\text{ego}}^{(i)}$. Instead of having the adversary find a single worst case, we define the adversarial environment $f_{\text{adv}}$ from the mixed policy $\pi_{\theta(\lambda^{(i)})}$. This process samples from a controllable adversarial priors:
    $\tau_{\text{adv}}^{(i+1)} \sim \text{Generate}(\pi_{\theta(\lambda^{(i)})},\pi_{\text{ego}}^{(i)}, c^{(i)})$, where $c^{(i)}$ is the current context and $\lambda^{(i)}$ is the interpolation weight at iteration $i$. The ego is then trained for several steps, forming the outer loop:
    $\pi_{\text{ego}}^{(i+1)} \leftarrow \text{Update}_{\text{RL}}(\pi_{\text{ego}}^{(i)}, \tau_{\text{adv}}^{(i+1)})$.
    At the end of each training episode, the RL agent's trajectory is recorded and stored in a buffer. When an adversarial episode is initiated, the scenario generator uses the most recent trajectories from this buffer to generate a new adversarial scenario specifically tailored to the agent's current policy and its emergent weaknesses.
    This iterative co-evolution forces the ego agent to continuously adapt. Unlike training on a fixed set of scenarios, SAGE generates a dynamically shifting distribution of scenarios. This enhances the generalizability of $\pi_{\text{ego}}$, as it learns to handle a wide range of adversarial behaviors rather than overfitting to a narrow set of hard but unrealistic cases.

    \item \textbf{Dual Curriculum.}
    As discussed in section \ref{ssec:test_time_control}, SAGE enables a natural implementation of a scenario curriculum along two axes: intensity and frequency. Exposing a nascent policy to frequent aggressive scenarios from the start can destabilize RL training. We mitigate this by smoothly increasing the challenge as the agent becomes more competent.
    (1) \textit{Intensity:} The adversarial intensity is controlled by annealing $\lambda$ over the course of training. 
    Let $T_{\text{total}}$ be the total number of training steps. At timestep $t$, $\lambda^{(t)}$ is given by:
    $\lambda^{(t)} = \lambda_{\text{start}} + (\lambda_{\text{end}} - \lambda_{\text{start}}) \cdot \min\left(\frac{t}{T_{\text{total}}}, 1.0\right).$
    This ensures the ego agent first learns to handle common deviations before being confronted with safety-critical maneuvers.
    (2) \textit{Frequency:} Simultaneously, we control how often the ego agent encounters an adversarial scenario. At the end of each episode, we decide whether to generate an adversary with a probability $p_{\text{adv}}^{(t)}$, which is given by:
    $p_{\text{adv}}^{(t)} = p_{\text{start}} + (p_{\text{end}} - p_{\text{start}}) \cdot \min\left(\frac{t}{T_{\text{total}}}, 1.0\right)$, where $p_{\text{start}}$ and $p_{\text{end}}$ are initial and final values.
    Specifically, it starts from a low probability of 0.1 and achieves 0.9 by the halfway point. Second, the weight of the adversarial objective $w_\text{adv}$ is annealed from an initial value of 0.5 to a final value of 1.0.
\end{itemize}

After several training steps, the RL agent is evaluated on both normal and adversarial scenarios (with a fixed $w_\text{adv}=1.0$ for the latter). The performance is measured by collision rate, route completion rate, total environment reward, and total cost. The environment reward is the cumulative sum of rewards throughout an episode, reflecting overall task performance. It includes a dense reward for progress along the planned route, a positive reward for successfully reaching the destination, and significant penalties for failures like crashing or going off the road. The total cost is a pure safety metric, representing the cumulative sum of costs from safety violations during an episode. A cost of 1.0 is incurred for each instance of a crash or an out-of-road event. An episode with zero total cost is considered perfectly safe, making this a direct measure of the agent's safety failures. In this experiment, we use 300 randomly selected scenarios for RL training, and 70 scenarios are used for evaluation. To evaluate the superiority of SAGE in closed-loop training, we compare it with several different scenario generators in the same training environment, including Replay, CAT, and a rule-based generator.

\subsection{Detailed Formulation of Reward and Penalty Functions}
\label{app:reward_formulation}

This section provides the detailed mathematical formulations for the functions used to evaluate and filter generated trajectories. These include the adversarial reward ($R_{\text{adv}}$), the realism penalty ($P_{\text{real}}$), and the binary map compliance function ($F(\tau, \mathcal{M})$).
{For full reproducibility, we provide a list of all hyperparameters used in reward functions in Table~\ref{tab:hyperparams_sage}.}

\begin{table}[!htbp]
\centering
\caption{{Hyperparameters for reward functions and HGPO.}}
\label{tab:hyperparams_sage}
\resizebox{0.7\textwidth}{!}{%
\begin{tabular}{@{}lll@{}}
\toprule
\textbf{Category} & \textbf{Hyperparameter} & \textbf{Value} \\
\midrule
\multirow{8}{*}{{HGPO Fine-tuning}} 
 & Optimizer & AdamW \\
 & Learning Rate & $1.0 \times 10^{-5}$ \\
 & Training Epochs & 200 \\
 & Batch Size & 1 (scenario) \\
 & Group Size (N) & 32 \\
 & Max Pairs per Group (K) & 8 \\
 & Preference Margin ($\delta_m$) & 0.2 \\
 & Beta ($\beta$) & 0.05 \\
\midrule
\multirow{4}{*}{{Preference Reward}} 
 & {Adversarial Expert ($\pi_{\theta_\text{adv}}$)} & \\
 & $w_\text{adv}$ / $w_\text{real}$ & 0.9 / 0.1 \\
 & {Realism Expert ($\pi_{\theta_\text{real}}$)} & \\
 & $w_\text{adv}$ / $w_\text{real}$ & 0.1 / 0.9 \\
\midrule
\multirow{3}{*}{{$R_\text{adv}$ Components}} 
 & Collision Reward Scale & 10.0 \\
 & Proximity Reward Scale & 1.0 \\
 & Proximity Decay Rate & 0.2 \\
\midrule
\multirow{8}{*}{{$P_\text{real}$ Components}} 
 & $w_\text{turn}$  & 5.0 \\
 & $w_\text{stop-turn}$  & 3.0 \\
 & Kinematic Penalty Factor (Accel) & 5.0 \\
 & Kinematic Penalty Factor (Ang. Vel.) & 5.0 \\
 & Accel. Comfort Zone & 7.0 m/s$^2$ \\
 & Lat. Accel. Comfort Zone & 6.0 m/s$^2$ \\
 & Ang. Vel. Comfort Zone & 0.8 rad/s \\
 & Max Reasonable Turn & $\pi$ radians \\
\midrule
\multirow{2}{*}{{Map Feasibility}} 
 & Cross Solid Line Penalty & 50.0 \\
 & Crash Object Penalty & 10.0 \\
\bottomrule
\end{tabular}}
\end{table}

\paragraph{Adversarial Reward ($R_{\text{adv}}$)}

The adversarial reward $R_{\text{adv}}(\tau_{\text{adv}}, \tau_{\text{ego}})$ is designed to quantify the threat level of a generated adversarial trajectory $\tau_{\text{adv}}$ with respect to the ego vehicle's recorded (re-simulated) future trajectory $\tau_{\text{ego}}$. A high reward should correspond to a high-risk scenario. The function provides a dense signal by rewarding both direct collisions and near-miss events.

Let the adversarial and ego trajectories be represented as sequences of 2D positions over $T$ future timesteps:
\begin{align*}
\tau_{\text{adv}} &= \{\mathbf{p}^{\text{adv}}_t\}_{t=1}^T, \quad \mathbf{p}^{\text{adv}}_t \in \mathbb{R}^2 \\
\tau_{\text{ego}} &= \{\mathbf{p}^{\text{ego}}_t\}_{t=1}^T, \quad \mathbf{p}^{\text{ego}}_t \in \mathbb{R}^2
\end{align*}
We define $B(\mathbf{p}, \psi, l, w)$ as the oriented bounding box (polygon) of a vehicle at position $\mathbf{p}$ with yaw $\psi$, length $l$, and width $w$. A collision occurs at timestep $t$ if the bounding boxes of the two vehicles intersect: $B(\tau_{\text{adv}, t}) \cap B(\tau_{\text{ego}, t}) \neq \emptyset$. Let $t_{\text{coll}}$ be the first timestep at which a collision occurs:
$$
t_{\text{coll}} = \min \{ t \in [1, T] \mid B(\tau_{\text{adv}, t}) \cap B(\tau_{\text{ego}, t}) \neq \emptyset \}
$$
If no collision occurs, we define $t_{\text{coll}} = \infty$.

The adversarial reward is then defined as a piecewise function:
\begin{equation}
R_{\text{adv}}(\tau_{\text{adv}}, \tau_{\text{ego}}) =
\begin{cases}
    C_{\text{coll}} \left(1 - \frac{t_{\text{coll}}}{T}\right) & \text{if } t_{\text{coll}} \leq T \\
    C_{\text{prox}} \exp(-\lambda_{\text{prox}} \cdot d_{\min}) & \text{if } t_{\text{coll}} > T
\end{cases}
\end{equation}
where $C_{\text{coll}}$ is a large constant reward for achieving a collision (e.g., $10.0$). The term $(1 - t_{\text{coll}}/T)$ incentivizes earlier collisions, as they are typically more critical and harder to avoid. $d_{\min} = \min_{t \in [1, T]} \|\mathbf{p}^{\text{adv}}_t - \mathbf{p}^{\text{ego}}_t\|_2$ is the minimum Euclidean distance between the vehicle centers throughout the horizon. $C_{\text{prox}}$ is a scaling factor for the proximity reward (e.g., $1.0$), and $\lambda_{\text{prox}}$ is a decay rate (e.g., $0.2$). This term provides a smooth, dense reward signal for near-misses, encouraging the agent to generate trajectories that are spatially close to the ego vehicle even if they do not result in a collision.

\paragraph{Realism Penalty ($P_{\text{real}}$)}
The realism penalty, $P_{\text{real}}(\tau)$, is designed to discourage trajectories that are physically implausible or exhibit unnatural driving behavior. It is composed of two sub-penalties: a kinematic penalty $P_{\text{kin}}$ and a behavioral penalty $P_{\text{beh}}$.

Let a trajectory $\tau$ be a sequence of states $\{\mathbf{p}_t, \psi_t\}_{t=1}^T$, where $\psi_t$ is the yaw angle at timestep $t$. We first compute the primary kinematic quantities with a time interval of $\Delta t=0.1s$:
\begin{itemize}
    \item Speed: $s_t = \|\mathbf{p}_t - \mathbf{p}_{t-1}\|_2 / \Delta t$
    \item Longitudinal Acceleration: $a_{\text{long}, t} = (s_t - s_{t-1}) / \Delta t$
    \item Angular Velocity: $\omega_t = (\psi_t - \psi_{t-1}) / \Delta t$, where the heading $\psi$ is unwrapped to handle angle discontinuities.
    \item Lateral Acceleration: $a_{\text{lat}, t} = s_t \cdot \omega_t$
\end{itemize}
We define a smooth penalty function $S(x, x_{\text{thresh}}) = \log(1 + \exp(|x| - x_{\text{thresh}}))$, which penalizes values of $|x|$ that exceed a threshold $x_{\text{thresh}}$.

Kinematic Penalty ($P_{\text{kin}}$). This penalty discourages violations of physical limits.
\begin{align}
\label{eq:p_kin}
P_{\text{kin}}(\tau) = \frac{1}{T} \sum_{t=1}^T \Big(& w_a \left[ S(a_{\text{long}, t}, a_{\text{max}}) + S(a_{\text{lat}, t}, a_{\text{lat,max}}) \right] \nonumber \\
&+ w_\omega S(\omega_t, \omega_{\text{max}}) \Big)
\end{align}
where $w_a$ and $w_\omega$ are weight factors, and $\{a_{\text{max}}, a_{\text{lat,max}}, \omega_{\text{max}}\}$ are comfort thresholds for longitudinal acceleration (e.g., $7.0 \text{ m/s}^2$), lateral acceleration (e.g., $6.0 \text{ m/s}^2$), and angular velocity (e.g., $0.8 \text{ rad/s}$), respectively.

Behavioral Penalty ($P_{\text{beh}}$). This penalty targets unnatural maneuvers, such as spinning in place or executing excessively sharp turns over the trajectory horizon.
\begin{equation}
P_{\text{beh}}(\tau) = w_{\text{turn}} S(\Delta\psi_{\text{total}}, \Delta\psi_{\text{max}}) + \frac{w_{\text{stop-turn}}}{T} \sum_{t=1}^{T} \frac{|\omega_t|}{s_t + \epsilon}
\end{equation}
where:
$\Delta\psi_{\text{total}} = |\psi_T - \psi_1|$ is the total change in heading over the trajectory. The penalty discourages total turns exceeding a reasonable maximum $\Delta\psi_{\text{max}}$ (e.g., $\pi$ radians).
The second term penalizes high angular velocity at low speeds, a characteristic of unrealistic spinning maneuvers. $w_{\text{turn}}$ and $w_{\text{stop-turn}}$ are weight factors, and $\epsilon$ is a small constant to prevent division by zero.

The total realism penalty is the sum of these components: $P_{\text{real}}(\tau) = P_{\text{kin}}(\tau) + P_{\text{beh}}(\tau)$.

\paragraph{Map Compliance Function ($F(\tau, \mathcal{M})$)}
Unlike the continuous-valued rewards and penalties, map compliance is treated as a hard, binary constraint. The function $F(\tau, \mathcal{M})$ returns $1$ if the trajectory $\tau$ is feasible with respect to the map $\mathcal{M}$, and $0$ otherwise. A trajectory is deemed infeasible if it violates any of the following conditions:

\begin{enumerate}
    \item \textbf{Road Boundary Violation}: The trajectory must remain within the drivable area. Let $\mathcal{L}_{\text{impassable}} \subset \mathcal{M}$ be the set of impassable map polylines (e.g., road edge boundaries). The trajectory is infeasible if the vehicle's bounding box $B(\tau_t)$ intersects with any of these lines at any time.
    $$
    \exists t \in [1, T], \exists l \in \mathcal{L}_{\text{impassable}} \quad \text{s.t.} \quad B(\tau_t) \cap l \neq \emptyset \implies F(\tau, \mathcal{M}) = 0
    $$
    \item \textbf{Static Object Collision}: The trajectory must not collide with other static or near-static background vehicles. Let $\mathcal{O}_{\text{static}}$ be the set of non-player vehicles in the scenario, each with their own future trajectory of bounding boxes $B(\tau_{\text{obj}, t})$.
    $$
    \exists t \in [1, T], \exists \tau_{\text{obj}} \in \mathcal{O}_{\text{static}} \quad \text{s.t.} \quad B(\tau_t) \cap B(\tau_{\text{obj}, t}) \neq \emptyset \implies F(\tau, \mathcal{M}) = 0
    $$
\end{enumerate}
If none of these violation conditions are met, the trajectory is considered feasible, and $F(\tau, \mathcal{M}) = 1$. This strict separation of feasibility from preference allows the optimization to focus on learning meaningful trade-offs within the space of valid behaviors. Computationally, these checks are implemented efficiently using spatial data structures like STR-trees and broad-phase/narrow-phase collision detection.

\subsection{Weight Extrapolation via Preference Vectors}
\label{app:extrapolation}

While weight interpolation effectively traces the Pareto front \textit{between} the two expert models, it cannot generate scenarios that are more extreme than what the experts themselves can produce. To overcome this, we introduce a post-hoc extrapolation technique inspired by the model editing \citep{ilharco2022editing,liu2025peo}. 
This section provides a more detailed discussion on this weight extrapolation technique introduced in section~\ref{ssec:test_time_control}. The core idea is to reframe the model merging process in terms of preference (task) vectors, which enables principled extrapolation beyond the space spanned by the expert models.

\paragraph{Preference Vectors as Learned Task Representations.}
We begin by formally defining the preference vectors \citep{ilharco2022editing}. 
These vectors capture the specific parameter updates that involve each preference. 
Given a pretrained reference model $\pi_{\text{ref}}$ with parameters $\theta_{\text{ref}}$, and two expert models, $\pi_{\theta_{\text{adv}}}$ and $\pi_{\theta_{\text{real}}}$, trained to specialize in adversariality and realism respectively, we define their corresponding preference vectors as:
\begin{equation}\label{Eq:task_vector}
    \begin{aligned}
    \Delta_{\text{adv}} &= \theta_{\text{adv}} - \theta_{\text{ref}}, \\
    \Delta_{\text{real}} &= \theta_{\text{real}} - \theta_{\text{ref}}.
\end{aligned}
\end{equation}
Each vector $\Delta$ represents the directional update in the high-dimensional parameter space that adapts the general-purpose reference model to satisfy a specific preference objective (e.g., maximizing $R_{\text{adv-pref}}$). It contains the knowledge required for that particular task.

\paragraph{Equivalence of Model Interpolation and Vector Interpolation.}
The linear interpolation of model weights, as described in Eq.~\ref{eq:weight_soup}, can be shown to be equivalent to applying an interpolated preference vector to the reference model. This is clear by rewriting interpolated parameters $\theta(\lambda)$:
\begin{equation}
    \begin{aligned}
    \theta(\lambda) &= (1-\lambda)\theta_{\text{real}} + \lambda\theta_{\text{adv}}, \\
    &= (1-\lambda)(\theta_{\text{ref}} + \Delta_{\text{real}}) + \lambda(\theta_{\text{ref}} + \Delta_{\text{adv}}), \\
    &= (1-\lambda)\theta_{\text{ref}} + (1-\lambda)\Delta_{\text{real}} + \lambda\theta_{\text{ref}} + \lambda\Delta_{\text{adv}}, \\
    &= \theta_{\text{ref}} + \lambda\Delta_{\text{adv}} + (1-\lambda)\Delta_{\text{real}}.
\end{aligned}
\end{equation}
This shows that creating a mixing of expert models is mathematically identical to starting with the reference model and adding a weighted combination of the preference vectors. This perspective shifts the focus from merging final models to combining the underlying learned skills.

\paragraph{Extrapolation for Pareto Front Improvement.}
The interpolated model $\pi_{\theta(\lambda)}$ is confined to the line segment connecting $\theta_{\text{real}}$ and $\theta_{\text{adv}}$ in the parameter space. This traces out a corresponding path on the Pareto front of objectives, but this path is not guaranteed to be the globally optimal one (will be discussed below). The optimization landscape for the combined objectives may have better solutions that lie outside this segment.
Extrapolation allows us to search for these superior solutions \citep{liu2025peo}. Instead of being limited to convex combinations of the experts, we can form an extrapolated policy $\pi_{\theta_{\text{ext}}}$ by moving beyond the interpolation segment. We construct a new policy by adding a linear combination of preference vectors to an existing point:
\begin{equation}
\theta_{\text{ext}} = \theta_{\text{base}} + \sum_{i \in \{\text{adv, real}\}} \phi_i \Delta_i,
\end{equation}
where $\theta_{\text{base}}$ is a starting point (e.g., $\theta_{\text{ref}}$, one of the experts, or an interpolated $\theta(\lambda)$) and $\phi_i$ are scalar coefficients.
For instance, setting $\lambda=1$ and $\phi_{\text{adv}} > 0$ pushes the model to become even more adversarial than the original $\pi_{\theta_{\text{adv}}}$. This allows for the generation of truly extreme, out-of-distribution scenarios, providing a more rigorous stress test for the ego agent without any additional training.

\paragraph{Intuition from Gradient Optimization.}
To understand why this works, we can draw an analogy to gradient-based optimization, following the insights from \citet{liu2025peo}. Let $R_{\text{adv}}(\theta)$ and $R_{\text{real}}(\theta)$ be the expected preference rewards that the expert models $\pi_{\theta_{\text{adv}}}$ and $\pi_{\theta_{\text{real}}}$ were optimized for, respectively. The fine-tuning process that produced $\theta_{\text{adv}}$ from $\theta_{\text{ref}}$ can be seen as an approximation of moving along the gradient of $R_{\text{adv}}$. Therefore, the resulting preference vector $\Delta_{\text{adv}}$ is closely related to the integrated gradient of the objective function. A first-order approximation suggests:
\begin{equation}
    \Delta_{\text{adv}} = \theta_{\text{adv}} - \theta_{\text{ref}} \approx \eta \cdot \mathbb{E}_{\tau \sim \pi} \left[ \nabla_{\theta} R_{\text{adv}}(\theta) \right],
\end{equation}
where $\eta$ represents an effective learning rate over the entire training process.

Next, the extrapolation step $\sum \phi_i \Delta_i$ is approximately equivalent to taking a new optimization step in a direction defined by a weighted combination of the original objective gradients:
\begin{equation}
    \sum_{i} \phi_i \Delta_i \approx \eta \cdot \mathbb{E}_{\tau \sim \pi} \left[ \nabla_{\theta} \left( \sum_{i} \phi_i R_i(\theta) \right) \right].
\end{equation}

Recall that the interpolated model $\pi_{\theta(\lambda)}$ might have converged to a point in the parameter space that is a local optimum or a saddle point with respect to the true, combined multi-objective landscape. The extrapolation, by adding $\sum \phi_i \Delta_i$, provides a new gradient-like momentum. This momentum, constructed from the gradients of diverse objectives, can help the policy skip out of the local optimum and move towards a region of the parameter space that yields a better trade-off, which is a point on a superior Pareto front. This allows us to generate out-of-distribution scenarios that can be crucial for the safety assessment of AD systems.

\subsection{Pseudocode for Offline Preference Optimization and Online RL Training}

The complete procedures for offline preference optimization and closed-loop RL training are summarized in Algorithms \ref{alg:offline_finetuning} and \ref{alg:closed_loop_training}.

\begin{algorithm}[!htbp]
\caption{Offline Fine-tuning of Expert Models with HGPO}
\label{alg:offline_finetuning}
\footnotesize
\begin{algorithmic}[1]
\State \textbf{Input:} Pre-trained motion model $\pi_{\text{ref}}$, scenarios dataset $\mathcal{D}_{\text{scenarios}}$, expert rewards $R_{\text{adv-pref}}$, $R_{\text{real-pref}}$ (from Eq. \ref{eq:expert model}), group size $N$, pairs per group $K$, margin $\delta_m$, DPO parameter $\beta$.

\Function{FineTuneExpert}{$R_{\text{expert}}$}
    \State Initialize policy $\pi_{\theta}$ with weights from $\pi_{\text{ref}}$.
    \State Initialize an optimizer for $\theta$.
    \For{each training epoch}
        \For{each scenario context $c \in \mathcal{D}_{\text{scenarios}}$}
            \State Sample a group of trajectories using the latest policy $\mathcal{G}_c = \{\tau_i\}_{i=1}^N \sim \pi_{\theta}(\cdot|c)$.
            \State Partition $\mathcal{G}_c$ into feasible $\mathcal{G}_c^{\text{feas}}$ and infeasible $\mathcal{G}_c^{\text{infeas}}$ sets using $F(\tau, \mathcal{M})$.
            \State Initialize preference pairs set $\mathcal{D}_c^{\text{pref}} \leftarrow \emptyset$.
            \State \Comment{Rule 1: Feasibility First}
            \State Sample pairs $(\tau^w, \tau^l)$ with $\tau^w \in \mathcal{G}_c^{\text{feas}}, \tau^l \in \mathcal{G}_c^{\text{infeas}}$ and add to $\mathcal{D}_c^{\text{pref}}$.
            \State \Comment{Rule 2: Preference within Feasibility}
            \State Sample pairs $(\tau^w, \tau^l)$ with $\{\tau^w, \tau^l\} \subset \mathcal{G}_c^{\text{feas}}$ s.t. $R_{\text{expert}}(\tau^w) > R_{\text{expert}}(\tau^l) + \delta_m$, add to $\mathcal{D}_c^{\text{pref}}$.
            \State Limit $|\mathcal{D}_c^{\text{pref}}|$ to $K$.
            \State Compute HGPO loss: $\mathcal{L} = \mathbb{E}_{(\tau^w, \tau^l) \sim \mathcal{D}_c^{\text{pref}}} [-\log \sigma(\beta (\log \frac{\pi_\theta(\tau^w|c)}{\pi_\text{ref}(\tau^w|c)} - \log \frac{\pi_\theta(\tau^l|c)}{\pi_\text{ref}(\tau^l|c)}))]$.
            \State Update $\theta$ using the gradient of $\mathcal{L}$.
        \EndFor
    \EndFor
    \State \textbf{return} fine-tuned parameters $\theta$.
\EndFunction

\State $\theta_{\text{adv}} \leftarrow \text{FineTuneExpert}(R_{\text{adv-pref}})$
\State $\theta_{\text{real}} \leftarrow \text{FineTuneExpert}(R_{\text{real-pref}})$
\State \textbf{Output:} Expert model parameters $\theta_{\text{adv}}$, $\theta_{\text{real}}$.
\end{algorithmic}
\end{algorithm}

\begin{algorithm}[!htbp]
\caption{Closed-Loop Adversarial Training with Dual-Axis Curriculum}
\label{alg:closed_loop_training}
\footnotesize 
\begin{algorithmic}[1]
\State \textbf{Input:} Fine-tuned expert models $\pi_{\theta_{\text{adv}}}, \pi_{\theta_{\text{real}}}$; RL algorithm; Total timesteps $T_{\text{total}}$; Warm-up steps $T_{\text{start}}$; Schedules $\lambda^{(t)}, p_{\text{adv}}^{(t)}$; Environment `env'.
\State Initialize ego policy $\pi_{\text{ego}}$ and replay buffer $\mathcal{B}$.
\State $s \leftarrow \text{env.reset()}$
\For{$t=0$ to $T_{\text{total}}-1$}
    \State \Comment{Ego agent interaction}
    \If{$t < T_{\text{start}}$} $a \sim \text{Uniform}(\text{env.action\_space})$ \Comment{Initial exploration}
    \Else \ $a \sim \pi_{\text{ego}}(s)$
    \EndIf
    \State $s', r, d, \text{info} \leftarrow \text{env.step}(a)$
    \State Store $(s, a, r, s', d)$ in $\mathcal{B}$; \ $s \leftarrow s'$
    \State Train $\pi_{\text{ego}}$ using samples from $\mathcal{B}$.

    \If{$d$ is \texttt{True}}
        \State $s \leftarrow \text{env.reset()}$; Get context $c$. \Comment{Episode ends, reset and generate}
        \State Update curriculum $\lambda \leftarrow \lambda^{(t)}$, $p_{\text{adv}} \leftarrow p_{\text{adv}}^{(t)}$ (Sec. \ref{sec:appendix_exp_setups}).
        \If{$\text{random}() < p_{\text{adv}}$} \Comment{Generate an adversarial scenario}
            \State $\theta(\lambda) \leftarrow (1-\lambda)\theta_{\text{real}} + \lambda\theta_{\text{adv}}$
            \State $\tau_{\text{adv}} \sim \text{Generate}(\pi_{\theta(\lambda)}, \pi_{\text{ego}}, c)$ (Sec. \ref{ssec:test_time_control} and Sec. \ref{sec:appendix_exp_setups}).
            \State $\text{env.set\_adversary}(\tau_{\text{adv}})$
        \Else \Comment{Generate a benign scenario}
            \State $\text{env.set\_adversary}(\text{None})$
        \EndIf
    \EndIf
\EndFor
\State \textbf{Output:} Robust ego policy $\pi_{\text{ego}}$.
\end{algorithmic}
\end{algorithm}

\section{Theoretical Analysis}\label{sec:theoretical-analysis}

In this section, we provide a detailed theoretical analysis to bound the suboptimality gap between the true optimal solution for a user's preference and the best solution achievable by linearly interpolating two mixed-reward expert motion models.
We aim to show that for a given user preference, the optimal parameters for the mixed reward are well approximated by interpolation of expert model weights, as a justification for Theorem \ref{thm:main_result}.
We first consider a simplified version and then extend the analysis to a more general case. Finally, we provide a detailed analysis of Proposition \ref{prop:weight_vs_output}.

\paragraph{General Assumptions.}

Before presenting the detailed derivations, we first clarify general assumptions that bridge the theoretical analysis with the practical implementation of our method. 
Recall that the fine-tuning is performed using the HGPO loss (Eq. \ref{eq:gpo_adv_objective}), which is a preference-based objective. However, our theoretical analysis is based on the direct optimization of expected reward functions $R(\theta)$ and the geometry of their corresponding loss landscapes. To bridge this gap, we introduce the following assumption, which is well-grounded in the literature, to make the analysis tractable while still capturing the core principles that enable our approach.

\begin{assumption}[Implicit Reward Function for HGPO]
\label{ass:implicit_reward}
We assume that the process of optimizing the HGPO objective is equivalent to implicitly maximizing an underlying reward function, regularized by the KL-divergence from the reference policy $\pi_{\text{ref}}$. This connection is formally established for DPO under the Bradley-Terry framework \citep{rafailov2023direct}, which shows that minimizing the DPO loss is equivalent to solving a reward-maximization problem. Therefore, we posit that the continuous and differentiable functions, {the expected adversarial reward $R_{\text{adv}}(\theta) = \mathbb{E}_{\tau \sim \pi_\theta}[R_{\text{adv}}(\tau)]$ and the expected realism reward $R_{\text{real}}(\theta) = \mathbb{E}_{\tau \sim \pi_\theta}[-P_{\text{real}}(\tau)]$}, in our analysis represent these implicitly learned reward proxies for the optimization landscape shaped by the HGPO objective. 
\end{assumption}

This assumption allows us to analyze the behavior and geometry of the HGPO-optimized solution using reward landscape analysis, even though the practical algorithm operates on preference pairs.

\subsection{Suboptimality Gap with Quadratic Reward Approximation}
\label{app:quadratic_analysis}

Before presenting the general suboptimality analysis for non-quadratic functions, we first provide an intuitive derivation under the assumption that the reward landscapes are quadratic. This simplified setting, inspired by the analysis in \citet{rame2023rewarded}, allows for a closed-form solution for the suboptimality gap and offers clear insights into how our mixed-expert interpolation method behaves. This analysis serves as a foundation for understanding the more general results in section~\ref{app:non_quadratic_analysis}.

\subsubsection{Problem Formulation}

We model the reward functions as quadratic forms, which can be seen as a second-order Taylor approximation of the true reward landscape in the vicinity of the optima. 

\begin{assumption}[Simplified Quadratic Rewards]
\label{assum:quadratic}
The expected base rewards $R_{\text{adv}}(\theta)$ and $R_{\text{real}}(\theta)$ are quadratic functions of the model parameters $\theta \in \mathbb{R}^d$. Specifically, their Hessians are proportional to the identity matrix:
\begin{equation}
    \begin{aligned}
    R_{\text{adv}}(\theta) &= C_{\text{adv}} - \frac{\eta_{\text{adv}}}{2} \|\theta - \theta_{\text{adv}}^*\|^2, \\
    R_{\text{real}}(\theta) &= C_{\text{real}} - \frac{\eta_{\text{real}}}{2} \|\theta - \theta_{\text{real}}^*\|^2,
\end{aligned}
\end{equation}
where $\theta_{\text{adv}}^*$ and $\theta_{\text{real}}^*$ are the unique global optima for the pure adversarial and realism rewards, respectively. The constants $\eta_{\text{adv}}, \eta_{\text{real}} > 0$ determine the curvature of the reward landscapes, and $C_{(\cdot)}$ are the maximum reward values.
\end{assumption}

Under this assumption, we can analytically find the optima for the expert and user reward functions.

\paragraph{Expert Optima.}
Two expert models are trained on mixed rewards, defined by $\beta \in (0.5, 1]$:
\begin{equation}
    \begin{aligned}
    R_1(\theta) &= \beta R_{\text{adv}}(\theta) + (1-\beta) R_{\text{real}}(\theta), \\
    R_2(\theta) &= (1-\beta) R_{\text{adv}}(\theta) + \beta R_{\text{real}}(\theta).
\end{aligned}
\end{equation}
The global optimal parameters $\theta_1$ and $\theta_2$ that maximize these rewards are found by setting their gradients to zero. The gradient of $R_1(\theta)$ is:
\begin{equation}
    \nabla R_1(\theta) = -\beta \eta_{\text{adv}}(\theta - \theta_{\text{adv}}^*) - (1-\beta) \eta_{\text{real}}(\theta - \theta_{\text{real}}^*).
\end{equation}
Setting $\nabla R_1(\theta_1) = 0$ and solving for $\theta_1$ yields:
\begin{equation}
    \theta_1 = \frac{\beta \eta_{\text{adv}} \theta_{\text{adv}}^* + (1-\beta) \eta_{\text{real}} \theta_{\text{real}}^*}{\beta \eta_{\text{adv}} + (1-\beta) \eta_{\text{real}}}.
\end{equation}
By symmetry, the optimum for the second expert, $\theta_2$, is:
\begin{equation}
    \theta_2 = \frac{(1-\beta) \eta_{\text{adv}} \theta_{\text{adv}}^* + \beta \eta_{\text{real}} \theta_{\text{real}}^*}{(1-\beta) \eta_{\text{adv}} + \beta \eta_{\text{real}}}.
\end{equation}
This shows that the expert optima, $\theta_1$ and $\theta_2$, are themselves weighted averages of the pure optima $\theta_{\text{adv}}^*$ and $\theta_{\text{real}}^*$, and thus lie on the line segment connecting them.

\paragraph{True User Optimum.}
A user's preference is defined by $R_\mu(\theta) = \mu R_{\text{adv}}(\theta) + (1-\mu) R_{\text{real}}(\theta)$, with $\mu \in [0, 1]$. Following the same procedure, the true optimal parameters $\hat{\theta}_\mu$ for the user are:
\begin{equation}
    \hat{\theta}_\mu = \frac{\mu \eta_{\text{adv}} \theta_{\text{adv}}^* + (1-\mu) \eta_{\text{real}} \theta_{\text{real}}^*}{\mu \eta_{\text{adv}} + (1-\mu) \eta_{\text{real}}}.
\end{equation}

\subsubsection{Derivation of the Exact Suboptimality Gap}
Our method approximates $\hat{\theta}_\mu$ by interpolating the expert weights: $\theta(\lambda) = (1-\lambda)\theta_2 + \lambda\theta_1$ for $\lambda \in [0,1]$. Since $\theta_1$, $\theta_2$, and $\hat{\theta}_\mu$ are all collinear (as they are all different weighted averages of $\theta_{\text{adv}}^*$ and $\theta_{\text{real}}^*$), we can perfectly represent $\hat{\theta}_\mu$ if it falls within the line segment $[\theta_2, \theta_1]$.

To simplify the analysis and gain clearer insight, we consider the case where the reward curvatures are equal, i.e., $\eta_{\text{adv}} = \eta_{\text{real}} = \eta$. The optima then become simple linear interpolations:
\begin{equation}
    \begin{aligned}
    \theta_1 &= \beta \theta_{\text{adv}}^* + (1-\beta) \theta_{\text{real}}^*, \\
    \theta_2 &= (1-\beta) \theta_{\text{adv}}^* + \beta \theta_{\text{real}}^*, \\
    \hat{\theta}_\mu &= \mu \theta_{\text{adv}}^* + (1-\mu) \theta_{\text{real}}^*.
\end{aligned}
\end{equation}

\paragraph{Optimal approximation.} Our interpolated solution is $\theta(\lambda) = (1-\lambda)\theta_2 + \lambda\theta_1$. We seek a $\lambda \in [0,1]$ such that $\theta(\lambda) = \hat{\theta}_\mu$. Substituting the expressions for $\theta_1$ and $\theta_2$:
\begin{equation}
    \begin{aligned}
    \theta(\lambda) &= (1-\lambda)((1-\beta)\theta_{\text{adv}}^* + \beta\theta_{\text{real}}^*) + \lambda(\beta\theta_{\text{adv}}^* + (1-\beta)\theta_{\text{real}}^*), \\
      &= [(1-\lambda)(1-\beta) + \lambda\beta] \theta_{\text{adv}}^* + [(1-\lambda)\beta + \lambda(1-\beta)] \theta_{\text{real}}^*.
\end{aligned}
\end{equation}
Then we have the following equality of the coefficients:
\begin{equation}
    (1-\lambda)(1-\beta) + \lambda\beta = \mu \implies  \lambda = \frac{\mu + \beta - 1}{2\beta - 1}.
\end{equation}
Since $\beta \in (0.5, 1]$, the denominator $2\beta-1$ is positive. The solution is achievable via interpolation if $\lambda \in [0, 1]$. This condition holds if and only if $\mu \in [1-\beta, \beta]$.

\paragraph{Suboptimality Gap.}
If the user's preference $\mu$ falls outside the range $[1-\beta, \beta]$, we cannot perfectly match the true optimum $\hat{\theta}_\mu$. The best achievable solution is found at the boundary of the interpolation range, i.e., at $\lambda=0$ (giving $\theta_2$) or $\lambda=1$ (giving $\theta_1$). The suboptimality gap, $\Delta R_\mu = R_\mu(\hat{\theta}_\mu) - R_\mu(\theta_{\text{best}})$, can then be calculated. Consider the linear combination of quadratic functions with the same quadratic coefficient, the user reward function is given by $R_\mu(\theta) = C_\mu - \frac{\eta}{2}\|\theta - \hat{\theta}_\mu\|^2$. Therefore, the gap is:
\begin{equation}
    \Delta R_\mu = \frac{\eta}{2} \|\theta_{\text{best}} - \hat{\theta}_\mu\|^2.
\end{equation}
Let $\mu_{\text{clipped}} = \text{clip}(\mu, 1-\beta, \beta)$. The optimal point achievable through interpolation corresponds to the user preference $\mu_{\text{clipped}}$. The distance between the true optimum and the best achievable one is:
\begin{equation}
  \begin{aligned}
    \hat{\theta}_\mu - \theta_{\text{best}} &= (\mu \theta_{\text{adv}}^* + (1-\mu) \theta_{\text{real}}^*) - (\mu_{\text{clipped}} \theta_{\text{adv}}^* + (1-\mu_{\text{clipped}}) \theta_{\text{real}}^*) \\
    &= (\mu - \mu_{\text{clipped}}) (\theta_{\text{adv}}^* - \theta_{\text{real}}^*)
\end{aligned}  
\end{equation}
This leads to the final expression for the suboptimality gap.

\begin{theorem}[Suboptimality Gap for Quadratic Rewards]
\label{thm:quadratic_gap}
Under Assumption \ref{assum:quadratic} with equal curvatures ($\eta_{\text{adv}}=\eta_{\text{real}}=\eta$), the suboptimality gap of the mixed-expert model for a user preference $\mu \in [0, 1]$ is given by:
\begin{equation}
    R_\mu(\hat{\theta}_\mu) - \max_{\lambda \in [0,1]} R_\mu(\theta(\lambda)) = \frac{\eta}{2} (\mu - \text{clip}(\mu, 1-\beta, \beta))^2 \|\theta_{\text{adv}}^* - \theta_{\text{real}}^*\|^2
\end{equation}
where $\beta \in (0.5, 1]$ is the mixing coefficient used to train the expert models.
\end{theorem}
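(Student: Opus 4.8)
\textbf{Proof proposal for Theorem~\ref{thm:quadratic_gap}.}

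The plan is to exploit the collinearity structure that is already visible from the closed-form expressions for the expert optima and the user optimum. First I would establish the geometric skeleton: under Assumption~\ref{assum:quadratic} with equal curvatures $\eta_{\text{adv}}=\eta_{\text{real}}=\eta$, the points $\theta_1,\theta_2,\hat\theta_\mu$ are all convex combinations of the fixed pair $(\theta_{\text{adv}}^*,\theta_{\text{real}}^*)$, hence all lie on the line segment joining them. Concretely, writing $v=\theta_{\text{adv}}^*-\theta_{\text{real}}^*$, we have $\hat\theta_\mu=\theta_{\text{real}}^*+\mu v$, $\theta_1=\theta_{\text{real}}^*+\beta v$, $\theta_2=\theta_{\text{real}}^*+(1-\beta)v$. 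Thus the interpolation family $\theta(\lambda)=(1-\lambda)\theta_2+\lambda\theta_1$ sweeps out exactly the sub-segment corresponding to scalar coordinates in $[1-\beta,\beta]$ (recall $\beta>1/2$), parametrized by $(1-\lambda)(1-\beta)+\lambda\beta$.

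Next I would compute $R_\mu$ in this one-dimensional coordinate. Since $R_\mu=\mu R_{\text{adv}}+(1-\mu)R_{\text{real}}$ is a convex combination of two quadratics with the same Hessian $-\eta I$, it is itself a quadratic with Hessian $-\eta I$, and its unique maximizer is the $\mu$-weighted average of the two centers, i.e.\ precisely $\hat\theta_\mu$; therefore $R_\mu(\theta)=R_\mu(\hat\theta_\mu)-\tfrac{\eta}{2}\|\theta-\hat\theta_\mu\|^2$. (The additive constant $C_\mu$ is irrelevant to the gap and need not be tracked.) Restricting to the segment and using $\|\theta(\lambda)-\hat\theta_\mu\|=|(1-\lambda)(1-\beta)+\lambda\beta-\mu|\cdot\|v\|$, maximizing $R_\mu(\theta(\lambda))$ over $\lambda\in[0,1]$ is equivalent to minimizing $|s-\mu|$ over the scalar $s$ ranging in $[1-\beta,\beta]$. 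The minimizer is $s=\mathrm{clip}(\mu,1-\beta,\beta)$, attained at an interior $\lambda=\frac{\mu+\beta-1}{2\beta-1}$ when $\mu\in[1-\beta,\beta]$ and at an endpoint otherwise. Substituting back gives
\begin{equation}
R_\mu(\hat\theta_\mu)-\max_{\lambda\in[0,1]}R_\mu(\theta(\lambda))=\frac{\eta}{2}\,(\mu-\mathrm{clip}(\mu,1-\beta,\beta))^2\,\|\theta_{\text{adv}}^*-\theta_{\text{real}}^*\|^2,
\end{equation}
which is the claimed identity, and it is exactly zero on the ``covered'' range $\mu\in[1-\beta,\beta]$.

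I do not expect a genuine obstacle here — the result is essentially an algebraic consequence of collinearity plus the fact that a convex combination of identical-Hessian quadratics is again such a quadratic. The only points requiring a little care are: (i) verifying that $R_\mu$'s maximizer is indeed $\hat\theta_\mu$ (this is the same completing-the-square computation already used for $\theta_1,\theta_2$, just with weights $(\mu,1-\mu)$), and (ii) confirming that the constrained optimum of $|s-\mu|$ over the segment is the clip, which is immediate since $s\mapsto|s-\mu|$ is unimodal. If one wanted the unequal-curvature version, the maps would no longer be affine in $\mu,\beta$ and the segment would still be the same line but with a nonlinear reparametrization; the clipping identity would then hold in the transformed coordinate rather than in $\mu$ directly, so I would keep the theorem stated in the equal-curvature regime as written.
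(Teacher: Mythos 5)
Your proposal is correct and follows essentially the same route as the paper: compute the expert and user optima as convex combinations of $\theta_{\text{adv}}^*$ and $\theta_{\text{real}}^*$, observe that $R_\mu$ is again a quadratic with Hessian $-\eta I$ centered at $\hat\theta_\mu$, reduce the maximization over $\lambda$ to clipping the scalar coordinate $(1-\lambda)(1-\beta)+\lambda\beta$ to $[1-\beta,\beta]$, and substitute back. Your explicit one-dimensional parametrization by $s$ is a slightly cleaner packaging of the paper's coefficient-matching argument, but the substance is identical.
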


\paragraph{Interpretation.}
This result provides several key insights. (1) \textit{Region of Optimality}: Our method is completely optimal when the user's preference $\mu$ lies within the range $[1-\beta, \beta]$. The width of this range is controlled by the expert mixing coefficient $\beta$. A value closer to $0.5$ creates less specialized experts but covers a wider range of user preferences optimally. A value closer to $1$ creates more specialized experts but narrows the region of perfect optimality. (2) \textit{Error Growth}: Outside the optimal region, the suboptimality gap grows quadratically with the distance of $\mu$ from the boundary of the region. (3) \textit{Task Dissimilarity}: The gap is scaled by $\|\theta_{\text{adv}}^* - \theta_{\text{real}}^*\|^2$, the squared distance between the pure optima. This confirms the intuition that the approximation is worse when the underlying objectives are fundamentally more conflicting, requiring very different model parameters. Since we fine-tune a pretrained model $\pi_{\text{ref}}$, it is plausible that the learned parameters remain in a local region of the weight space with similar geometry.

This simplified analysis demonstrates the effectiveness and predictable behavior of our method. We now proceed to the more general analysis for non-quadratic reward functions, which builds upon these core ideas.

\subsection{Suboptimality Gap with Non-quadratic Reward}\label{app:non_quadratic_analysis}

In the following, we generalize the above discussion to non-quadratic rewards by leveraging standard assumptions from convex optimization theory. This provides a complete proof of Theorem \ref{thm:main_result}.

\subsubsection{Problem Formulation}

Following the above ideas, we have similar components but with the consideration of a more class of reward functions: the expected adversarial reward $R_{\text{adv}}(\theta) = \mathbb{E}_{\tau \sim \pi_\theta}[R_{\text{adv}}(\tau)]$ and the expected realism reward $R_{\text{real}}(\theta) = \mathbb{E}_{\tau \sim \pi_\theta}[-P_{\text{real}}(\tau)]$. The definition of other ingredients such as expert rewards, user reward, expert optima, and true optimum keep the same.
Our goal is to derive an upper bound for the suboptimality gap: $\Delta R_\mu = R_\mu(\hat{\theta}_\mu) - \max_{\lambda \in [0,1]} R_\mu(\theta_\lambda)$.

To analyze the behavior of our non-quadratic reward functions, we introduce two standard assumptions from optimization theory.

\begin{assumption}[L-Smoothness]
\label{assum:smooth}
The base reward functions $R_{\text{adv}}(\theta)$ and $R_{\text{real}}(\theta)$ are differentiable and their gradients are Lipschitz continuous with constants $L_{\text{adv}} > 0$ and $L_{\text{real}} > 0$, respectively. That is, for any $\theta_a, \theta_b \in \mathbb{R}^d$:
\begin{equation}
    \begin{aligned}
    \|\nabla R_{\text{adv}}(\theta_a) - \nabla R_{\text{adv}}(\theta_b)\| &\le L_{\text{adv}} \|\theta_a - \theta_b\|, \\
    \|\nabla R_{\text{real}}(\theta_a) - \nabla R_{\text{real}}(\theta_b)\| &\le L_{\text{real}} \|\theta_a - \theta_b\|.
\end{aligned}
\end{equation}
\end{assumption}

\begin{assumption}[m-Strong Concavity]
\label{assum:strong_concave}
The base reward functions $R_{\text{adv}}(\theta)$ and $R_{\text{real}}(\theta)$ are strongly concave with constants $m_{\text{adv}} > 0$ and $m_{\text{real}} > 0$, respectively. That is, for any $\theta_a, \theta_b \in \mathbb{R}^d$:
\begin{equation}
    R_{\text{adv}}(\theta_a) \le R_{\text{adv}}(\theta_b) + \nabla R_{\text{adv}}(\theta_b)^T(\theta_a - \theta_b) - \frac{m_{\text{adv}}}{2}\|\theta_a - \theta_b\|^2.
\end{equation}
A similar inequality holds for $R_{\text{real}}(\theta)$ with constant $m_{\text{real}}$.
\end{assumption}

\paragraph{Notes.}
While the global reward landscape in deep learning is highly non-concave, these assumptions are reasonable in our case for analyzing the behavior of models in the vicinity of a local optimum found during fine-tuning. (1) \textit{L-Smoothness}: The reward functions $R_{\text{adv}}$ and $P_{\text{real}}$ described in section \ref{app:reward_formulation} are constructed from compositions of smooth operations (e.g., exponential, logarithm, norms) and the output of a neural network, which is itself a smooth function of its weights $\theta$. The expectation over trajectories further smooths the landscape, making the L-smoothness assumption plausible.
(2) \textit{m-Strong Concavity}: This is a stronger assumption. However, it is common in theoretical analysis to assume that the reward landscape is locally strongly concave around a good solution \citep{kawaguchi2016deep}. Furthermore, the use of regularization techniques like L2 weight decay during training explicitly adds a quadratic term to the objective, which helps enforce strong concavity. Our analysis thus characterizes the suboptimality within such a well-behaved local region.

\subsubsection{Derivation of the Suboptimality Gap Bound}

The derivation proceeds in four steps.

\paragraph{Step 1.} We begin by relating the approximation gap to the gradient norm.
A standard result for an $m$-strongly concave function $f(x)$ with the maximum at $x^*$ is that $f(x^*) - f(x) \le \frac{1}{2m} \|\nabla f(x)\|^2$. The user reward $R_\mu(\theta)$ is a linear combination of strongly concave functions and is thus itself strongly concave with constant $m_\mu = (1-\mu)m_{\text{adv}} + \mu m_{\text{real}}$. Applying this result, we can bound the suboptimality gap by:
\begin{equation}
\label{eq:gap_to_grad}
\Delta R_\mu = R_\mu(\hat{\theta}_\mu) - R_\mu(\theta_{\bar{\lambda}}) \le \frac{1}{2m_\mu} \|\nabla R_\mu(\theta_{\bar{\lambda}})\|^2.
\end{equation}
Our task now is to bound the squared norm of the gradient at the best interpolated point $\theta_{\bar{\lambda}}$.

\paragraph{Step 2.} We decompose the user's preference gradient in the expert basis.
The user gradient $\nabla R_\mu$ is expressed as a linear combination of the expert gradients $\nabla R_1$ and $\nabla R_2$. The expert rewards are defined by the linear system:
\begin{equation}
    \begin{aligned}
    \nabla R_1 &= \beta \nabla R_{\text{adv}} + (1-\beta) \nabla R_{\text{real}}, \\
    \nabla R_2 &= (1-\beta) \nabla R_{\text{adv}} + \beta \nabla R_{\text{real}}.
\end{aligned}
\end{equation}
Solving this system for the base gradients $\nabla R_{\text{adv}}$ and $\nabla R_{\text{real}}$ yields:
\begin{equation}
    \begin{aligned}
    \nabla R_{\text{adv}} &= \frac{\beta \nabla R_1 - (1-\beta) \nabla R_2}{\beta^2 - (1-\beta)^2} = \frac{\beta \nabla R_1 - (1-\beta) \nabla R_2}{2\beta - 1}, \\
    \nabla R_{\text{real}} &= \frac{-(1-\beta) \nabla R_1 + \beta \nabla R_2}{\beta^2 - (1-\beta)^2} = \frac{-(1-\beta) \nabla R_1 + \beta \nabla R_2}{2\beta - 1}.
\end{aligned}
\end{equation}
Substituting these into the definition of the user gradient $\nabla R_\mu = (1-\mu) \nabla R_{\text{adv}} + \mu \nabla R_{\text{real}}$:
\begin{equation}
    \begin{aligned}
    \nabla R_\mu(\theta) &= \frac{1-\mu}{2\beta-1}(\beta \nabla R_1 - (1-\beta) \nabla R_2) + \frac{\mu}{2\beta-1}(-(1-\beta) \nabla R_1 + \beta \nabla R_2), \\
    &= \left( \frac{(1-\mu)\beta - \mu(1-\beta)}{2\beta - 1} \right) \nabla R_1(\theta) + \left( \frac{\mu\beta - (1-\mu)(1-\beta)}{2\beta - 1} \right) \nabla R_2(\theta), \\
    &= c_1(\mu, \beta) \nabla R_1(\theta) + c_2(\mu, \beta) \nabla R_2(\theta),
\end{aligned}
\end{equation}
where we define the coefficients $c_1$ and $c_2$ for notational simplicity.

\paragraph{Step 3.}
Using the decomposition from Step 2 and the triangle inequality, we can bound the gradient norm at the interpolated point:
\begin{equation}
\label{eq:grad_triangle}
\|\nabla R_\mu(\theta_{\bar{\lambda}})\| \le |c_1| \|\nabla R_1(\theta_{\bar{\lambda}})\| + |c_2| \|\nabla R_2(\theta_{\bar{\lambda}})\|.
\end{equation}
We now bound each expert gradient term using Assumption \ref{assum:smooth}. The expert reward $R_1$ is $L_1$-smooth with $L_1 = \beta L_{\text{adv}} + (1-\beta) L_{\text{real}}$. Since $\theta_1$ is the maximizer of $R_1$, we have $\nabla R_1(\theta_1) = 0$. Thus:
\begin{equation}
    \begin{aligned}
    \|\nabla R_1(\theta_{\bar{\lambda}})\| &= \|\nabla R_1(\theta_{\bar{\lambda}}) - \nabla R_1(\theta_1)\| \le L_1 \|\theta_{\bar{\lambda}} - \theta_1\|,  \\
    &= L_1 \|(1-\bar{\lambda})\theta_1 + \bar{\lambda}\theta_2 - \theta_1\| = \bar{\lambda} L_1 \|\theta_2 - \theta_1\|.
\end{aligned}
\end{equation}
Similarly, $R_2$ is $L_2$-smooth with $L_2 = (1-\beta) L_{\text{adv}} + \beta L_{\text{real}}$, and $\nabla R_2(\theta_2) = 0$. Thus:
\begin{equation}
    \begin{aligned}
    \|\nabla R_2(\theta_{\bar{\lambda}})\| &= \|\nabla R_2(\theta_{\bar{\lambda}}) - \nabla R_2(\theta_2)\| \le L_2 \|\theta_{\bar{\lambda}} - \theta_2\|,  \\
    &= L_2 \|(1-\bar{\lambda})\theta_1 + \bar{\lambda}\theta_2 - \theta_2\| = L_2 \|(1-\bar{\lambda})(\theta_1 - \theta_2)\| = (1-\bar{\lambda}) L_2 \|\theta_2 - \theta_1\|.
\end{aligned}
\end{equation}
Substituting these bounds back into Eq.~\ref{eq:grad_triangle}:
\begin{equation}
\label{eq:grad_bound_lambda}
\|\nabla R_\mu(\theta_{\bar{\lambda}})\| \le \left( |c_1| L_1 \bar{\lambda} + |c_2| L_2 (1-\bar{\lambda}) \right) \|\theta_2 - \theta_1\|.
\end{equation}

\paragraph{Step 4.}
The bound in Eq.~\ref{eq:grad_bound_lambda} depends on the unknown optimal interpolation coefficient $\bar{\lambda}$. To obtain a general bound, we find the maximum value of the term in the parenthesis over all possible $\lambda \in [0, 1]$. Let $f(\lambda) = |c_1| L_1 \lambda + |c_2| L_2 (1-\lambda)$. Since $f(\lambda)$ is a linear function of $\lambda$, its maximum over the interval $[0, 1]$ must occur at one of the endpoints:
\begin{equation}
    \max_{\lambda \in [0,1]} f(\lambda) = \max(f(0), f(1)) = \max(|c_2| L_2, |c_1| L_1).
\end{equation}
This gives us a general upper bound on the gradient norm:
\begin{equation}
    \|\nabla R_\mu(\theta_{\bar{\lambda}})\| \le \max(|c_1| L_1, |c_2| L_2) \|\theta_2 - \theta_1\|.
\end{equation}
Finally, we substitute this into our initial inequality from Eq.~\ref{eq:gap_to_grad} to arrive at the final bound on the suboptimality gap:
\begin{equation}
    \Delta R_\mu \le \frac{1}{2m_\mu} \left( \max(|c_1| L_1, |c_2| L_2) \|\theta_2 - \theta_1\| \right)^2.
\end{equation}
The final result is given by:
\begin{equation}
     \Delta R_\mu \le \frac{\max \left( \left( \frac{(1-\mu)\beta - \mu(1-\beta)}{2\beta - 1} \right)^2 L_1^2, \left( \frac{\mu\beta - (1-\mu)(1-\beta)}{2\beta - 1} \right)^2 L_2^2 \right)}{2((1-\mu)m_{\text{adv}} + \mu m_{\text{real}})} \|\theta_2 - \theta_1\|^2, 
\end{equation}
where $L_1 = \beta L_{\text{adv}} + (1-\beta) L_{\text{real}}, L_2 = (1-\beta) L_{\text{adv}} + \beta L_{\text{real}}$.
This concludes our derivation. 

\paragraph{Interpretation.} The final expression shows that the suboptimality of our method is quadratically dependent on the distance between the expert models in parameter space, and is modulated by the conditioning of the reward functions (ratio of smoothness to strong concavity) and the alignment between the user's preference $\mu$ and the expert design $\beta$.

\subsection{Analytical Comparison of Weight Mixing and Output Mixing Schemes}
\label{app:analytical_comparison}

In this section, we provide a theoretical analysis comparing the performance of weight mixing (ours) and trajectory mixing (output ensemble) for trajectory generation tasks. Our goal is to understand the conditions under which one method is favored over the other and provide theoretical justification on the Linear Mode Connectivity (LMC) discussed in sections \ref{ssec:test_time_control} and \ref{ssec:weight space}.
The analysis is conducted in a generic loss landscape. In line with Assumption \ref{ass:implicit_reward}, we interpret this loss function $L(\theta)$ as the negative of the implicitly learned expected reward, i.e., $L(\theta) = -R(\theta)$. Therefore, the properties of the loss landscape, such as its curvature (convexity), are directly related to the geometry of the underlying reward landscape (concavity), making this analysis applicable to our problem setting.

\begin{assumption}[Loss as an Analytical Proxy for Preference Objective]
The HGPO loss implicitly encourages the model's output distribution to shift towards trajectories that better align with the desired preferences. In this context, the ideal trajectory $y$ can be conceptualized not as a single ground truth, but as a representative point in the high-preference region of the output space that the model is being steered towards.
\end{assumption}

Under this assumption, the conclusions drawn from the MSE-based analysis are argued to hold in principle for our preference-driven optimization setting.

\begin{lemma}
Let $f(x; \theta) \in \mathbb{R}^C$ be a model that predicts a future trajectory for input $x$, parameterized by weights $\theta \in \mathbb{R}^d$. Consider two fine-tuned models with parameters $\theta_1$ and $\theta_2$. For an interpolation parameter $\alpha \in [0, 1]$, we define:
\begin{enumerate}
    \item The weight-mixed model with parameters $\theta_\alpha = (1-\alpha)\theta_1 + \alpha\theta_2$, producing output $f^{\text{weight}}(x; \alpha) = f(x; \theta_\alpha)$.
    \item The output-mixed (ensemble) model producing output $f^{\text{ens}}(x; \alpha) = (1-\alpha)f(x; \theta_1) + \alpha f(x; \theta_2)$.
\end{enumerate}
Let the per-sample loss be the Mean Squared Error (MSE), $l(f, y) = \frac{1}{2} \|f - y\|_2^2$, where $y$ is the expected ground-truth trajectory. The expected losses over the data distribution are denoted by $L^{\text{weight}}(\alpha) = \mathbb{E}_{x,y}[l(f^{\text{weight}}, y)]$ and $L^{\text{ens}}(\alpha) = \mathbb{E}_{x,y}[l(f^{\text{ens}}, y)]$.
Then, the difference in expected loss is approximated by:
\begin{equation}
L^{\text{weight}}(\alpha) - L^{\text{ens}}(\alpha) \approx -\frac{\alpha(1-\alpha)}{2} \left( \frac{d^2}{d\alpha^2} L^{\text{weight}}(\alpha) - \mathbb{E}_{x,y}\left[ \|\Delta f(x)\|_2^2 \right] \right),
\label{eq:final_approximation}
\end{equation}
where $\Delta f(x) = f(x; \theta_2) - f(x; \theta_1)$ is the difference in predictions between the two endpoint models.
\end{lemma}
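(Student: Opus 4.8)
The plan is to prove the statement in two independent pieces — an \emph{exact} algebraic identity for the output-ensembled loss $L^{\text{ens}}(\alpha)$, and a \emph{second-order} interpolation estimate for the weight-mixed loss $L^{\text{weight}}(\alpha)$ — and then subtract. For the first piece I would write $f_i(x) = f(x;\theta_i)$, set $u = f_1(x) - y$ and $v = f_2(x) - y$, and note that $f^{\text{ens}}(x;\alpha) - y = (1-\alpha)u + \alpha v$. Expanding $\tfrac12\|(1-\alpha)u + \alpha v\|_2^2$ and substituting the polarization identity $2\langle u,v\rangle = \|u\|_2^2 + \|v\|_2^2 - \|u-v\|_2^2$ together with $u - v = f_1(x) - f_2(x) = -\Delta f(x)$, the three coefficients collapse to $(1-\alpha)$, $\alpha$, and $-\alpha(1-\alpha)$, yielding, exactly,
\begin{equation}
L^{\text{ens}}(\alpha) = (1-\alpha)\,L^{\text{weight}}(0) + \alpha\, L^{\text{weight}}(1) - \frac{\alpha(1-\alpha)}{2}\,\mathbb{E}_{x,y}\big[\|\Delta f(x)\|_2^2\big],
\end{equation}
where I use that the weight-mixed model degenerates to $\theta_1$ at $\alpha=0$ and to $\theta_2$ at $\alpha=1$, so its endpoint losses coincide with those of the two fine-tuned models (and with the endpoint values of $L^{\text{ens}}$).

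For the second piece I would view $\alpha \mapsto L^{\text{weight}}(\alpha)$ as a smooth scalar curve passing through the \emph{same} two endpoints. Its deviation from the chord $(1-\alpha)L^{\text{weight}}(0) + \alpha L^{\text{weight}}(1)$ vanishes at $\alpha \in \{0,1\}$, and the textbook error formula for linear interpolation at nodes $0$ and $1$ gives $L^{\text{weight}}(\alpha) - \big[(1-\alpha)L^{\text{weight}}(0) + \alpha L^{\text{weight}}(1)\big] = -\tfrac12\alpha(1-\alpha)\,\tfrac{d^2}{d\alpha^2}L^{\text{weight}}(\xi)$ for some $\xi \in (0,1)$. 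Replacing $\xi$ by $\alpha$ — i.e.\ treating $L^{\text{weight}}$ as locally quadratic along the segment — gives the approximation $L^{\text{weight}}(\alpha) \approx (1-\alpha)L^{\text{weight}}(0) + \alpha L^{\text{weight}}(1) - \tfrac{\alpha(1-\alpha)}{2}\tfrac{d^2}{d\alpha^2}L^{\text{weight}}(\alpha)$. Subtracting the exact expression for $L^{\text{ens}}(\alpha)$ cancels both linear endpoint terms and leaves
\begin{equation}
L^{\text{weight}}(\alpha) - L^{\text{ens}}(\alpha) \approx -\frac{\alpha(1-\alpha)}{2}\left( \frac{d^2}{d\alpha^2}L^{\text{weight}}(\alpha) - \mathbb{E}_{x,y}\big[\|\Delta f(x)\|_2^2\big] \right),
\end{equation}
which is exactly the claimed identity.

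The one genuinely non-exact step is the local-quadratic approximation of $L^{\text{weight}}$, so the main obstacle is conceptual rather than computational: stating its scope honestly. The discarded remainder is of order $\|\theta_2 - \theta_1\|^3$ times the third directional derivative of $L$ along $\theta_2 - \theta_1$, hence negligible precisely in our regime — both experts fine-tuned from the same $\pi_{\text{ref}}$, thus close in parameter space and lying along a low-curvature path, which is the Linear Mode Connectivity hypothesis of Section~\ref{ssec:lmc analysis}. I would also record that $\tfrac{d^2}{d\alpha^2}L^{\text{weight}}(\alpha)$ equals the directional curvature $(\theta_2 - \theta_1)^\top \nabla^2_\theta L(\theta_\alpha)(\theta_2 - \theta_1)$, so that ``Term 1'' of Proposition~\ref{prop:weight_vs_output} is negative exactly when this curvature is small, which is the intended flat-basin reading. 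Everything else reduces to one polarization identity and one interpolation-error estimate.
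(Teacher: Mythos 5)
Your proof is correct, but it reaches the result by a genuinely different route than the paper. The paper works per-sample in output space: it writes the exact identity $l^{\text{weight}}-l^{\text{ens}}=(f^{\text{weight}}-f^{\text{ens}})(f^{\text{ens}}-y)+\tfrac12(f^{\text{weight}}-f^{\text{ens}})^2$, represents $f^{\text{ens}}-f^{\text{weight}}$ as an integral of the output Hessian $\delta^\top\nabla_\theta^2 f\,\delta$ against a triangular weight, computes $\tfrac{d^2}{d\alpha^2}l(f(\theta_\alpha),y)$ by the chain rule to isolate that Hessian term, and then stacks several approximations (constant output Hessian along the path, dropping the quadratic term, $f^{\text{ens}}-y\approx f^{\text{weight}}-y$, and $(\nabla_\theta f)^\top\delta\approx\Delta f$). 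You instead work with the two scalar curves $\alpha\mapsto L^{\text{ens}}(\alpha)$ and $\alpha\mapsto L^{\text{weight}}(\alpha)$ directly: the polarization step gives the classical ambiguity decomposition $L^{\text{ens}}(\alpha)=(1-\alpha)L^{\text{weight}}(0)+\alpha L^{\text{weight}}(1)-\tfrac{\alpha(1-\alpha)}{2}\mathbb{E}[\|\Delta f\|_2^2]$ \emph{exactly}, and the Lagrange interpolation-error formula handles $L^{\text{weight}}$, so the entire derivation rests on a single approximation ($\xi\to\alpha$, i.e.\ local quadraticity of $L^{\text{weight}}$ in $\alpha$, with an $O(\|\theta_2-\theta_1\|^3)$ remainder). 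Your version is shorter and cleaner: the diversity term is exact rather than emerging from the gradient-constancy approximation, both terms are transparently deviations from the common chord (you correctly note the endpoints coincide), and the scope of the approximation is stated honestly. What the paper's longer route buys is a mechanistic reading: by passing through $\delta^\top\nabla_\theta^2 f\,\delta$ it exhibits \emph{why} weight mixing and output mixing differ — namely through the curvature of the network output in $\theta$, which is precisely the quantity the LMC hypothesis controls — whereas your argument treats $\tfrac{d^2}{d\alpha^2}L^{\text{weight}}$ as a black-box directional curvature (though your closing remark identifying it with $(\theta_2-\theta_1)^\top\nabla_\theta^2 L(\theta_\alpha)(\theta_2-\theta_1)$ recovers most of that interpretation). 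Both derivations yield the identical final expression and the identical flat-basin conclusion for Proposition~\ref{prop:weight_vs_output}.
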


\begin{proof}
The proof follows \citet{wortsman2022model} and is adapted under the regression framework, which is standard for motion generation (forecasting) tasks.
We first establish an exact expression for the loss difference, then approximate the key term related to output difference, connect it to the curvature of the loss landscape, and finally synthesize the result. For clarity, we treat the model output $f$ as a scalar; the extension to the vector case (where $\| \cdot \|_2^2$ is used) is straightforward.

\paragraph{Step 1.}
We begin by deriving an exact, non-approximated expression for the per-sample loss difference. Using the difference of squares formula, $a^2 - b^2 = (a-b)(a+b)$, we have:
\begin{equation}
    \begin{aligned}
    l^{\text{weight}} - l^{\text{ens}} &= \frac{1}{2}(f^{\text{weight}} - y)^2 - \frac{1}{2}(f^{\text{ens}} - y)^2, \\
    &= \frac{1}{2} \left( (f^{\text{weight}} - y) - (f^{\text{ens}} - y) \right) \left( (f^{\text{weight}} - y) + (f^{\text{ens}} - y) \right), \\
    &= \frac{1}{2} (f^{\text{weight}} - f^{\text{ens}}) (f^{\text{weight}} + f^{\text{ens}} - 2y), \\
    &= (f^{\text{weight}} - f^{\text{ens}})(f^{\text{ens}} - y) + \frac{1}{2}(f^{\text{weight}} - f^{\text{ens}})^2.
    \end{aligned}
\end{equation}

This expression is exact and serves as our starting point. To obtain a simpler, more interpretable approximation, we will focus on the first-order term, $(f^{\text{weight}} - f^{\text{ens}})(f^{\text{ens}} - y)$, and approximate $f^{\text{ens}}-y \approx f^{\text{weight}}-y$.

\paragraph{Step 2.}
The core of the analysis lies in understanding the difference between weight-mixing and ensemble, which relates to the function's convexity. Let $\delta = \theta_2 - \theta_1$ and $\theta_\tau = \theta_1 + \tau \delta$. By applying the fundamental theorem of calculus twice, we can write an exact integral expression for the output difference \citep{wortsman2022model}:
\begin{equation}
    f^{\text{ens}}(x) - f^{\text{weight}}(x) = \int_0^1 \delta^T \nabla_\theta^2 f(x; \theta_\tau) \delta \cdot w_\alpha(\tau) d\tau,
\end{equation}
where $w_\alpha(\tau) = \min((1-\alpha)\tau, \alpha(1-\tau))$ is a triangular weight function and $\nabla_\theta^2 f$ is the Hessian of the model output with respect to its parameters $\theta$.

Following \citet{wortsman2022model}, we assume that the Hessian $\nabla_\theta^2 f(x; \theta_\tau)$ is approximately constant along the linear path between $\theta_0$ and $\theta_1$, i.e., $\nabla_\theta^2 f(x; \theta_\tau) \approx \nabla_\theta^2 f(x; \theta_\alpha)$. This leads to:
\begin{equation}
\begin{aligned}
    f^{\text{ens}}(x) - f^{\text{weight}}(x) &\approx \left( \delta^T \nabla_\theta^2 f(x; \theta_\alpha) \delta \right) \int_0^1 w_\alpha(\tau) d\tau, \\
&= \frac{\alpha(1-\alpha)}{2} \left( \delta^T \nabla_\theta^2 f(x; \theta_\alpha) \delta \right).
\end{aligned}
\end{equation}
Therefore, the output difference we need is $f^{\text{weight}}(x) - f^{\text{ens}}(x) \approx -\frac{\alpha(1-\alpha)}{2} \left( \delta^T \nabla_\theta^2 f(x; \theta_\alpha) \delta \right)$.

\paragraph{Step 3.}
Our goal is to express the term involving the output Hessian, $\delta^T \nabla_\theta^2 f \delta$, in terms of the curvature of the loss function itself. The loss curvature along the parameter path is defined by the second derivative of the loss with respect to the interpolation parameter $\alpha$, i.e., $\frac{d^2}{d\alpha^2} l(f(\theta_\alpha), y)$. 

First, we compute the first derivative using the chain rule:
\begin{equation}
    \frac{d}{d\alpha} l(f(\theta_\alpha), y) = \frac{\partial l}{\partial f} \frac{df(\theta_\alpha)}{d\alpha}.
\end{equation}
The derivative of the MSE loss with respect to its input $f$ is $\frac{\partial l}{\partial f} = (f - y)$.
The derivative of the model output with respect to $\alpha$ is found using the multivariate chain rule:
\begin{equation}
\frac{df(\theta_\alpha)}{d\alpha} = (\nabla_\theta f(\theta_\alpha))^T \frac{d\theta_\alpha}{d\alpha} = (\nabla_\theta f(\theta_\alpha))^T \delta.
\end{equation}
since $\theta_\alpha = \theta_1 + \alpha\delta$, so $\frac{d\theta_\alpha}{d\alpha} = \delta$. Combining these gives the first derivative of the loss:
\begin{equation}
\frac{d}{d\alpha} l(f(\theta_\alpha), y) = (f(\theta_\alpha)-y) \cdot ((\nabla_\theta f(\theta_\alpha))^T \delta).
\end{equation}
Next, we compute the second derivative. Let $u = (f(\theta_\alpha)-y)$ and $v = ((\nabla_\theta f(\theta_\alpha))^T \delta)$. Then the derivatives of $u$ and $v$ are:
\begin{equation}
\begin{aligned}
\frac{du}{d\alpha} &= \frac{d}{d\alpha}(f(\theta_\alpha)-y) = \frac{df(\theta_\alpha)}{d\alpha} = (\nabla_\theta f(\theta_\alpha))^T \delta, \\
\frac{dv}{d\alpha} &= \frac{d}{d\alpha} \left( (\nabla_\theta f(\theta_\alpha))^T \delta \right) = \delta^T \nabla_\theta^2 f(\theta_\alpha) \delta.
\end{aligned}
\end{equation}

Substituting these back into the product rule:
\begin{equation}
    \begin{aligned}
\frac{d^2}{d\alpha^2} l(f(\theta_\alpha), y) &= \left( (\nabla_\theta f(\theta_\alpha))^T \delta \right) \left( (\nabla_\theta f(\theta_\alpha))^T \delta \right) + (f(\theta_\alpha)-y) \left( \delta^T \nabla_\theta^2 f(\theta_\alpha) \delta \right), \\
&= \left( (\nabla_\theta f(\theta_\alpha))^T \delta \right)^2 + (f(\theta_\alpha)-y) \cdot (\delta^T \nabla_\theta^2 f(\theta_\alpha) \delta).
\end{aligned}
\end{equation}
By rearranging, we can isolate the term containing the output Hessian that appeared in Step 2:
\begin{equation}
(f(\theta_\alpha)-y) \cdot (\delta^T \nabla_\theta^2 f(\theta_\alpha) \delta) = \frac{d^2}{d\alpha^2} l(f(\theta_\alpha), y) - \left( (\nabla_\theta f(\theta_\alpha))^T \delta \right)^2.
\end{equation}
This result forms the link between the model's output geometry and the loss landscape's geometry.

\paragraph{Step 4.}
We combine the results from the previous steps. Starting from the dominant first-order term of the loss difference from Step 1, and substituting the approximations from Steps 2 and 3:
\begin{equation}
    \begin{aligned}
l^{\text{weight}} - l^{\text{ens}} &\approx (f^{\text{weight}} - f^{\text{ens}}) (f^{\text{weight}} - y), \\
&\approx \left( -\frac{\alpha(1-\alpha)}{2} \left( \delta^T \nabla_\theta^2 f(x; \theta_\alpha) \delta \right) \right) (f(x; \theta_\alpha) - y), \\
&= -\frac{\alpha(1-\alpha)}{2} \left[ (f(\theta_\alpha)-y) \cdot (\delta^T \nabla_\theta^2 f(\theta_\alpha) \delta) \right], \\
&= -\frac{\alpha(1-\alpha)}{2} \left[ \frac{d^2}{d\alpha^2} l(f(\theta_\alpha), y) - \left( (\nabla_\theta f(\theta_\alpha))^T \delta \right)^2 \right].
\end{aligned}
\end{equation}
Taking the expectation over the data distribution $(x, y)$ gives the difference in expected loss:
\begin{equation}
L^{\text{weight}}(\alpha) - L^{\text{ens}}(\alpha) \approx -\frac{\alpha(1-\alpha)}{2} \left[ \frac{d^2}{d\alpha^2} L^{\text{weight}}(\alpha) - \mathbb{E}_{x,y}\left[ \left( (\nabla_\theta f(x; \theta_\alpha))^T \delta \right)^2 \right] \right].
\end{equation}
Finally, we make the first-order approximation $(\nabla_\theta f(x; \theta_\alpha))^T \delta \approx f(x; \theta_1) - f(x; \theta_0) = \Delta f(x)$. This holds if the gradient $\nabla_\theta f$ is roughly constant between $\theta_0$ and $\theta_1$. This yields the final expression as stated in the lemma.
\end{proof}

\paragraph{Interpretation.} The approximation in Eq. \ref{eq:final_approximation} reveals a trade-off between two key factors:
\begin{equation}\label{eq:loss_mixing_ensemble}
L^{\text{weight}} - L^{\text{ens}} \approx \underbrace{-\frac{\alpha(1-\alpha)}{2} \frac{d^2 L^{\text{weight}}}{d\alpha^2}}_{\text{Term 1: Loss Curvature}} + \underbrace{\frac{\alpha(1-\alpha)}{2} \mathbb{E}\left[ \|\Delta f(x)\|_2^2 \right]}_{\text{Term 2: Prediction Difference}}.
\end{equation}
(1) \textbf{Curvature:} This term is proportional to the negative second derivative of the loss of the mixed model along the linear path connecting the parameters.
{For weight mixing to be advantageous, this term needs to be negative, which requires the loss function to be convex along this path ($\frac{d^2 L^{\text{weight}}}{d\alpha^2} > 0$). This directly corresponds to the underlying reward landscape being concave.
}
(2) \textbf{Prediction Difference:} This term is proportional to the mean squared difference between the predictions of the two endpoint models. Since it is always non-negative, this term favors the ensemble. It captures the benefit of mixing diverse outputs, a basic principle of ensembling. When two models produce highly dissimilar predictions (high $\|\Delta f\|_2^2$), output mixing is more likely to outperform weight mixing.

{
LMC suggests that solutions fine-tuned from a common pretrained model lie within a wide and low-loss basin. However, the path connecting two distinct experts is not perfectly zero-curvature. Instead, it can be conceptualized as a high-reward ridge. 
As the model traverses this path from one expert to another, the combined reward often has an arc-like shape, which signifies reward concavity. The reward landscape in Fig.~\ref{Fig_lmc}(c) illustrates this high-reward plateau, and the measured performance in Fig.~\ref{Fig_lmc}(d) provides direct empirical evidence of this concavity, showing that the rewards of interpolated models are consistently higher than a linear combination of the endpoint rewards. This inherent concavity of the reward path ensures the loss is convex, making Term 1 negative and significant, thereby providing a principled reason for the superiority of weight-space mixing in the fine-tuning regime.}

This analysis provides a theoretical basis for why weight mixing is effective, particularly in the fine-tuning regime, where the loss landscape's curvature can make the first term dominant. By training on combined rewards, we ensure our expert models $\theta_{\text{adv}}$ and $\theta_{\text{real}}$ exist in a space where linear interpolation is meaningful for generating intermediate behaviors and desired trade-offs.
{To help better understand Eq. \ref{eq:loss_mixing_ensemble}, in the following, we provide an intuitive illustration to formalize the exact condition where Term 1 can dominate.}

\begin{remark}[{Remark on the Dominance of Loss Curvature Benefit}]
{
Based on Eq.~\ref{eq:loss_mixing_ensemble}, weight mixing is superior if $\mathbb{E}\big[\|\Delta f(x)\|^2\big] < \frac{d^2{L}_{\text{weight}}}{d\alpha^2}$. To make this condition more intuitive, we formalize it under a local quadratic approximation of the loss landscape. Similar to section \ref{app:quadratic_analysis}, we assume that in the local vicinity of the expert solutions $\theta_1$ and $\theta_2$, the loss function ${L}(\theta)$ can be approximated by a quadratic form with ${L}(\theta) \approx \frac{\eta}{2} \|\theta - \theta^*\|^2 + C$,
where $\theta^*$ is the minimum of the loss basin, $\eta > 0$ is a scalar representing the curvature of the basin, and $C$ is a constant.

The right-hand side represents the curvature of the loss along the linear path $\theta_{\alpha} = (1-\alpha)\theta_1 + \alpha\theta_2$. We can compute its second derivative with respect to $\alpha$:
\begin{align*}
    &\frac{d{L}_{\text{weight}}}{d\alpha} = \nabla {L}(\theta_{\alpha})^\top \frac{d\theta_{\alpha}}{d\alpha} = \eta (\theta_{\alpha} - \theta^*)^\top (\theta_2 - \theta_1), \\
    &\frac{d^2{L}_{\text{weight}}}{d\alpha^2} = \eta (\theta_2 - \theta_1)^\top (\theta_2 - \theta_1) = \eta \|\theta_2 - \theta_1\|^2.
\end{align*}

The left-hand side is the expected squared difference in outputs. We can approximate this difference using a first-order Taylor expansion of the model function $f(x; \theta)$ around $\theta_1$:
\[
    \Delta f(x) = f(x; \theta_2) - f(x; \theta_1) \approx J(x; \theta_1)(\theta_2 - \theta_1),
\]
where $J(x; \theta) = \frac{\partial f}{\partial \theta}$ is the Jacobian of the model's output with respect to its parameters. The expected squared difference is then:
\begin{align*}
    \mathbb{E}\big[\|\Delta f(x)\|^2\big] &\approx \mathbb{E}\big[\|J(x; \theta_1)(\theta_2 - \theta_1)\|^2\big] \\
    &= (\theta_2 - \theta_1)^\top \mathbb{E}\big[J(x; \theta_1)^\top J(x; \theta_1)\big] (\theta_2 - \theta_1) \\
    &= (\theta_2 - \theta_1)^\top F (\theta_2 - \theta_1),
\end{align*}
where $F$ is the Fisher information matrix, which measures the sensitivity of the model's output to changes in its parameters. By substituting these results back, we obtain the final condition:
\begin{equation}
\begin{aligned}
    &(\theta_2 - \theta_1)^\top F (\theta_2 - \theta_1) < \eta \|\theta_2 - \theta_1\|^2, \\
    &\Rightarrow \eta >\frac{(\theta_2 - \theta_1)^\top F (\theta_2 - \theta_1)}{\|\theta_2 - \theta_1\|^2}. \\
\end{aligned}
    \label{eq:final_condition_rayleigh}
\end{equation}

This derivation yields an intuitive condition: weight mixing is superior whenever the intrinsic curvature of the loss landscape exceeds the sensitivity of the model's output in the direction connecting the experts. This condition is readily satisfied in our fine-tuning setting. On one hand, LMC implies that fine-tuned models develop robust representations, making their outputs relatively insensitive to parameter changes along the solution path. On the other hand, because the experts are trained on conflicting objectives, the path connecting them has a non-zero curvature ($\eta > 0$) to reflect the trade-off in rewards, which provides the necessary signal for alignment.

This theoretical insight is directly justified by our empirical findings in Fig.~\ref{Fig_lmc}. The superiority of weight mixing requires the reward function to be concave along the interpolation path. Fig.~\ref{Fig_lmc}(d) provides direct visual proof, showing the measured reward curves of interpolated models lying strictly above the linear interpolation line. This confirms that the reward landscape possesses the required curvature. This view also reconciles with LMC: the path connecting experts is not a completely flat plateau but a high-reward ridge (Fig.~\ref{Fig_lmc}(c)). While LMC ensures the solution remains on this low-loss ridge, our analysis shows that it is the ridge's concave curvature that allows weight-space interpolation to discover superior trade-offs.}
\end{remark}

\section{Supplementary Results}\label{sec:supplementary_results}

\subsection{Adversarial Generation Evaluation against IDM and Rule-based Policies}\label{ss:results_idm_rule}

In addition to the Replay and RL policies presented in the main paper, we provide comprehensive benchmark results against two other common types of policies: a rule-based autopilot and the Intelligent Driver Model (IDM). Tab.~\ref{tab:results_rule} and Tab.~\ref{tab:results_idm} show that the trends observed previously hold. SAGE consistently achieves the best trade-off, delivering competitive adversarial performance while maintaining unparalleled realism and map compliance. Specifically, the realism and map compliance penalties of our generated scenarios are an order of magnitude lower than most baselines, underscoring the general applicability and effectiveness of our HGPO framework across a variety of ego agent behaviors. {Without the elaborated design of reward functions, some baselines can generate unexpected and unstable behaviors, which can violate basic traffic rules (see Fig. \ref{Fig_king_and_rule}).}

\begin{table}[!htbp]
\centering
\caption{Evaluation of adversarial generation methods against the \textbf{Rule-based/Autopilot} policy.}
\label{tab:results_rule}
\vspace{-10pt}
\setlength{\tabcolsep}{1.5pt}
\renewcommand{\arraystretch}{1.1}
\resizebox{\textwidth}{!}{%
\begin{tabular}{@{}l c c cc cc ccc@{}}
\toprule
& \textbf{Attack Succ.} & \textbf{Adv.} & \multicolumn{2}{c}{\textbf{Real. Pen.} $\downarrow$} & \multicolumn{2}{c}{\textbf{Map Comp.} $\downarrow$} & \multicolumn{3}{c}{\textbf{Dist. Diff. (WD)}} \\
\cmidrule(lr){4-5} \cmidrule(lr){6-7} \cmidrule(lr){8-10}
\textbf{Methods} & \textbf{Rate} $\uparrow$ & \textbf{Reward} $\uparrow$ & Behav. & Kine. & Crash Obj. & Cross Line & Accel. & Vel. & Yaw \\
\midrule
Rule & 24.18\%  & 0.724 & 2.623 & 111.184 & 1.658 & 6.386 & 7.631 & 10.572 & 0.248 \\
CAT \citep{zhang2023cat} & 18.21\%  & 0.659 & 8.346 & 3.071 & 2.799 & 9.647 & 1.565 & 7.781 & 0.201\\
KING \citep{hanselmann2022king} & 16.43\%  & 0.558 & 2.445 & 2.565 & 3.116 & 5.807 & 0.956 & 256.921 & 0.097\\
AdvTrajOpt \citep{zhang2022adversarial} & 19.00\%  & 0.543 & 4.445 & 2.818 & 2.567 & 10.833 & 1.753 & 6.207 & 0.271\\
SEAL \citep{stoler2025seal} & 19.45\% & 1.584 & 4.902 & 2.584 & 3.260 & 9.178 & 1.494 & 6.756 & 0.248 \\
{GOOSE} \citep{ransiek2024goose} & {5.10\%} & {0.325} & {2.387} & {14.56} & {4.363} & {14.31} & {5.322} & {8.107} & {0.167} \\
\midrule
\rowcolor{gray!10}
SAGE ($w_{\text{adv}}=0.0$)& 4.89\%  & 0.397 & 0.332 & 2.000 & 0.707 & 0.951 & 1.460 & 9.332 & 0.055\\
\rowcolor{gray!25}
SAGE ($w_{\text{adv}}=0.5$) & 11.14\%  & 0.466 & 0.497 & 2.068 & 0.815 & 0.951 & 1.523 & 8.482 & 0.080 \\
\rowcolor{gray!40}
SAGE ($w_{\text{adv}}=1.0$) &  19.84\%  & 0.749 & 1.430 & 2.480 & 0.788 & 1.087 & 2.100 & 8.100 & 0.185 \\
\bottomrule
\end{tabular}%
}
\end{table}

\begin{table}[!htbp]
\centering
\caption{Evaluation of adversarial generation methods against the \textbf{IDM} policy.}
\label{tab:results_idm}
\vspace{-10pt}
\setlength{\tabcolsep}{1.5pt}
\renewcommand{\arraystretch}{1.1}
\resizebox{\textwidth}{!}{%
\begin{tabular}{@{}l c c cc cc ccc@{}}
\toprule
& \textbf{Attack Succ.} & \textbf{Adv.} & \multicolumn{2}{c}{\textbf{Real. Pen.} $\downarrow$} & \multicolumn{2}{c}{\textbf{Map Comp.} $\downarrow$} & \multicolumn{3}{c}{\textbf{Dist. Diff. (WD)}} \\
\cmidrule(lr){4-5} \cmidrule(lr){6-7} \cmidrule(lr){8-10}
\textbf{Methods} & \textbf{Rate} $\uparrow$ & \textbf{Reward} $\uparrow$ & Behav. & Kine. & Crash Obj. & Cross Line & Accel. & Vel. & Yaw \\
\midrule
Rule & 60.54\% & 2.763 & 4.833 & 36.947 & 1.973 & 6.856 & 4.466 & 8.447 & 0.309 \\
CAT \citep{zhang2023cat} & 43.48\% & 1.746 & 10.275 & 3.207 & 3.345 & 8.696 & 1.595 & 7.550 & 0.231\\
KING \citep{hanselmann2022king} & 16.48\% & 1.146 & 2.451 & 2.566 & 3.097 & 5.824 & 0.955 & 257.619 & 0.097\\
AdvTrajOpt \citep{zhang2022adversarial} & 20.44\% & 0.963 & 4.528 & 2.797 & 2.561 & 10.354 & 1.750 & 6.173 & 0.270 \\
SEAL \citep{stoler2025seal} & 33.70\% &  0.942 & 5.785 & 2.719 & 3.562 & 11.233 & 1.695 & 7.368 & 0.276 \\
{GOOSE} \citep{ransiek2024goose} & {16.39\%}  & {0.439} & {5.092} & {11.25} & {3.722} & {12.64} & {3.716} & {8.223} & {0.157}  \\
\midrule
\rowcolor{gray!10}
SAGE ($w_{\text{adv}}=0.0$)& 13.90\% & 0.864 & 0.332 & 2.000 & 0.708 & 0.954 & 1.461 & 9.318 & 0.055 \\
\rowcolor{gray!25}
SAGE ($w_{\text{adv}}=0.5$) & 20.98\% & 1.183 & 0.498 & 2.068 & 0.817 & 0.954 & 1.522 & 8.458 & 0.080 \\
\rowcolor{gray!40}
SAGE ($w_{\text{adv}}=1.0$) & 33.51\% & 1.779 & 1.433 & 2.480 & 0.790 & 1.090 & 2.098 & 8.070 & 0.185 \\
\bottomrule
\end{tabular}%
}
\end{table}

\begin{figure}[!htbp]
  \begin{center}
    \includegraphics[width=0.85\textwidth]{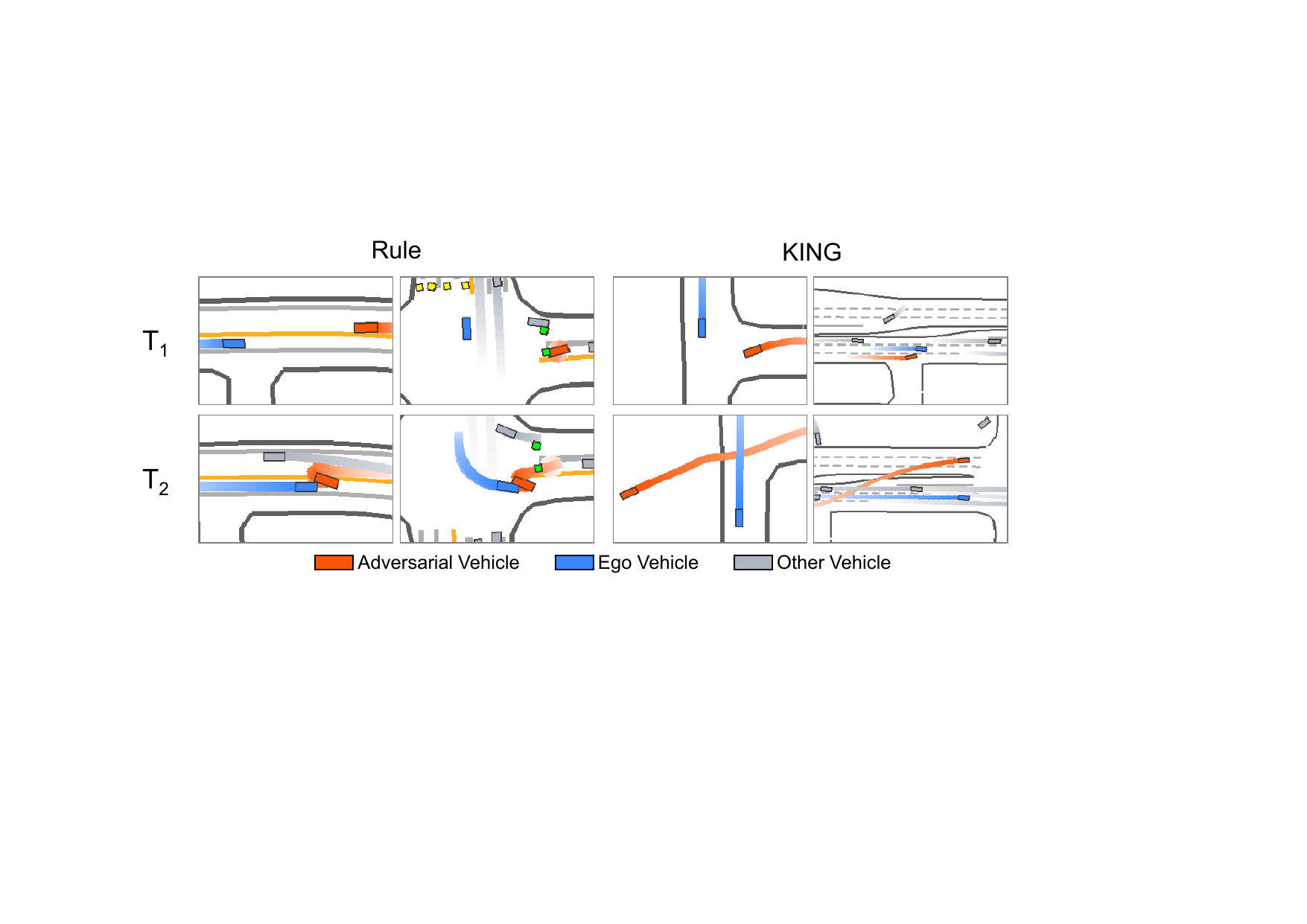}
  \end{center}
  \vspace{-10pt}
  \caption{{Unexpected case of KING and Rule baselines.}}
  \label{Fig_king_and_rule}
\end{figure}

\subsection{Full Results of Closed-loop RL Training}\label{app:rl_results}

Tab.~\ref{tab:eval_normal} provides the full evaluation results for the trained RL policies in normal driving environments, sourced from the WOMD log-replay data. We also display the training trajectories of different methods on all metrics in Fig. \ref{Fig_rl}.
As can be seen, the agent trained with our adversarial generator not only demonstrates superior robustness in adversarial settings (as shown in Tab. \ref{tab:eval_adv} in the main paper and Fig. \ref{Fig_rl}) but also achieves the highest reward, route completion, and average speed in these standard scenarios. This result is critical as it shows our curriculum-based adversarial training enhances the agent's general driving capabilities, rather than causing it to overfit to rare edge cases at the expense of normal performance. This alleviates catastrophic forgetting, which is a key advantage of our closed-loop training framework.

\begin{figure}[!htbp]
  \begin{center}
    \includegraphics[width=1\textwidth]{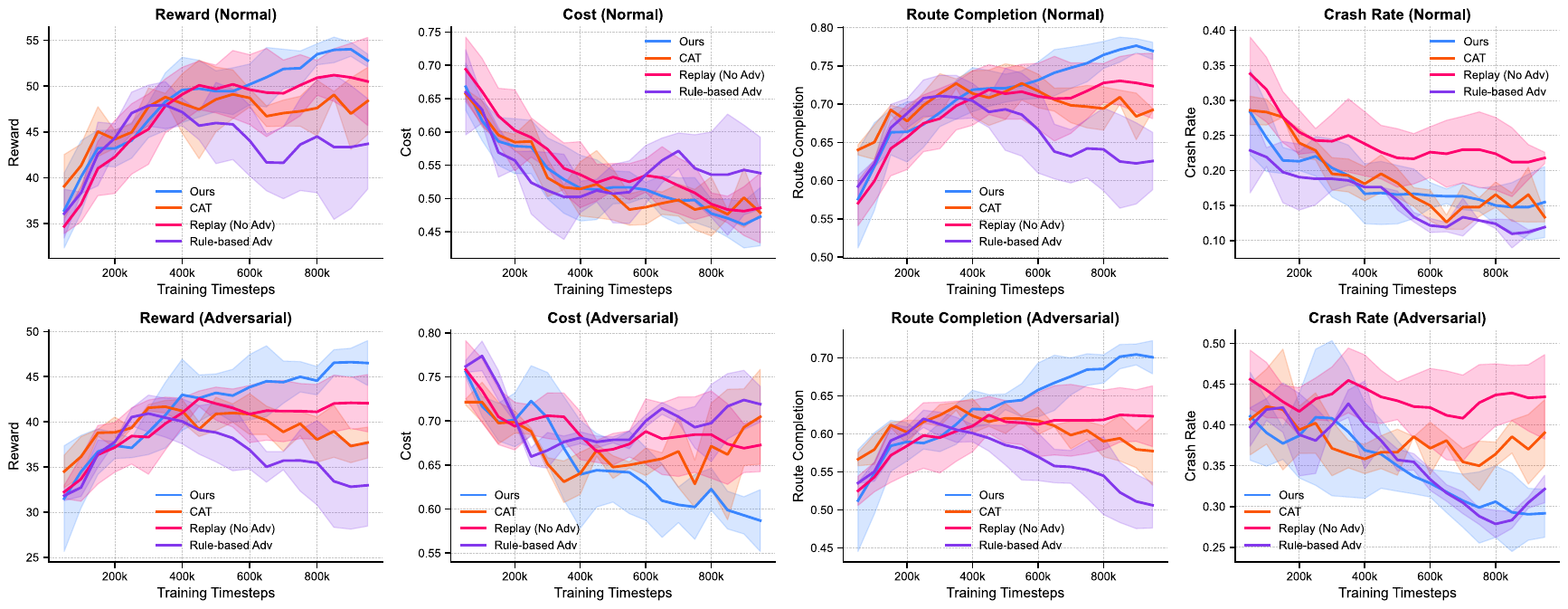}
  \end{center}
  \vspace{-10pt}
  \caption{Training performances of the agent under different scenario generation methods.}
  \label{Fig_rl}
  \vspace{-10pt}
\end{figure}

\vspace{10pt}
\begin{table}[!htbp]
\centering
\caption{Evaluation of Trained RL Policies in the Log-replay (Normal, WOMD) Environments.}
\label{tab:eval_normal}
\vspace{-10pt}
\setlength{\tabcolsep}{2pt}
\renewcommand{\arraystretch}{1.1}
\resizebox{\textwidth}{!}{%
\begin{tabular}{l cccccc}
\toprule
\textbf{Methods} & Reward $\uparrow$ & Cost $\downarrow$ & Compl. $\uparrow$ & Coll. $\downarrow$ & Ave. Speed $\uparrow$ & Ave. Jerk $\downarrow$ \\
\midrule
\rowcolor{gray!25}
SAGE & $\bm{51.99 \pm 1.22}$ & $\bm{0.48 \pm 0.05}$ & $\bm{0.77 \pm 0.02}$ & $0.16 \pm 0.05$ & \bm{$9.27 \pm 0.03$}  & \bm{$24.97 \pm 0.53$} \\
CAT & $46.81 \pm 4.33$ & $0.50 \pm 0.05$ & $0.67 \pm 0.02$ & $0.18 \pm 0.05$ & $7.21 \pm 0.05$ & $28.15 \pm 1.06$ \\
Replay (No Adv) & $50.16 \pm 5.32$ & $0.50 \pm 0.07$ & $0.72 \pm 0.04$ & $0.23 \pm 0.02$ & $9.03 \pm 0.03$ & $27.53 \pm 0.98$ \\
Rule-based Adv & $44.61 \pm 3.88$ & $0.52 \pm 0.05$ & $0.63 \pm 0.04$ & $\bm{0.13 \pm 0.00}$ & $6.00 \pm 0.10$ &  $28.22 \pm 1.44$\\
\bottomrule
\end{tabular}}
\end{table}


\subsection{Full Results of Ablation Studies}\label{ss:ablation_appendix}

To evaluate the effectiveness of our design, we evaluate the performance of the following variations:
\begin{enumerate}
    \item w/o map penalty: The hierarchical map feasibility preconditioning procedure in section \ref{ssec:preference_alignment} is removed;
    \item w/o realism penalty: The realism penalty function is removed;
    \item w/ weighted map penalty: The hierarchical map feasibility preconditioning procedure is replaced by a direct weighted map penalty, i.e., Eq. \ref{eq:naive_reward};
    \item w/ DPO: We replace the proposed HGPO with the standard DPO for fine-tuning;
    \item {The impact of reward margin $\delta_m$ in HGPO: we report the training performances under different $\delta_m$ values;}
    \item We also report the performance of separately trained models with different expert weights.
\end{enumerate}

Tab.~\ref{tab:ablation} presents quantitative results for ablation studies. The ``w/o Map Pen." variant shows a dramatic increase in map compliance penalties (Crash Obj. and Cross Line), confirming the necessity of explicitly handling map constraints. The ``w/ Weighted Map Pen." variant improves over having no penalty but is still inferior to our hierarchical conditioning approach, yielding lower adversarial rewards and higher map violations. This reinforces the core idea of HGPO that separating hard feasibility constraints from soft preferences simplifies the learning problem and leads to better outcomes. Comparing our steerable model to separately trained models with fixed scalarized weights (``Scalar. Weight" rows) shows that our interpolation approach can effectively span the performance range of multiple individually trained models, highlighting its great flexibility.

{Furthermore, Fig. \ref{Fig_ablation_curves_delta} investigates the impact of the reward margin $\delta_m$.
Fig. \ref{Fig_ablation_curves_delta}(a) shows that as $\delta_m$ increases, the number of generated preference pairs per scenario decreases, as a larger reward difference is required to form a valid pair. 
More importantly, Fig. \ref{Fig_ablation_curves_delta}(b) illustrates the impact on learning performance. The setting with $\delta_m=0$ results in the poorest performance, confirming that learning from pairs with insignificant reward differences introduces noise and impedes the training process. Conversely, setting a moderate margin ($\delta_m$ in [0.2, 1.0]) significantly improves performance by focusing the model on more distinct and informative preferences. The model shows robust performance within this range, indicating that it is not overly sensitive to this hyperparameter. We selected $\delta_m=0.2$ for our main experiments as it empirically provided the best balance between sample efficiency and final performance. An overly large margin, such as $\delta_m=2.0$, can slightly degrade performance by filtering out too many useful training samples.
}

\begin{figure}
  \centering
  \includegraphics[width=0.75\textwidth]{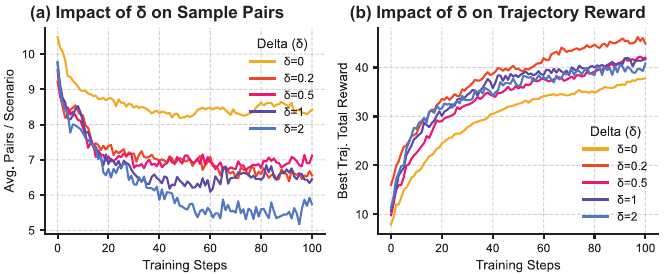}
  \caption{{Ablation studies on reward margin $\delta_m$.}}
  \label{Fig_ablation_curves_delta}
\end{figure}

\begin{table}[!htbp]
\centering
\caption{Ablation study ($w_{\text{adv}}=1.0,w_{\text{real}}=0.0$).}
\label{tab:ablation}
\vspace{-10pt}
\setlength{\tabcolsep}{2pt}
\renewcommand{\arraystretch}{1.1}
\resizebox{0.9\textwidth}{!}{%
\begin{tabular}{@{}l c c cc cc@{}}
\toprule
& \textbf{Attack Succ.} & \textbf{Adv.} & \multicolumn{2}{c}{\textbf{Real. Pen.} $\downarrow$} & \multicolumn{2}{c}{\textbf{Map Comp.} $\downarrow$}  \\
\cmidrule(lr){4-5} \cmidrule(lr){6-7} 
\textbf{Methods} & \textbf{Rate} $\uparrow$ & \textbf{Reward} $\uparrow$ & Behav. & Kine. & Crash Obj. & Cross Line  \\
\midrule
w/o Map Pen. & 66.40\% & 3.841 & 1.580 & 2.665 & 3.848 & 11.65 \\
w/o Real. Pen. & 77.78\% & 4.308 & 1.966 & 3.077 & 0.705 & 1.762  \\
w/ Weighted Map. Pen. & 78.05\% & 4.115 & 1.438 & 2.411 & 1.165 & 1.491  \\
\midrule
Scalar. Weight ($w_{\text{adv}}=3,w_{\text{real}}=7$) & 50.68\% & 2.592 & 0.444 & 2.071 & 0.785 & 0.952 \\
Scalar. Weight ($w_{\text{adv}}=5,w_{\text{real}}=5$) & 57.99\% & 3.047 & 0.553 & 2.167 & 0.650 & 1.084 \\
Scalar. Weight ($w_{\text{adv}}=7,w_{\text{real}}=3$) & 66.67\% & 3.508 & 0.786 & 2.396 & 0.678 & 1.491 \\
\midrule
SAGE ($w_{\text{adv}}=1.0$) & 76.15\% & {4.121} & {1.429} & {2.479} & {0.731} & {1.084}  \\
\bottomrule
\end{tabular}%
}
\end{table}

\subsection{Additional Results of Weight Extrapolation}\label{ssec:extrapolation comparison}

\begin{figure}[!htbp]
  \begin{center}
    \includegraphics[width=1\textwidth]{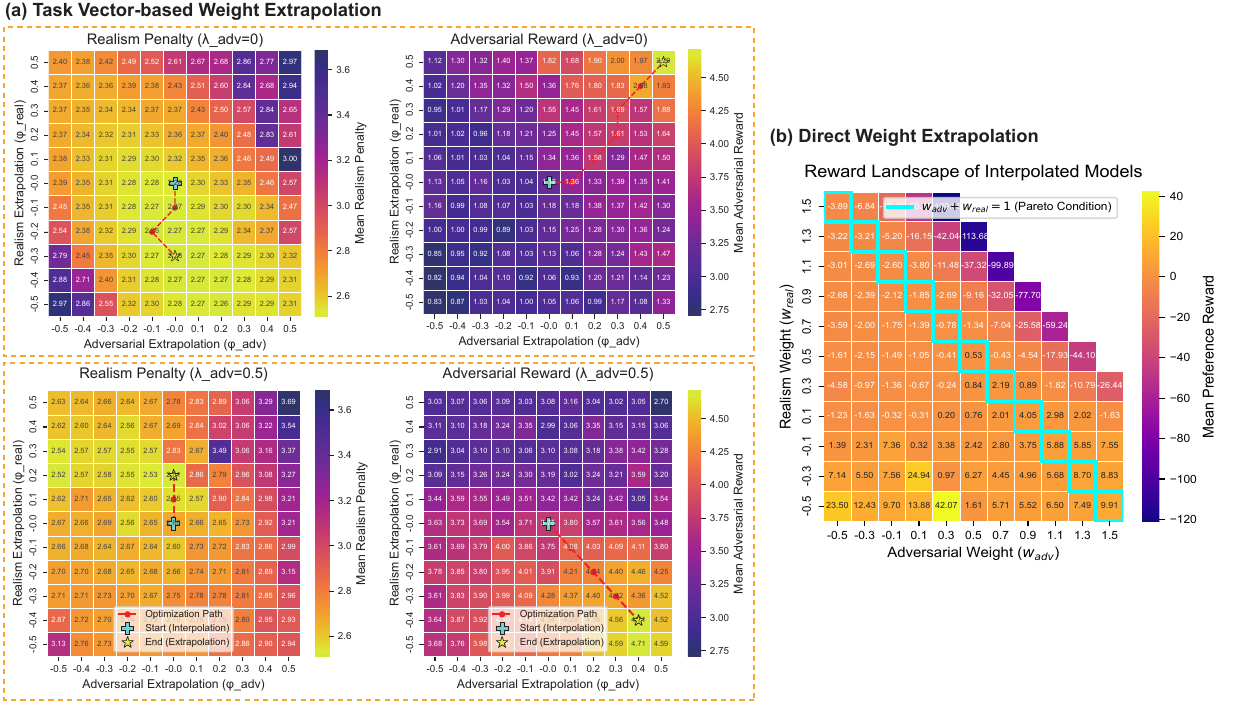}
  \end{center}
  \vspace{-10pt}
  \caption{Comparison of vector-based extrapolation and direct weight extrapolation.}
  \label{Fig_weight_extra_appendix}
\end{figure}

We compare the task vector-based extrapolation scheme (sections~\ref{app:extrapolation} and \ref{ssec:test_time_control}) with a more naive approach of directly extrapolating the model weights, i.e., $\theta_{\text{ext}} = (1-\lambda)\theta_{\text{real}} + \lambda\theta_{\text{adv}}$ for $\lambda \notin [0,1]$. As shown in Fig.~\ref{Fig_weight_extra_appendix}(b), this direct weight extrapolation quickly leads to unstable models and degraded performance, as evidenced by the sharp drop in preference rewards outside the $[0,1]$ interpolation range. In contrast, Fig.~\ref{Fig_weight_extra_appendix}(a) shows that the task vector-based extrapolation provides a much more stable and effective means of pushing the model's behavior beyond its trained distribution. By adding multiples of the learned preference vectors, we can controllably increase both adversarial reward and realism penalty in their respective directions, demonstrating that this method correctly identifies and amplifies the underlying skill vectors within the model's parameter space.

\subsection{Additional Case Studies}

{\subsubsection{Examples of Adversarial Generation against Different Ego Policies}\label{ssec:more_examples_policies}}

{Fig. \ref{Fig_case_different_policies} demonstrates the ability of SAGE to generate effective adversarial behaviors against reactive policies. In addition to the fixed Replay policy, SAGE can also be adopted for discovering long-tailed events of reactive and well-developed driving policies. This indicates its practical value in developing learning- or rule-based ADS.}

\begin{figure}[!htbp]
  \begin{center}
    \includegraphics[width=0.9\textwidth]{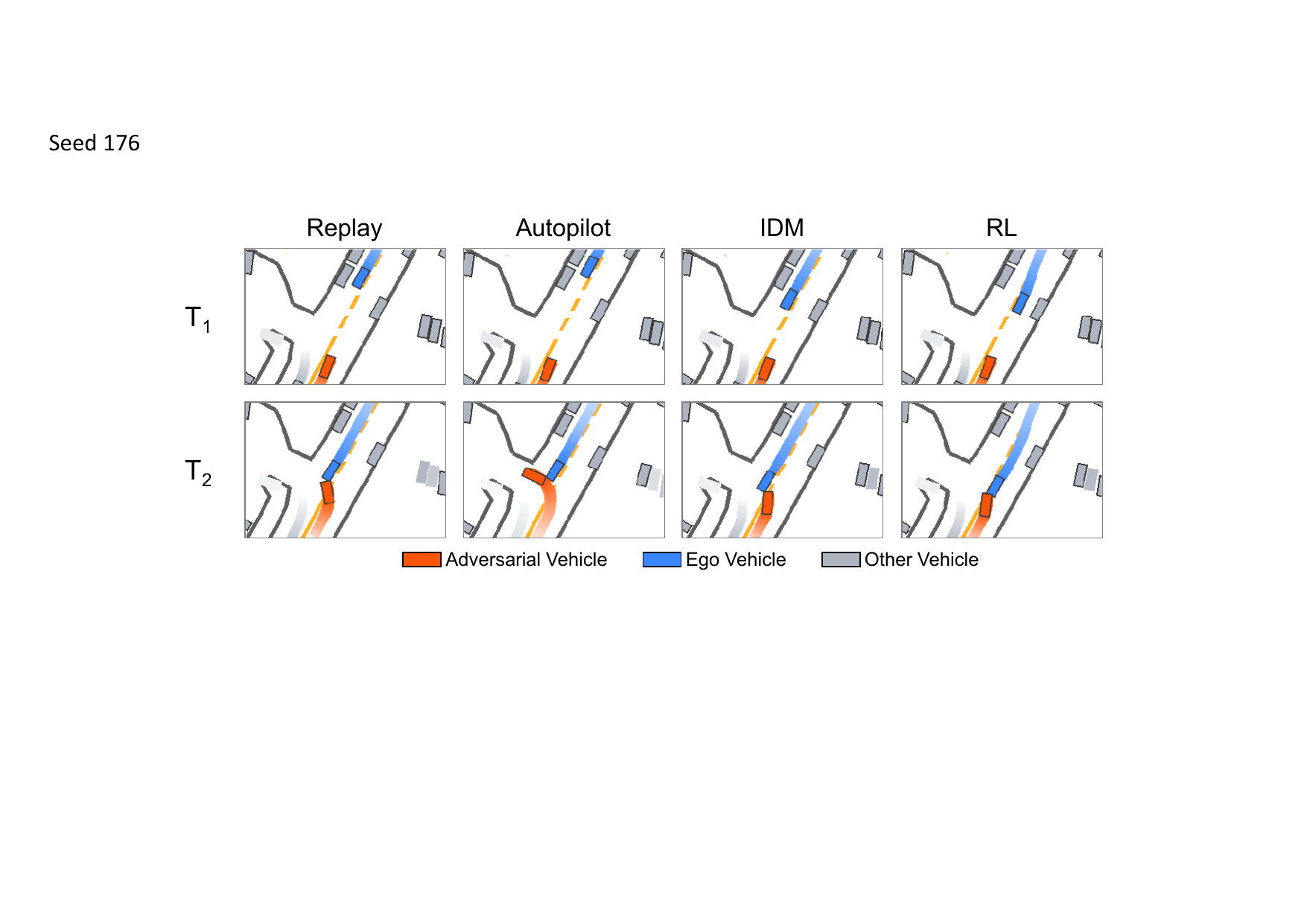}
  \end{center}
  \vspace{-10pt}
  \caption{{Examples of SAGE-generated adversarial behaviors against different ego policies.}}
  \label{Fig_case_different_policies}
\end{figure}

\subsubsection{More Examples of Generated Scenarios}\label{ssec:more examples}

Figs.~\ref{Fig_case_baseline_Supplementary_1} and~\ref{Fig_case_baseline_Supplementary_2} provide additional qualitative comparisons between SAGE and SOTA baselines across a variety of scenarios, including intersections and lane changes. These examples further illustrate the superior quality of scenarios generated by SAGE. While baselines frequently produce unrealistic behaviors, such as unnatural swerves (CAT), physically impossible braking (Rule), or trajectories that disregard lane boundaries, SAGE consistently generates adversarial behaviors that are challenging, contextually appropriate, and physically plausible. For instance, in Scenario 5, SAGE generates a subtle but dangerous squeeze maneuver, whereas baselines resort to more chaotic actions.

\subsubsection{More Cases on Preference Controllable Generation}\label{app:more-controllable-case}
Fig. \ref{Fig_case_weight_Supplementary} shows more examples of the test-time controllability of SAGE. By simply increasing the adversarial weight from 0 to 1, the generated scenarios transition from naturalistic to highly aggressive, enabling efficient target-conditioned testing of AD systems.

\begin{figure}[!htbp]
  \begin{center}
    \includegraphics[width=0.9\textwidth]{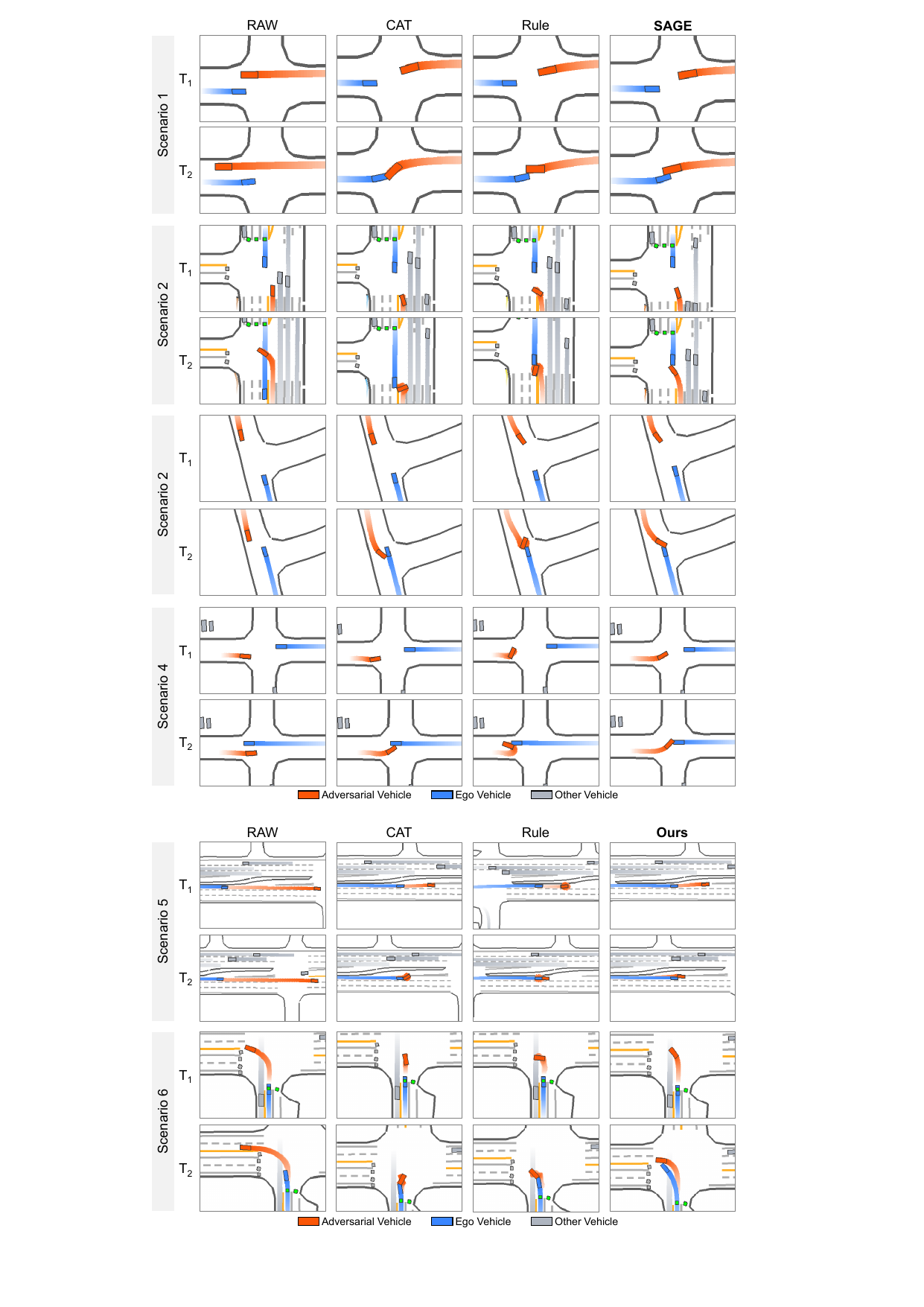}
  \end{center}
  \vspace{-10pt}
  \caption{Examples of generated scenarios by different methods.}
  \label{Fig_case_baseline_Supplementary_1}
\end{figure}

\begin{figure}[!htbp]
  \begin{center}
    \includegraphics[width=0.9\textwidth]{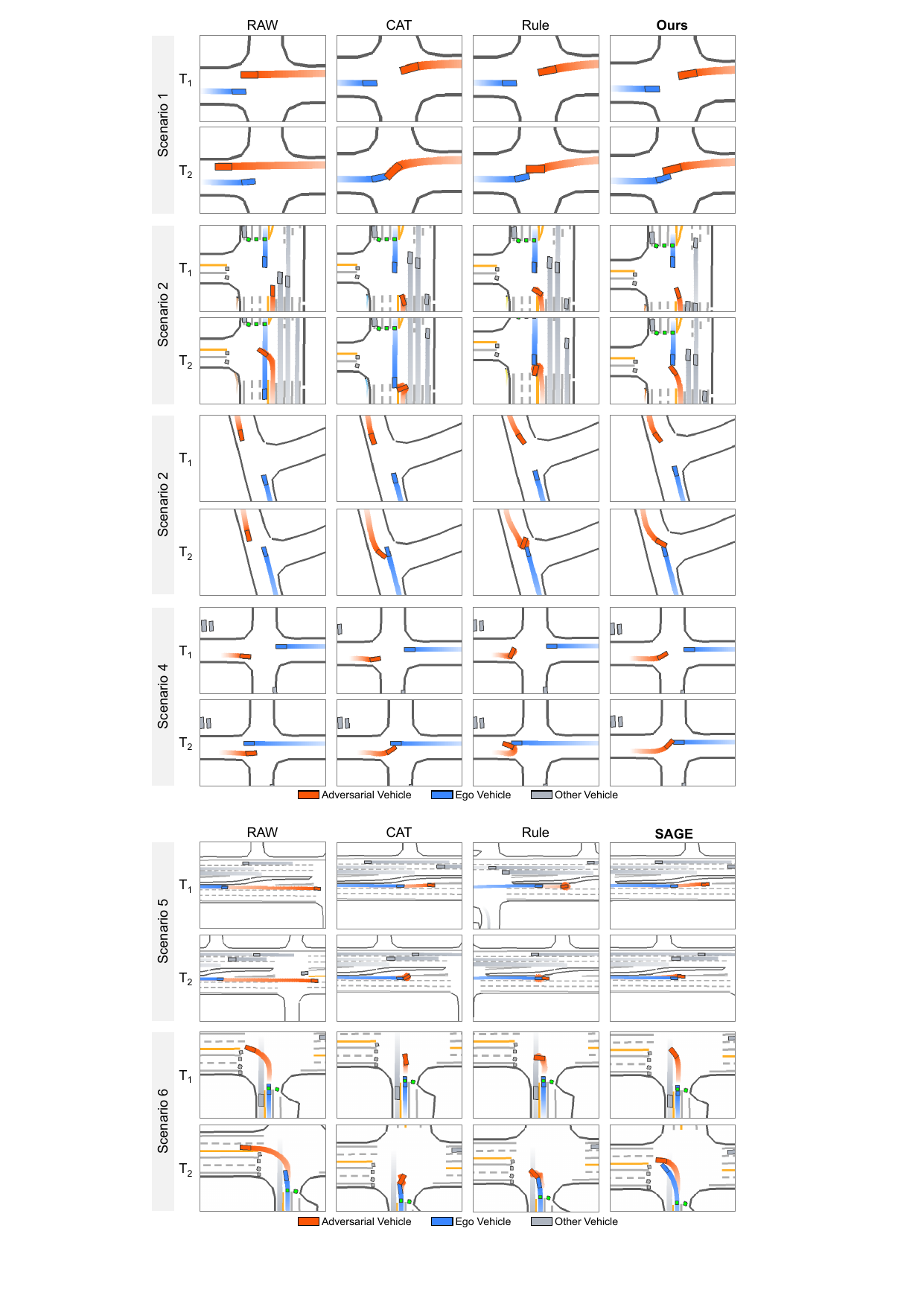}
  \end{center}
  \vspace{-10pt}
  \caption{Examples of generated scenarios by different methods.}
  \label{Fig_case_baseline_Supplementary_2}
\end{figure}

\begin{figure}[!htbp]
  \begin{center}
    \includegraphics[width=0.9\textwidth]{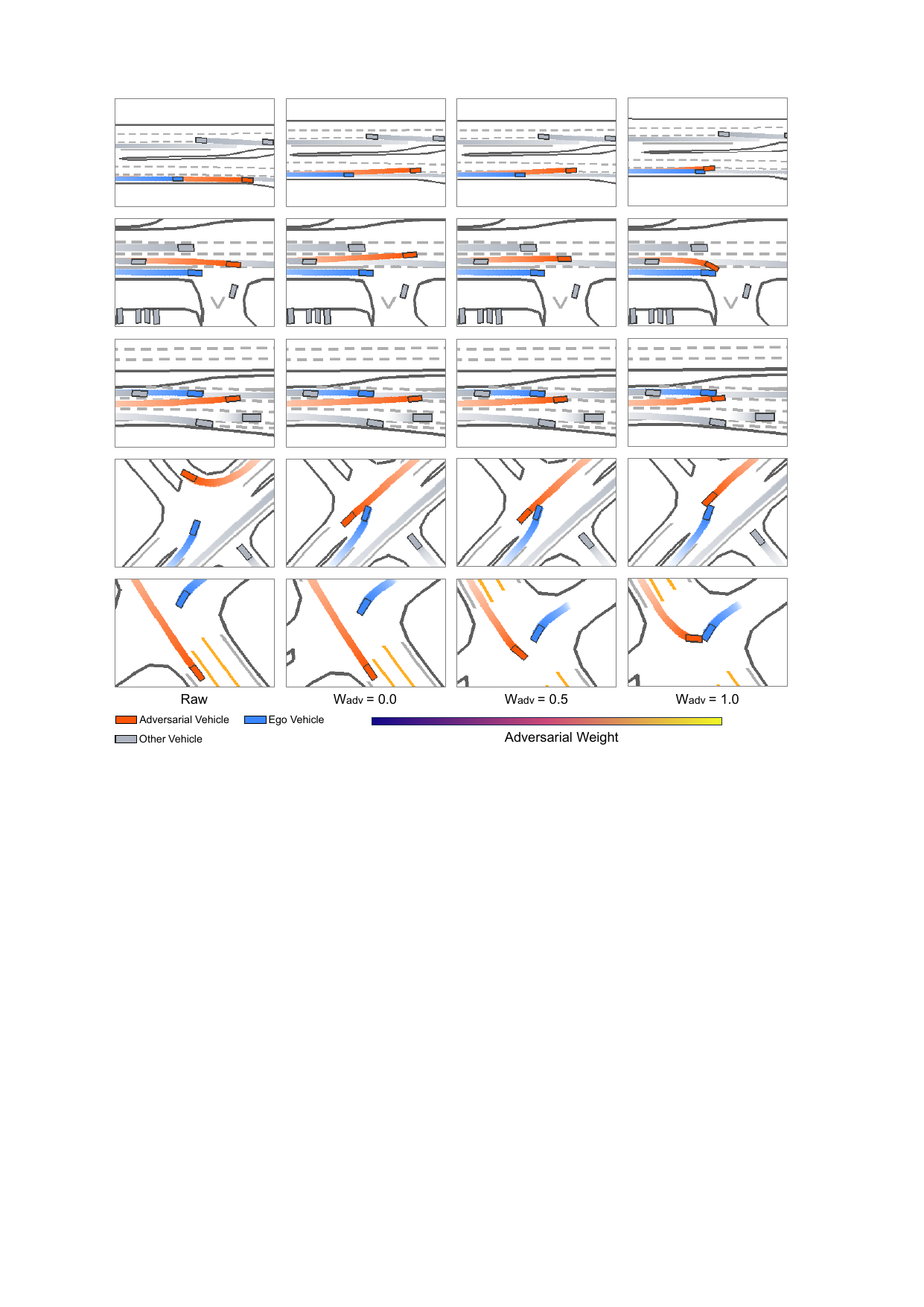}
  \end{center}
  \vspace{-10pt}
  \caption{Examples of generated scenarios (SAGE) under different adversarial weights.}
  \label{Fig_case_weight_Supplementary}
\end{figure}




\end{document}